\documentclass{article}

\usepackage{microtype}
\usepackage{graphicx}
\usepackage{subcaption}
\usepackage{booktabs} 

\usepackage{hyperref}

\usepackage[accepted]{icml2026}

\usepackage{amsmath}
\usepackage{amssymb}
\usepackage{mathtools}
\usepackage{amsthm}
\usepackage[export]{adjustbox}

\usepackage[capitalize,noabbrev]{cleveref}

\newcommand{\EE}{\mathbb{E}\,}

\newcommand{\RR}{\mathbb{R}}

\theoremstyle{plain}
\newtheorem{theorem}{Theorem}[section]
\newtheorem{conjecture}[theorem]{Conjecture}
\newtheorem{proposition}[theorem]{Proposition}

\theoremstyle{definition}
\newtheorem{definition}[theorem]{Definition}

\theoremstyle{remark}

\usepackage[disable,textsize=tiny]{todonotes}

\icmltitlerunning{Gradient Flow Through Diagram Expansions}

\graphicspath{{./figs/}{../figures/}}

\begin{document}

\twocolumn[
  \icmltitle{Gradient Flow Through Diagram Expansions: \\ Learning Regimes and Explicit Solutions}



  \icmlsetsymbol{equal}{*}

  \begin{icmlauthorlist}
    \icmlauthor{Dmitry Yarotsky}{aai,miras}
    \icmlauthor{Eugene Golikov}{aai}
    \icmlauthor{Yaroslav Gusev}{aai}
  \end{icmlauthorlist}

  \icmlaffiliation{aai}{Applied AI Institute, Moscow, Russia}
  \icmlaffiliation{miras}{Steklov Mathematical Institute of Russian Academy of Sciences, Moscow, Russia}

  \icmlcorrespondingauthor{Eugene Golikov}{e.golikov@applied-ai.ru}

  \icmlkeywords{gradient flow, tensor decomposition, CP decomposition, feature learning, analytic solutions, generating functions, diagram expansion, wide networks}

  \vskip 0.3in
]



\printAffiliationsAndNotice{}  

\begin{abstract}
We develop a general mathematical framework to analyze scaling regimes and derive explicit analytic solutions for gradient flow (GF) in large learning problems. Our key innovation is a formal power series expansion of the loss evolution, with coefficients encoded by diagrams akin to Feynman diagrams. We show that this expansion has a well-defined large-size limit that can be used to reveal different learning phases and, in some cases, to obtain explicit solutions of the nonlinear GF. We focus on learning Canonical Polyadic (CP) decompositions of high-order tensors, and show that this model has several distinct extreme lazy and rich GF regimes such as free evolution, NTK and under- and over-parameterized mean-field. We show that these regimes depend  on the parameter scaling, tensor order, and symmetry of the model in a specific and subtle way. Moreover, we propose a general approach to summing the formal loss expansion by reducing it to a PDE; in a wide range of scenarios, it turns out to be first-order and solvable by the method of characteristics. We observe a very good agreement of our theoretical predictions with experimental results.   
\end{abstract}

\section{Introduction}
\paragraph{Motivation.} 
A major theoretical challenge in modern machine learning is a quantitative description of gradient-descent-based learning of large-scale models.
The difficulty arises from the high complexity of the learning dynamics, which leads to qualitatively different limit learning regimes; see \ref{sec:literature_overview_infinite_width}, \ref{sec:literature_overview_linear_nets} for a survey.
Among the limit learning regimes, explicit solutions are known for only a few.
A notable example is the NTK limit \citep{jacot2018neural}, for which learning becomes linear in its parameters, therefore no \emph{feature learning} occurs.
A complete classification of large-scale learning regimes together with explicit limit solutions, including nonlinear ones, is still an open problem for neural nets.

\paragraph{Our contribution.} In this work we propose a new general mathematical framework for the analysis of gradient flow in large-scale problems. The central element of this framework is a power series expansion of the loss evolution with respect to time. The coefficients of this expansion can be described in terms of graphs akin to Feynman diagrams used in physics. We develop a general mathematical theory of such diagrams and show that it can be used to describe different learning regimes, and, in some cases, obtain explicit formulas for the loss evolution. In this paper we mostly focus on a particular class of models associated with the CP-decomposition of tensors, and consider a specific ``identity'' target (see Section \ref{sec:setting}), but we expect the proposed methods to have a much wider applicability. Our specific contributions are as follows.
\begin{enumerate}
\item In Section \ref{sec:lossexp} we introduce the power series expansion of the loss evolution in time $t$ and show that for a suitable class of targets its coefficients are polynomials that can be described by certain (hyper-)graphs (\emph{diagrams}). We develop a ``diagram calculus'' showing how higher-order diagrams and expectations can be found by \emph{merging} and \emph{contracting} diagrams.
\item In Section \ref{sec:largesize} we propose to analyze large-size scaling limits of the model through   \emph{Pareto-optimal} terms of the polynomials representing the loss coefficients. We rigorously find complete \emph{``Pareto polygons''} of such terms for two scenarios and a particular target. 
\item In \cref{sec:pareto_polyhedra} we associate different \emph{learning regimes} with different faces of the Pareto polygon. We identify several regimes with specific scaling conditions and a natural interpretation (e.g. NTK, mean field).
\item In Section \ref{sec:gensol} we propose a general method for summation of formal power series appearing in the large-size limit of the loss expansion in different learning regimes. This method is illustrated in four important cases described in the following items.
\item In Section \ref{sec:free-evo}, we present explicit solutions for the loss evolution with zero target, which we call \emph{free}, in under- and over-parameterized settings.
This regime is also relevant at the beginning of training for nonzero targets that are small relative to the model initialization.
\item In \cref{sec:ntk}, we present an explicit solution for the model associated with one of the vertices of the Pareto polygon and demonstrate that in the \emph{asymmetric scenario} it corresponds to a \emph{Neural Tangent Kernel} regime \citep{jacot2018neural}.
We also explain why there is no NTK limit for \emph{symmetric models}.
The existence of this NTK limit does not follow from NTK convergence results known in the literature.
\item In \cref{sec:nu2}, we present a \emph{complete} solution in a large-size limit for the symmetric matrix model defined in \cref{sec:setting}.
Our solution is valid for any double limit of our two size parameters $p$ and $H$.
\item In \cref{sec:nu4}, we consider a more complex scenario of a 4-order identity tensor decomposition.
While the analytic solution we found is valid only for gradient \emph{ascent}, it allows us to identify two qualitatively different regimes of \emph{unlearning}, depending on weight initialization magnitude.
\end{enumerate} 
We stress that, while some conceptually related expansions have been proposed earlier for the analysis of learning by GF (notably in the context of NTK hierarchy \citep{huang2020dynamics} and  correlation functions \citep{dyer2019asymptotics}), we are not aware of any previous works where such expansions have been used to systematically obtain specific results for gradient-based learning of particular models. In contrast, our focus is to demonstrate the utility of the properly organized time series/diagram expansion as a conceptual and computational tool for obtaining concrete information on the structure of learning phases and the loss evolution.

\paragraph{Conflict of Interest Disclosure.}
We are not aware of any conflicts of interest associated with this work.

\section{The setting: GF-learned CP-decomposition}\label{sec:setting}
We assume that we fit a large target $F$ by a large model $f$. The size of the target is characterized by a growing parameter $p$, while the size of the model is characterized by a growing parameter $H$. Throughout the paper, we focus on the particular problem in which the target $F$ is an order-$\nu$ tensor ($\nu=2,3,\ldots$), and the model $f$ is a respective rank-$H$ CP (Canonical Polyadic) decomposition of the tensor: 
\begin{align}
\label{eq:f_F_def}
&f, F \in (\mathbb R^{p})^{\otimes \nu}; f= (f_{i_1,\ldots,i_\nu}); F= (F_{i_1,\ldots,i_\nu}),\; i_m\in\overline{1,p},\nonumber\\
&f_{i_1,\ldots,i_\nu} = \sum_{k=1}^{H} \prod_{m=1}^\nu u_{k,i_m}^{(m)}.
\end{align}
The values $\mathbf u=(u_{k,i}^{(m)})\in\mathbb R^{H\times p\times\nu}$ are the learnable parameters. We consider two scenarios: \emph{symmetric} and \emph{asymmetric}, for brevity denoted SYM and ASYM. In SYM the tensor $F$ is symmetric, as well as the CP decomposition:   
\begin{equation}\label{eq:sym}
   u_{k,i}^{(1)}=\ldots=u_{k,i}^{(\nu)}=:u_{k,i}. 
\end{equation}
In ASYM no symmetry conditions are imposed. We consider the standard quadratic loss: 
\begin{equation}\label{eq:loss}
    L(\mathbf u)=\frac{1}{2}\sum_{i_1,\ldots,i_\nu=1}^p (f_{i_1,\ldots,i_\nu}-F_{i_1,\ldots,i_\nu})^2.
\end{equation}
The model parameters are learned by standard gradient flow:
\begin{equation}\label{eq:gf}
    \frac{d\mathbf u}{dt}=-\frac{1}{T}\partial_{\mathbf u} L(\mathbf u),
\end{equation}
where $1/T$ is the learning rate. As $p,H\to \infty,$ it is occasionally convenient to suitably rescale $T$ to ensure that $L(t)$ has a finite and nonvanishing limiting time scale.      

To obtain detailed information on the loss evolution, our framework requires the target $F$ to scale in a specific way with growing $p$. In this paper we mostly focus on the simple scenario in which $F$ is an identity tensor:
\begin{equation}\label{eq:idtarget}
F_{i_1,\ldots,i_\nu}=\delta_{i_1=\ldots=i_\nu}.
\end{equation}
At $\nu=2$ this corresponds to learning the identity matrix, while at $\nu=3$ this target is closely related to learning modular addition (see Appendix \ref{app:modular_addition}). 
More general naturally scalable targets that can be covered by our framework include linear combinations of tensors defined by general delta functions, as well as various random teacher models. The architectures of the learned models can also be significantly generalized, including, for example, neural networks with polynomial activations. However, these extensions are beyond the scope of the present paper. 

We assume the weights $u_{k,i_m}^{(m)}$ to  have i.i.d. random normal initializations with variance $\sigma^2$. In the sequel we will be interested in classifying and analytically computing the expected loss trajectory $\mathbb E[L(t)]$ of the gradient flow in the large-size limit $p,H\to\infty$. We will see that the results depend in a complex way on the tensor order $\nu$, the presence of symmetry, and the mutual scaling among $p, H$ and $\sigma^2$.

\section{Loss evolution expansion and diagrams}\label{sec:lossexp}
\paragraph{The loss expansion.} We start with the power series representing the time evolution of the loss and its expectation:
\begin{equation}\label{eq:lossexp}
    L(t)\sim \sum_{s=0}^\infty \frac{d^s L}{dt^s}(0)\frac{t^s}{s!},\quad \mathbb E[L(t)]\sim \sum_{s=0}^\infty \mathbb E\Big[\frac{d^s L}{dt^s}(0)\Big]\frac{t^s}{s!}.
\end{equation}
Here and in the sequel, we denote by $\sim$ formal power expansions. For any finite $p,H$ and initialization, the loss $L$ is an analytic function of $t$ at least in a sufficiently small neighborhood of $t=0$, but this will not generally be the case after large-size limits and averaging. Nevertheless, all expansions that we consider will be well-defined as formal power series in $t$, i.e. have well-defined finite coefficients. 

Any quantity $G$ depending on the model parameters evolves under GF as
\begin{equation}\label{eq:dgdt}
    \frac{dG}{dt}=-\frac{1}{T}\sum_{u}\frac{\partial G}{\partial u}\frac{\partial L}{\partial u},
\end{equation}
where $\sum_u$ is the sum over all learnable parameters. In particular, any derivative $d^s L/dt^s$ in \eqref {eq:lossexp} is a polynomial in partial derivatives $\partial L/\partial u$ and hence in the weights $u_{k,i}^{(m)}$, since our loss \eqref{eq:loss} is polynomial in the weights. The averaging over the Gaussian initialization can then be performed using the Wick formula (see Section \ref{app:diagcalc}).  

An important observation that we will use in the sequel is that for a broad class of targets the expressions $T^{s}\mathbb E[d^s L/dt^s(0)]$ are polynomials in $H,p$ and $\sigma^2$. We will state a relevant theorem for one particular class including our main ``identity'' target \eqref{eq:idtarget}, but it is clear that the theorem can be further generalized. 

\begin{theorem}[\ref{app:diagcalc}]\label{th:polycoeff}
Suppose that the target tensor $F_{i_1,\ldots,i_\nu}$ can be written as a polynomial in $H,p$, indices $i_1,\ldots,i_\nu$ and Kronecker deltas $\delta_{i_a=i_b}$ for $a,b\in\overline{1,\nu}$. Then, for any $s$,  $T^s \mathbb E[d^s L/dt^s(0)]$ is a polynomial in $H,p,\sigma^2$.
\end{theorem}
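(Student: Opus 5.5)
We will prove the claim by induction on $s$, tracking the structure of $T^s\, d^sL/dt^s$ as a polynomial in the weights and the target, and only at the end averaging via the Wick formula. By \eqref{eq:dgdt}, the operator $D:=-T\,\frac{d}{dt}$ acts on any polynomial $G$ in the weights as $DG=\sum_u \partial_u G\,\partial_u L$, where $\sum_u$ runs over the triples $(k,i,m)$ (or over $(k,i)$ in SYM). Thus $T^s\,d^sL/dt^s=(-1)^s D^s L$, and the factor $T$ disappears. The key is to describe a class $\mathcal C$ of expressions closed under $D$ and on which Gaussian expectation produces polynomials in $H,p,\sigma^2$.

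\textbf{The class $\mathcal C$.} Its elements are finite linear combinations, with coefficients polynomial in $H,p$, of \emph{index sums} of the form $\sum_{k_1,\dots,k_a=1}^{H}\sum_{i_1,\dots,i_b=1}^{p} P(i_1,\dots,i_b)\,\Delta\,\prod_r u^{(m_r)}_{k_{\alpha(r)},i_{\beta(r)}}$. Here $P$ is a polynomial in the summation indices, and $\Delta$ is a product of Kronecker deltas among the $i$'s (and possibly among the $k$'s). Every index appearing is summed. First we check $L\in\mathcal C$. Expanding $\tfrac12\sum(f-F)^2$ gives $\tfrac12\sum f^2-\sum fF+\tfrac12\sum F^2$. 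Substituting \eqref{eq:f_F_def} and the assumed form of $F$, each term is of the stated form. The last term has no weights and is a sum of a polynomial times deltas over $[p]^\nu$. Next, closure under $D$: $\partial_u$ applied to a monomial in an index sum replaces one factor $u^{(m)}_{k,i}$ by $\delta_{k=k'}\delta_{i=i'}\delta_{m=m'}$ for the differentiated variable $u^{(m')}_{k',i'}$. Multiplying two such index sums and summing over $(k',i',m')$ again gives an index sum, with new deltas and the product of the polynomial weights $P$. The sum over $m'$ is finite and only selects which factors match, so $D\mathcal C\subset\mathcal C$.

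\textbf{Averaging.} We apply the Wick formula termwise. $\mathbb E\prod_r u^{(m_r)}_{k_{\alpha(r)},i_{\beta(r)}}$ is $\sigma^{2n}$ times a sum over pairings of products of $\delta_{k=k'}\delta_{i=i'}\delta_{m=m'}$, and vanishes for an odd number of factors. After this, each term is $\sigma^{2n}$ times a sum over the free $k$'s and $i$'s of $P\cdot(\text{deltas})$. The $k$-sums with deltas collapse to $H^{c}$, where $c$ is the number of connected classes of $k$-indices. For the $i$-sums, the deltas partition the $i$-indices into classes, and each sum reduces to $\sum_{j_1,\dots,j_c=1}^p \tilde P(j_1,\dots,j_c)$ with $\tilde P$ polynomial. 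By Faulhaber's formula, $\sum_{j=1}^p j^q$ is a polynomial in $p$, so this is a polynomial in $p$. Collecting terms, $T^s\mathbb E[d^sL/dt^s(0)]$ is a polynomial in $H,p,\sigma^2$.

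\textbf{Main obstacle.} The step needing the most care is making the class $\mathcal C$ precise enough that closure under $D$ is mechanical. In particular, one has to handle the deltas appearing inside $F$ together with the deltas produced by differentiation. One also has to check that no ``unsummed'' index ever survives: every index introduced by $\partial_u$ is summed in $\sum_u$. I would formalize each index sum as a labeled hypergraph (the diagrams of Section~\ref{app:diagcalc}), with vertices for index classes and edges for weights and target entries. Then $D$ becomes ``merging'' two diagrams along a removed weight edge, and the Wick expectation becomes ``contracting'' edges pairwise. In this language polynomiality is immediate: the result is $\sigma^{2n}$ times $H^{\#k\text{-components}}$ times a Faulhaber polynomial in $p$.
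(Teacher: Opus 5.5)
Your proof is correct and follows essentially the same route as the paper. Both arguments rest on three steps: the class of diagram polynomials (weights plus target factors) is closed under the merging operation $\sum_u \partial_u G\,\partial_u G'$; Gaussian averaging is done by Wick pairings, which contract nodes; and the remaining index sums, with Kronecker deltas acting as further node contractions, are power sums that are polynomial in $p$ and $H$. The only cosmetic difference is that you expand $F$ into deltas and index monomials at the outset, whereas the paper keeps $F$ as target hyperedges until after averaging and then reduces to a monomial target by linearity.
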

The theorem covers our target \eqref{eq:idtarget} since it can be written as $\delta_{i_1=i_2}\delta_{i_2=i_3}\ldots\delta_{i_{\nu-1}=i_\nu}$. We will denote the polynomials expressing $T^s \mathbb E[d^s L/dt^s(0)]$ by $Y_s=Y_s(H,p,\sigma^2)$.

\begin{figure*}
    \centering
    \begin{tikzpicture}
    \node[anchor=south west, inner sep=0] (image) at (0,0) {\includegraphics[clip, trim=20mm 14mm 18mm 14mm, scale=0.2]{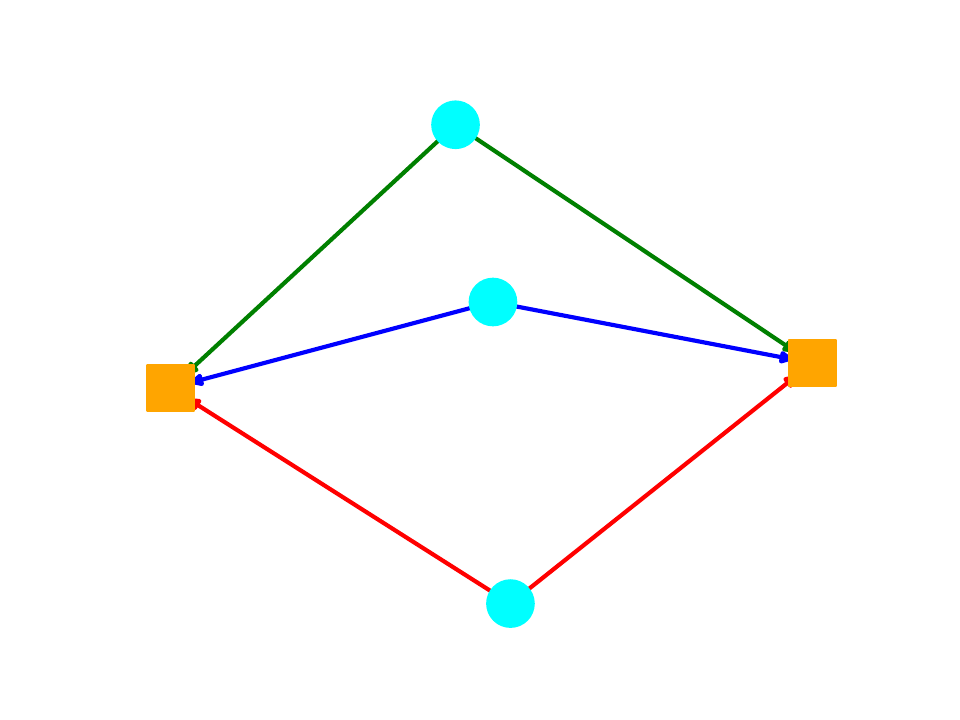}};
    \begin{scope}[x={(image.south east)},y={(image.north west)}]
        \node[] at (0.1, 0.85) {a)};
    \end{scope}
    \end{tikzpicture}
    \begin{tikzpicture}
    \node[anchor=south west, inner sep=0] (image) at (0,0) {\includegraphics[clip, trim=20mm 14mm 18mm 14mm, scale=0.2]{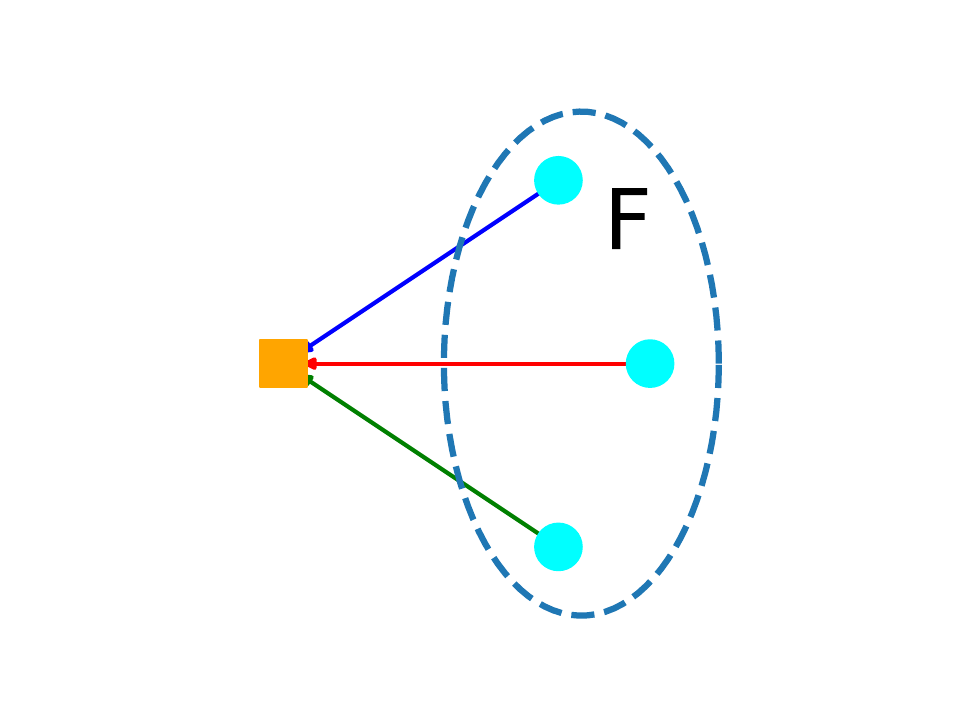}};
    \begin{scope}[x={(image.south east)},y={(image.north west)}]
        \node[] at (0.2, 0.85) {b)};
    \end{scope}
    \end{tikzpicture}
    \begin{tikzpicture}
    \node[anchor=south west, inner sep=0] (image) at (0,0) {\includegraphics[clip, trim=20mm 14mm 18mm 14mm, scale=0.2]{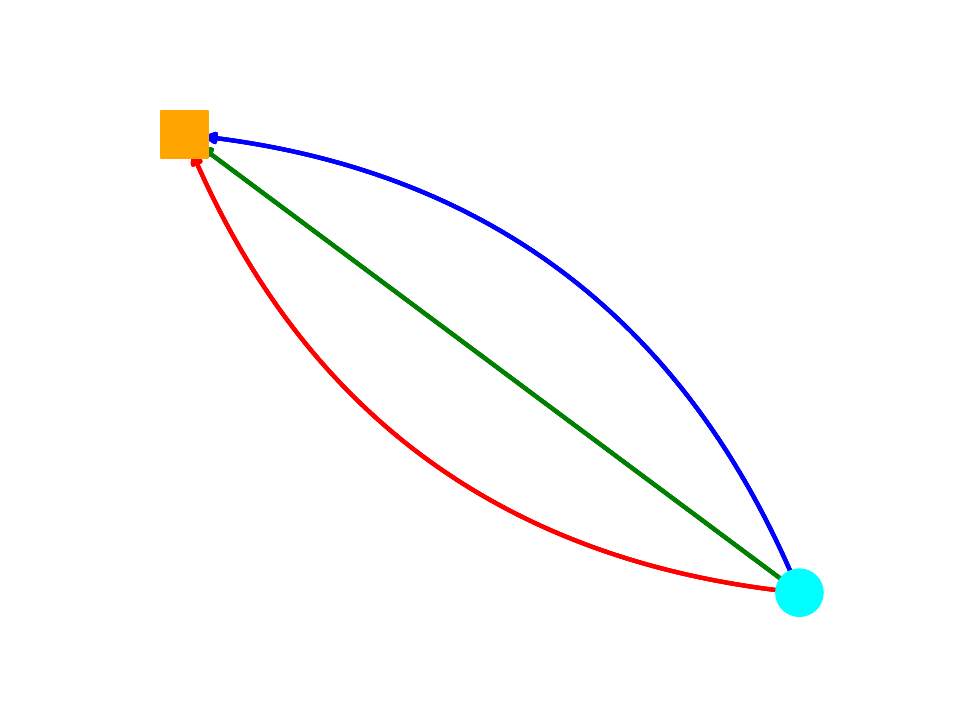}};
    \begin{scope}[x={(image.south east)},y={(image.north west)}]
        \node[] at (0.1, 0.5) {c)};
    \end{scope}
    \end{tikzpicture}
    \begin{tikzpicture}
    \node[anchor=south west, inner sep=0] (image) at (0,0) {\includegraphics[clip, trim=20mm 13mm 18mm 14mm, scale=0.2]{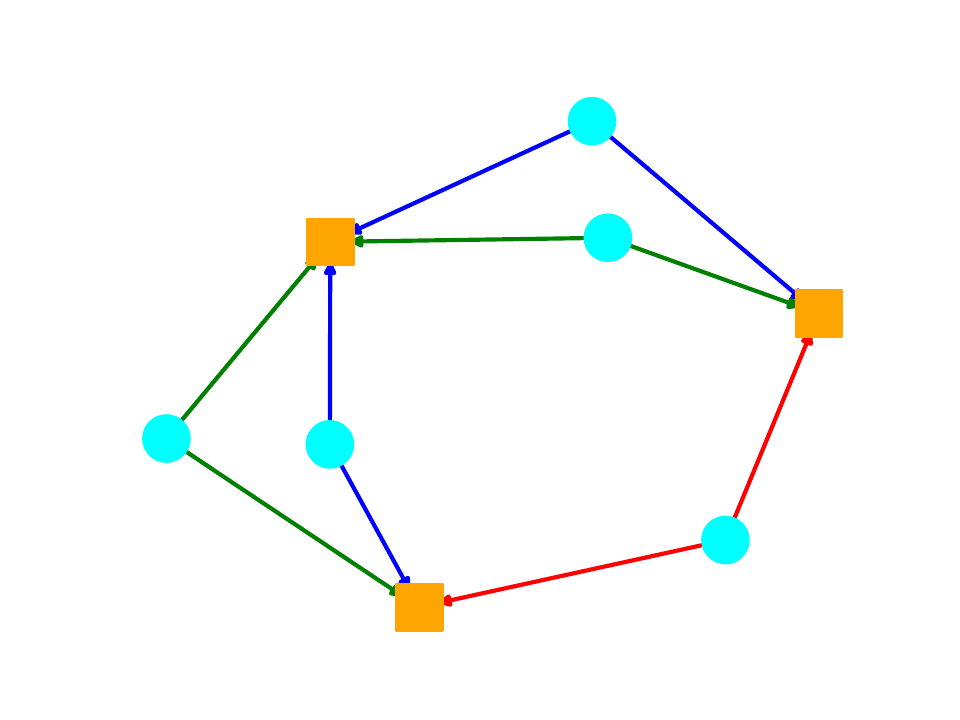}};
    \begin{scope}[x={(image.south east)},y={(image.north west)}]
        \node[] at (0.1, 0.85) {d)};
    \end{scope}
    \end{tikzpicture}
    \begin{tikzpicture}
    \node[anchor=south west, inner sep=0] (image) at (0,0) {\includegraphics[clip, trim=20mm 13mm 18mm 14mm, scale=0.2]{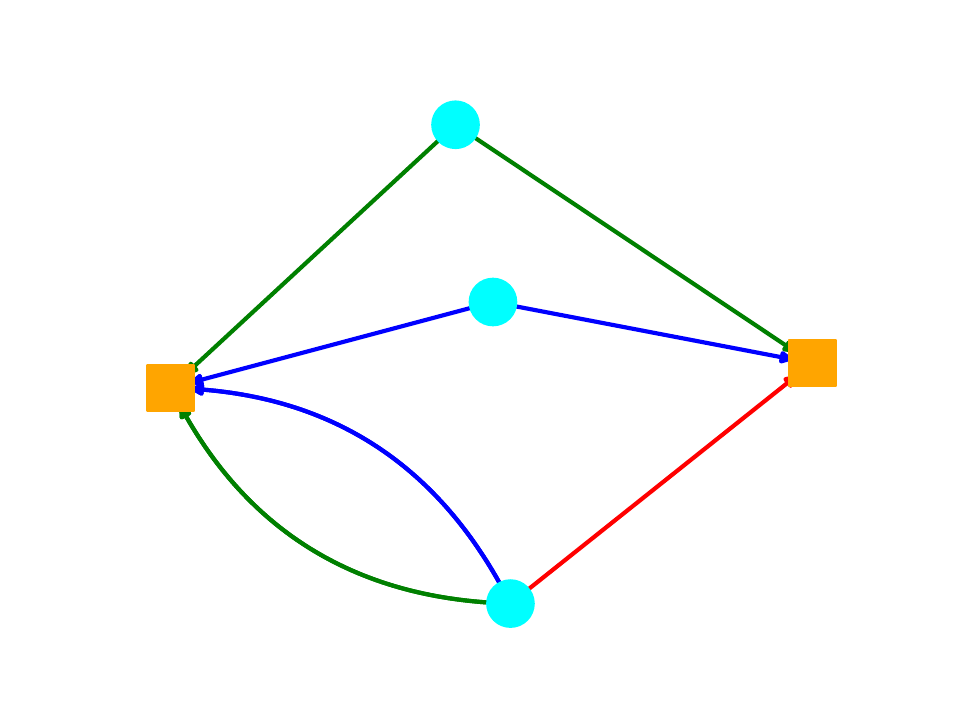}};
    \begin{scope}[x={(image.south east)},y={(image.north west)}]
        \node[] at (0.1, 0.85) {e)};
    \end{scope}
    \end{tikzpicture}
    \begin{tikzpicture}
    \node[anchor=south west, inner sep=0] (image) at (0,0) {\includegraphics[clip, trim=20mm 13mm 18mm 14mm, scale=0.2]{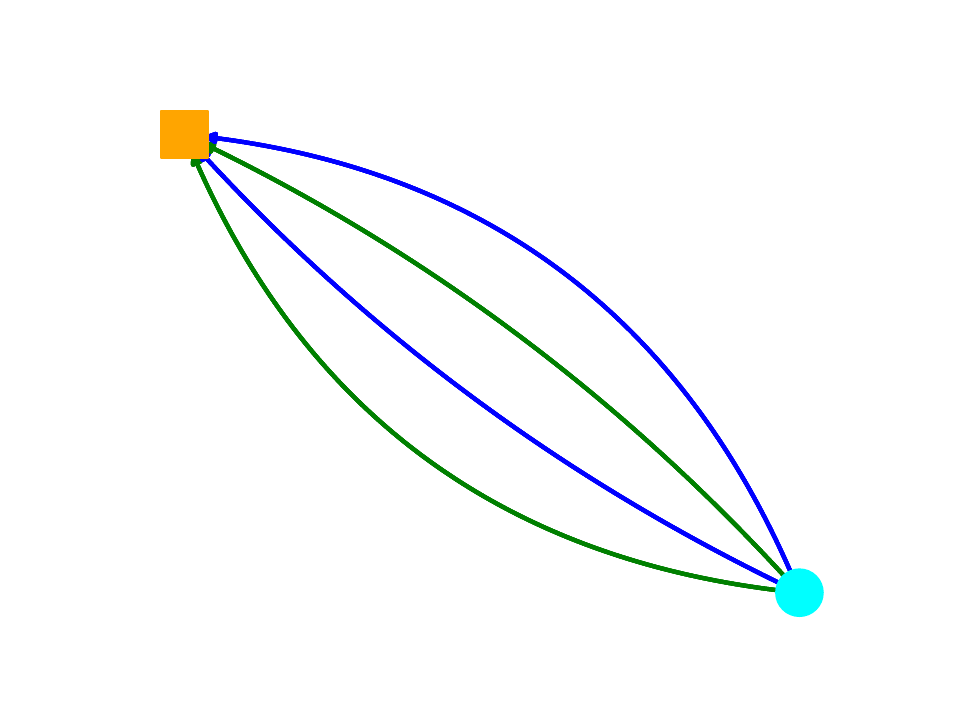}};
    \begin{scope}[x={(image.south east)},y={(image.north west)}]
        \node[] at (0.1, 0.5) {f)};
    \end{scope}
    \end{tikzpicture}
    \begin{tikzpicture}
    \node[anchor=south west, inner sep=0] (image) at (0,0) {\includegraphics[clip, trim=20mm 13mm 18mm 14mm, scale=0.2]{freelin1.pdf}};
    \begin{scope}[x={(image.south east)},y={(image.north west)}]
        \node[] at (0.05, 0.85) {g)};
    \end{scope}
    \end{tikzpicture}
    \begin{tikzpicture}
    \node[anchor=south west, inner sep=0] (image) at (0,0) {\includegraphics[clip, trim=20mm 13mm 18mm 14mm, scale=0.2]{D4_star_5_contracted.pdf}};
    \begin{scope}[x={(image.south east)},y={(image.north west)}]
        \node[] at (0.2, 0.85) {h)};
    \end{scope}
    \end{tikzpicture}
    \begin{tikzpicture}
    \node[anchor=south west, inner sep=0] (image) at (0,0) {\includegraphics[clip, trim=20mm 13mm 18mm 14mm, scale=0.2]{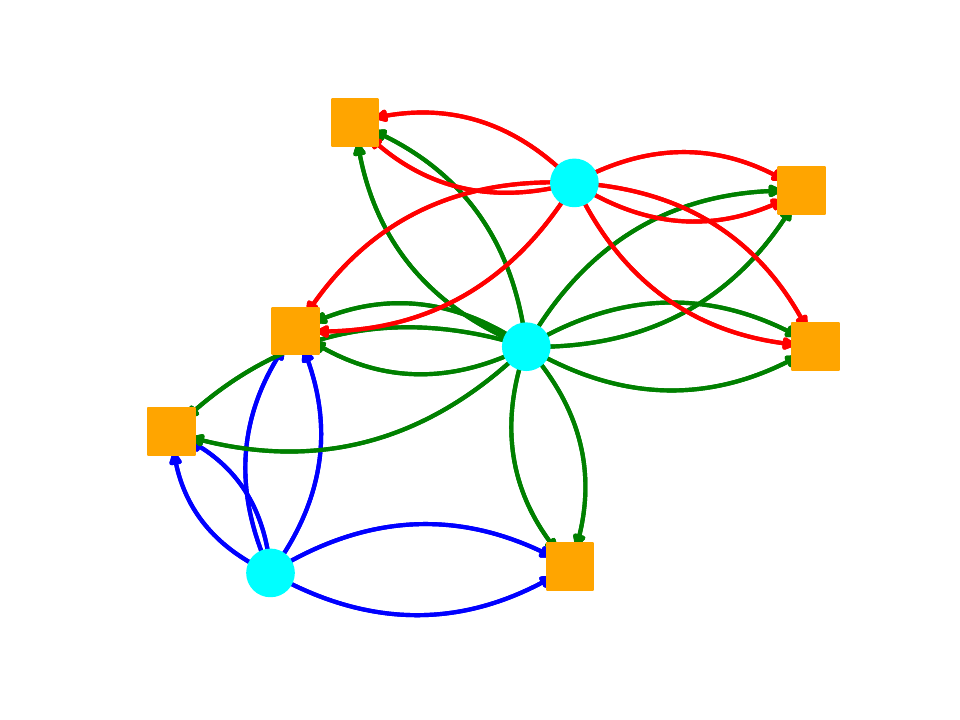}};
    \begin{scope}[x={(image.south east)},y={(image.north west)}]
        \node[] at (0.1, 0.85) {i)};
    \end{scope}
    \end{tikzpicture}
    \begin{tikzpicture}
    \node[anchor=south west, inner sep=0] (image) at (0,0) {\includegraphics[clip, trim=20mm 13mm 18mm 14mm, scale=0.2]{D4_star_5.pdf}};
    \begin{scope}[x={(image.south east)},y={(image.north west)}]
        \node[] at (0.4, 0.5) {j)};
    \end{scope}
    \end{tikzpicture}
    \begin{tikzpicture}
    \node[anchor=south west, inner sep=0] (image) at (0,0) {\includegraphics[clip, trim=20mm 13mm 18mm 14mm, scale=0.2]{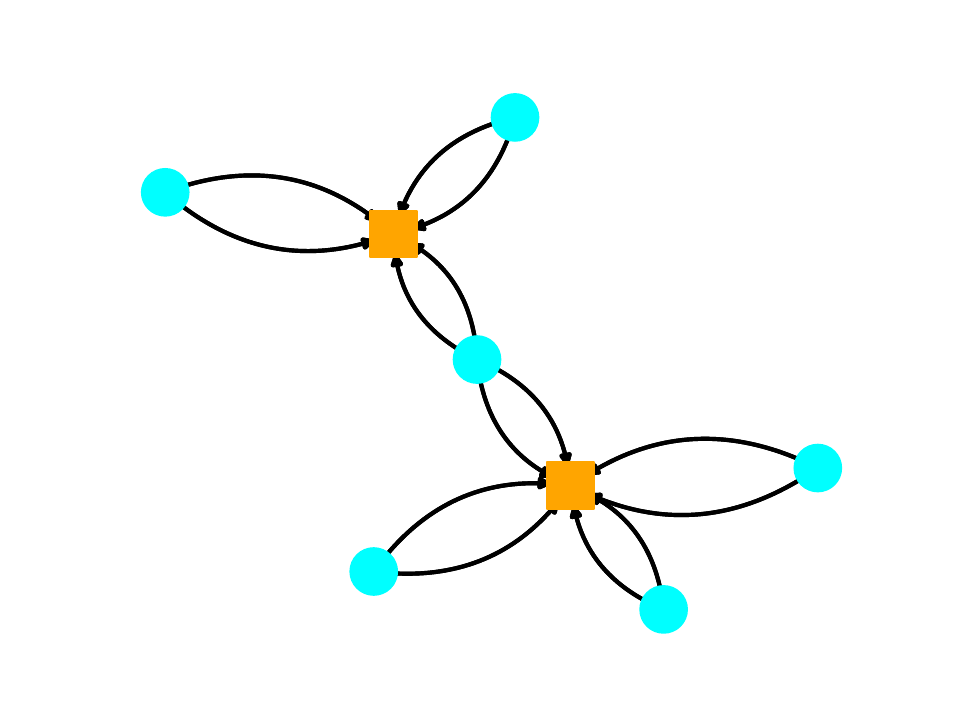}};
    \begin{scope}[x={(image.south east)},y={(image.north west)}]
        \node[] at (0.2, 0.5) {k)};
    \end{scope}
    \end{tikzpicture}
    \begin{tikzpicture}
    \node[anchor=south west, inner sep=0] (image) at (0,0) {\includegraphics[clip, trim=20mm 13mm 18mm 14mm, scale=0.2]{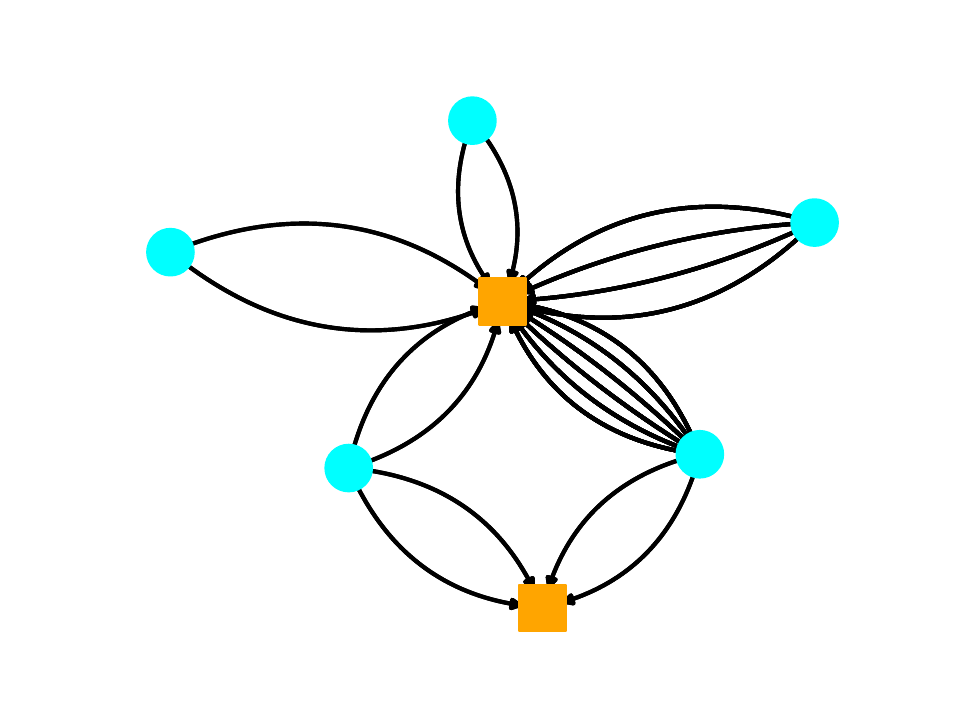}};
    \begin{scope}[x={(image.south east)},y={(image.north west)}]
        \node[] at (0.3, 0.85) {l)};
    \end{scope}
    \end{tikzpicture}
    \caption{\textbf{Top row:} Diagrams in ASYM with $\nu=3$ (three colors). Yellow squares: $H$-nodes; cyan circles: $p$-nodes. a) $D_{6}$; b) $R_3$ with a general target hyper-edge $F$; c) $R_3$ for the identity target \eqref{eq:idtarget}; diagrams appearing d) in $D_6\star D_6$ (up to recoloring); e) in $D_6\star R_3$; f) in $R_3\star R_3$. \textbf{Bottom row:} generic  diagrams in different regimes. g) a diagram from $D_{6}^{\star s}$ (free evolution); h) a ``flower'' with one $H$-node (contracted, underparameterized); i) an optimally contracted free overparameterized; 
    j) circular ($\nu=2$); k) circular optimally contracted to a tree (SYM, $\nu=2$); l) generic optimally contracted in SYM, $\nu=4$.}
    \label{fig:diagrams}
\end{figure*}

\paragraph{Diagrammatic representation.} To keep track of the polynomials $Y_s$, it is convenient to represent relevant polynomials in the weights $u_{k,i}^{(m)}$ by (hyper-)graphs that we call \emph{diagrams}. Note first that the loss \eqref{eq:loss} can be written as the sum of three terms representing the pure model, the model-target interaction, and the pure target:
\begin{align}\label{eq:loss3terms}
    &L ={}\frac{1}{2}\sum_{i_1,\ldots,i_\nu=1}^p \sum_{k=1}^H \sum_{k'=1}^H \prod_{m=1}^\nu u_{k,i_m}^{(m)} u_{k',i_{m}}^{(m)}\\
    &-\sum_{i_1,\ldots,i_\nu=1}^p \sum_{k=1}^H  \prod_{m=1}^\nu u_{k,i_m}^{(m)}F_{i_1,\ldots,i_\nu}+\frac{1}{2}\sum_{i_1,\ldots,i_\nu=1}^p F_{i_1,\ldots,i_\nu}^2.\nonumber
\end{align}
We can formally describe the first term by a graph $D_{2\nu}$ that contains $\nu$ nodes corresponding to summation over $i_m$ and 2 nodes corresponding to summation over $k$ and $k'$ (see Fig. \ref{fig:diagrams} a). We refer to the nodes of the first kind as \emph{$p$-nodes}, and to the nodes of the second  kind as \emph{$H$-nodes}. The edges of the graph connect the $p$-nodes to the $H$-nodes and correspond to the weights $u_{k,i}^{(m)}$. We occasionally refer to the edge indices $m\in\overline{1,\nu}$ as \emph{colors} (so in the symmetric scenario all edges have the same color). The full weight polynomial representing the first term then results by forming the products of the weights corresponding to all the edges, and summing these products over all the node index assignments.

The second term representing the model-target interaction can be described in a similar way, but in the case of a general target $F$ it requires considering a hypergraph $R_\nu$ with a hyper-edge $\{i_1,\ldots, i_\nu\}$ representing the factor $F_{i_1,\ldots,i_\nu}$. In this paper we will mostly focus on the ``identity'' target \eqref{eq:idtarget}, for which the diagrammatic representation simplifies: the respective loss term becomes $\sum_{i=1}^p \sum_{k=1}^H  \prod_{m=1}^\nu u_{k,i}^{(m)}$
and can be described by a usual graph having one $p$-node, one $H$-node, and $\nu$ edges connecting them (Fig. \ref{fig:diagrams} b, c).

The third, pure-target term in \eqref{eq:loss3terms} (it can be written in terms of the tensor Frobenius norm as $\tfrac{1}{2}\|F\|_F^2$) does not involve model weights and is only relevant for the initial $s=0$ term of the loss expansion \eqref{eq:lossexp}; in the case of ``identity'' target \eqref{eq:idtarget} it equals $\tfrac{p}{2}$ and corresponds to a trivial single-node graph.

\paragraph{Diagram merging.} Computing $d^s L/dt^s$ using expansions \eqref{eq:dgdt} and \eqref{eq:loss3terms} can be described in terms of \emph{merging} weight polynomials and associated diagrams. Suppose that $G$ is a weight polynomial associated with some diagram (graph) that we denote by the same letter $G$. As discussed before, we have a similar diagram representation for the loss \eqref{eq:loss3terms}: 
\begin{equation}
    \label{eq:loss_D_R_decomposition}
    L=\frac{1}{2}D_{2\nu}-R_\nu+\frac{1}{2}\|F\|_F^2.
\end{equation}
We represent now the evolution law  \eqref{eq:dgdt} as
\begin{equation}
    \frac{dG}{dt}=-\frac{1}{T}G\star\Big(\frac{1}{2}D_{2\nu}-R_\nu\Big),
\end{equation}
where $\star$ is the \emph{binary merging operation} defined for pairs of diagrams and extended bilinearly to their linear combinations. Specifically, given diagrams $G$ and $G'$, their merger $G\star G'$ is the sum of diagrams (i.e., respective polynomials), obtained as follows. We choose an edge $g=(k,i)$ in $G$ and an edge $g'=(k',i')$ in $G'$ of the same color, remove $g$ from $G$ and $g'$ from $G'$, and join the resulting diagrams by identifying $k$ with $k'$ and $i$ with $i'$ (see Fig. \ref{fig:diagrams} d-f). Clearly, this construction corresponds exactly to the computation $\sum_u \tfrac{\partial G}{\partial u}\tfrac{\partial G'}{\partial u}$. In particular, the full expansion of $L(t)$ can be written in terms of repeated mergers:
\begin{equation}\label{eq:lossstarexp}
    L(t)\sim \frac{1}{2}\|F\|_F^2+\sum_{s=0}^\infty \left(\frac{1}{2}D_{2\nu}-R_{\nu}\right)^{\star(s+1)}(0)\frac{(-t)^s}{T^s s!}.
\end{equation}

\paragraph{Averaging and diagram contractions.} We sketch now how the polynomials $Y_s$ are obtained (see \ref{app:diagcalc} for details). Since the model is initialized as $u_{k,i}^{(m)}\sim\mathcal N(0,\sigma^2)$, for any diagram/polynomial $G$ its mean $\mathbb E[G]$ at initial time $t=0$ can be computed by the Wick theorem. This means that $\mathbb E[G]$ is the sum of terms indexed by all partitions of the edges/weights $u_{k,i}^{(m)}$ into pairs $(u_{k,i}^{(m)}, u_{k',i'}^{(m')})$. For each such pair, $\mathbb E[u_{k,i}^{(m)}u_{k',i'}^{(m')}]=\sigma^2\delta_{i=i'}\delta_{k=k'}\delta_{m=m'}$. The condition $m=m'$ means that we pair edges of the same color, while the conditions $i=i', k=k'$ mean that we \emph{contract} (identify) the respective nodes. As a result, the contribution of a partition to $\mathbb E[G]$ equals a deterministic combinatorial expression associated with the contracted diagram. In the case of the identity target this expression is simply the monomial $p^q H^{n}\sigma^{2l},$ where $2l$ is the number of edges, $q$ is the number of $p$-nodes and $n$ is the number of $H$-nodes after the contraction. The full polynomial $Y_s$ is obtained by adding all monomials arising from various diagrams in $(\tfrac{1}{2}D_{2\nu}-R_{\nu})^{\star(s+1)}$ and from various pairings of their edges. 
  
\section{The large-size scaling limit}\label{sec:largesize}

\paragraph{Power-law scaling.} We are interested in the properties of the loss evolution at large $p$ and $H$. These properties will depend on the scaling among $p$, $H$, and initialization noise $\sigma^2$. One natural assumption is that these variables are connected by a \emph{power-law scaling} 
\begin{equation}\label{eq:alphascaling}
    p\asymp a^{\alpha_p}, H\asymp a^{\alpha_H},\sigma\asymp a^{\alpha_{\sigma}},\quad a\to+\infty, 
\end{equation}
with specific exponents $\boldsymbol{\alpha}=(\alpha_p, \alpha_H, \alpha_{\sigma})$. One can also consider cruder or, conversely, more refined scaling conditions that include the coefficients in addition to the powers. 

\paragraph{Leading and Pareto-optimal terms.} Consider the polynomials $Y_s=\sum_{q, n,l}c_{q,n,l; s}p^q H^n\sigma^{2l}$ representing the loss expansion coefficients by Theorem \ref{th:polycoeff}. Given a power-law scaling \eqref{eq:alphascaling} with some exponents $\boldsymbol{\alpha}$, the generic\footnote{Excluding degenerate cases with  several leading monomials cancelling each other} respective scaling of $Y_s$ is
\begin{equation}
    Y_s\asymp a^{\alpha_{Y_s}},\quad \alpha_{Y_s}=\max_{(q,n,l)}(\alpha_p q+\alpha_H n+2\alpha_{\sigma}l),
\end{equation} where $\max$ is taken over all monomials present in $Y_s$. We call the respective monomials \emph{leading}. In general, different triplets $\boldsymbol{\alpha}$ correspond to different leading terms.

To analyze this correspondence, it is convenient to consider what we call \emph{Pareto-optimal} terms. We define them as the monomials in $Y_s$ having the largest powers $(q,n)$ in the Pareto sense among the monomials with the same $l$. For example, if $Y_s=p H \sigma^2+p H^2 \sigma^4 + p H^3 \sigma^4+p^2 H \sigma^4$, then all terms are Pareto-optimal except $p H^2 \sigma^4$, as it is dominated by $p H^3 \sigma^4$.  

In our context $\alpha_p$ and $\alpha_H$ (but not necessarily $\alpha_{\sigma}$) are positive, since $H$ and $p$ grow. Hence, any term leading for some triple $\boldsymbol\alpha$ will be among the Pareto-optimal terms. On the other hand, Pareto-optimal terms form a relatively small subset of all terms and can often be explicitly described thanks to originating from \emph{minimally-contracted} diagrams.
Namely, in the diagram picture $2l$ is the number of edges, while $q$ and $n$ are the numbers of $p$-nodes and $H$-nodes after the node contraction induced by the edge pairing. Pareto-optimal terms correspond to the diagrams and pairings that minimally constrain summations over the node indices.  

The power $2l$ corresponds to the number of edges in a diagram and can be computed by binomially expanding the expression $(\tfrac{1}{2}D_{2\nu}-R_\nu)^{\star(s+1)}$. Suppose that a diagram $G$ results from merging $s_{D}$ diagrams $D_{2\nu}$ and $s_R$ diagrams $R_\nu$ in some order. Then $G$ has $2l=2\nu s_{D}+\nu s_{R}-2s$ edges, since merging any diagram with $D_{2\nu}$ adds $2(\nu-1)$ edges, while merging with $R_{\nu}$ adds $\nu-2$ edges. Since $s_D+s_{R}=s+1$, we can equivalently write $2l=\nu(s_D+1)+(\nu-2)s$. 

Finding the full set of Pareto-optimal triples $(q,n,l)$ is more subtle. It requires a careful analysis of different diagrams and generally depends on the model scenario and the type of target. We give a complete parametric description of Pareto-optimal terms for the identity target in all scenarios except SYM with odd $\nu$ (in which case the description is more complicated and will be analyzed elsewhere), see Fig.~\ref{fig:polygon_both}.

\begin{figure}
   \centering
   \begin{tikzpicture}
   \node[anchor=south west, inner sep=0] (image) at (0,0) {\includegraphics[scale=0.7, clip, trim=2mm 2mm 2mm 3mm]
   {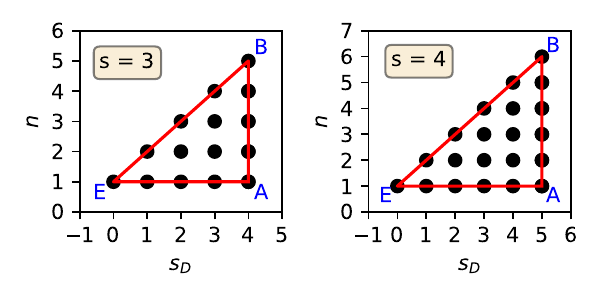}};
   \begin{scope}[x={(image.south east)},y={(image.north west)}]
            
            \node [align=center, font=\bfseries] at (-.1, 0.55) {SYM,\\ even $\nu$};
            
        \end{scope}
   \end{tikzpicture}
   \hfill
   \begin{tikzpicture}
   \node[anchor=south west, inner sep=0] (image) at (0,0) {\includegraphics[scale=0.7, clip, trim=2mm 2mm 2mm 3mm]
   {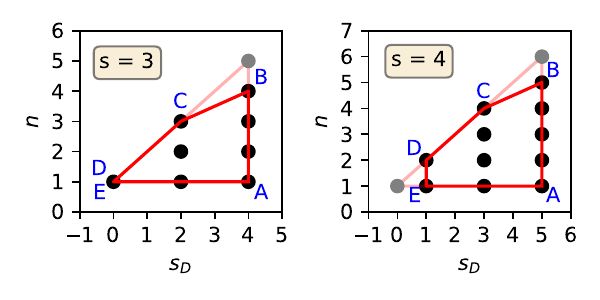}};
   \begin{scope}[x={(image.south east)},y={(image.north west)}]
            
            \node [align=center, font=\bfseries] at (-.1, 0.55) {ASYM};
            
        \end{scope}
   \end{tikzpicture}
   \caption{
      \label{fig:polygon_both}
      Pareto optimal terms (black, see Theorem \ref{th:pareto_front}) and the corresponding Pareto polygons (red) for the target \eqref{eq:idtarget}. The extremal points present in SYM but missing from ASYM are colored gray.
   }
\end{figure}

\begin{theorem}[\ref{sec:th:pareto_front:proof}]\label{th:pareto_front}
Consider GF with the identity target \eqref{eq:idtarget}. Up to nonzero numerical coefficients, the Pareto-optimal terms in $Y_s$ are 
\begin{equation}\label{eq:pareto_terms}
p^{Q(n, s_D)}H^n\sigma^{\nu(s_D+1)+(\nu-2)s},
\end{equation}
where $0\le s_D\le s+1, 1\le n\le s_D+1$ and
\begin{equation}
    Q(n, s_D) = \begin{cases}
    1+(\nu-1)s_D-\frac{\nu}{2}(n-1), & \text{SYM, even } \nu,\\
        1+(\nu-1)(s_D+1-n),& \text{ASYM}.
    \end{cases}
\end{equation}
In SYM with even $\nu$ all of these terms occur. 
ASYM has exceptional terms that do not occur: a) terms with odd $s_R=s+1-s_D$; b) the term $(n,s_{D})=(s+2, s+1)$.
\end{theorem}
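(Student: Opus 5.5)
The proof has two halves. First, an \emph{upper bound}: every pairing of every diagram in $(\tfrac12 D_{2\nu}-R_\nu)^{\star(s+1)}$ with given $(s_D,n)$ produces at most $Q(n,s_D)$ $p$-nodes. Second, \emph{realizability}: for each admissible $(n,s_D)$ the bound is attained, and the total coefficient of the extremal monomial does not vanish. Since $l$ is fixed by $s_D$ through $2l=\nu(s_D+1)+(\nu-2)s$, the Pareto front at fixed $l$ is exactly the graph of $n\mapsto \max q$. So the statement follows once $\max q=Q(n,s_D)$ is shown, together with $n\le s_D+1$ and an argument that the front for fixed $l$ does not mix different $s_D$ (it does not, since $l$ determines $s_D$).

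\textbf{Counting before contraction.} A diagram built from $s_D$ copies of $D_{2\nu}$ and $s_R$ copies of $R_\nu$ by $s$ mergers has $2s_D+s_R-s=s_D+1$ $H$-nodes. It has $\nu s_D+s_R-s=(\nu-1)s_D+1$ $p$-nodes, because each merger identifies one $H$-pair and one $p$-pair. This gives $n\le s_D+1$ at once, and $Q(s_D+1,s_D)$ should equal the count that survives when all $H$-nodes are kept distinct. I would then treat a Wick pairing as producing the contracted multigraph $\Gamma$, whose $l$ edges are the \emph{pairs}. The key structural fact I would prove is this: every $H$-node of the uncontracted diagram has degree $\nu$ (edges), or $\nu$ in total across the merged pieces. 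Hence every $H$-node of $\Gamma$ carries at least $\nu/2$ pair-edges. Combined with the connectivity of $\Gamma$, a spanning-tree/cycle-rank count $q+n-1\le l-(\text{excess forced at }H\text{-nodes})$ should give $q\le 1+(\nu-1)s_D-\tfrac{\nu}{2}(n-1)$ in SYM. The mechanism is that merging $H$-nodes releases edge-pairs, and the best possible trade is $\nu/2$ $p$-nodes lost per $H$-node gained. In ASYM, pairs must respect colors. An $H$-node that stays separated then has its $\nu$ colored edges paired only if another node of the same $H$-class carries the same colors. I expect this forces a loss of $\nu-1$ $p$-nodes per additional $H$-node, giving $Q=1+(\nu-1)(s_D+1-n)$. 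I would prove both bounds by induction on the number of mergers, tracking how one $\star$ step changes $(q,n,l)$ for the best pairing.

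\textbf{Constructions.} For attainment I would use explicit families. For $n=1$, I would take flower diagrams (Fig.~\ref{fig:diagrams}h), where all $H$-nodes are contracted into one and pairs are chosen so that every surviving $p$-node is a leaf pair. For general $n$, I would take optimally contracted trees (Fig.~\ref{fig:diagrams}i,k,l), interpolating by splitting off $H$-nodes one at a time, each at the cost of exactly $\nu/2$ (SYM) or $\nu-1$ (ASYM) $p$-nodes. Nonvanishing of coefficients would come from a sign argument. All contributions to a fixed extremal monomial have the same $s_D$, hence the same sign $(-1)^{s_R}(-1)^s$ and positive combinatorial weights, so no cancellation can occur.

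\textbf{ASYM exceptions, and the main obstacle.} In ASYM, an $R_\nu$ copy contributes one edge of each color at a single $p$-node. Color-respecting pairing requires each color to appear an even number of times. Since $D_{2\nu}$ contributes two edges of each color and a merger removes one edge of a color from each side, the parity of the color counts is governed by $s_R$, which rules out odd $s_R$. The term $(n,s_D)=(s+2,s+1)$ requires all $H$-nodes to be distinct with pure-$D$ diagrams. I would check that this pure chain leaves some color at an $H$-node unpaired unless two $H$-nodes merge. I expect the main obstacle to be the upper bound, and specifically a clean, scenario-uniform invariant for the induction that correctly handles the interaction between merging and pairing. My first attempt there would be a potential function $\Phi=q+\tfrac{\nu}{2}n-\tfrac{\nu-1}{\nu}\cdot(\ldots)l$, with the coefficient chosen to match $Q$, shown to be non-increasing under each merge-and-pair step.
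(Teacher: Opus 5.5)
Your node count before contraction ($s_D+1$ $H$-nodes, $(\nu-1)s_D+1$ $p$-nodes), the sign argument ruling out cancellations, and the color-parity argument excluding odd $s_R$ are correct and match the paper. The genuine gap is the upper bound $q\le Q(n,s_D)$. This is the substance of the theorem, and you leave it at ``should give'' and ``I expect''.

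Moreover, the mechanism you sketch cannot produce it. Connectivity of the contracted pair-graph gives only $q\le l-n+1$, which in SYM exceeds $Q$ by $(\tfrac{\nu}{2}-1)(s_R+n-1)$. A local ``excess at $H$-nodes'' correction cannot close this for $\nu\ge4$, because there are configurations that satisfy all your local constraints and still violate the bound. For example, take $\nu=4$ and let $n\ge3$ $H$-classes sit on a cycle. Consecutive classes share two degree-$2$ $p$-nodes, which together form one $p$-class, and all other $p$-nodes are singletons with both neighbours in one class. Then every ($H$-class, $p$-class) incidence count is even, every $H$-class carries at least $\nu/2$ pairs, and the graph is connected. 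Yet $P-q=n<2(n-1)=\tfrac{\nu}{2}(n-1)$, where $P=1+(\nu-1)s_D$ is the uncontracted $p$-count and $P-q\ge\tfrac{\nu}{2}(n-1)$ is equivalent to the theorem's bound. What excludes such configurations is the recursive structure of merged diagrams, which your counting never uses. For the same reason, a potential involving $l$ is the wrong bookkeeping: $l$ grows with $s_R$ while $Q$ does not, so the relevant quantity must be tracked against $s_D$.

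The missing idea is a \emph{backward} induction on $s_D$. Given a paired contraction of $G\in(\cdots(G_1\star D_{2\nu})\star R_\nu\cdots)\star R_\nu$, one builds a paired contraction of a diagram with $s_D-1$ factors $D_{2\nu}$ such that $q+\frac{\nu}{2}n$ (SYM), resp.\ $q+(\nu-1)n$ (ASYM), drops by at most $\nu-1$. A forward ``merge-and-pair'' monotonicity is not enough, since a pairing of $G_1\star D_{2\nu}$ need not restrict to one of $G_1$.

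The paper first disposes of the trailing $R_\nu$ mergers. In SYM they only add an even bundle of parallel edges and can be dropped. In ASYM they are commuted before the $D_{2\nu}$ merger unless they act on the new edges; color-parity vectors are tracked, and possibly an $R_\nu$ is re-inserted into $G_1$. It then distinguishes three cases for the new $H$-node $k'$:
\begin{itemize}
\item[(i)] $k'$ uncontracted. Its $\nu$ $p$-neighbours must pair among themselves. In SYM this forces at least $\nu/2$ identifications; in ASYM it forces all of them into one class together with an $R_\nu$ merger at $k'$. Either way $n'=n-1$.
\item[(ii)] $k'$ contracted to $k$. Then $n'=n$ and $q'\ge q-\nu+1$.
\item[(iii)] $k'$ contracted elsewhere. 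One fuses all $p$-classes meeting $i_1,\dots,i_{\nu-1},i$ and re-pairs, using that $\nu$ is even in SYM and the restored edge $u_{ki}$.
\end{itemize}

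A smaller gap is ASYM realizability at $n=s_D+1$. The SYM move of leaving $k'$ uncontracted fails there, because $k'$ carries one edge of each color. The paper instead attaches two $R_\nu$'s to the two odd $H$-nodes of a ring-like $D_{2\nu}^{\star s_D}$. The count of odd $H$-nodes, which starts at $2$ and never decreases under $D_{2\nu}$ mergers, is also what cleanly excludes $(n,s_D)=(s+2,s+1)$; your ``pure chain'' check should use this invariant rather than a chain picture.
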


\section{Pareto polygons and learning regimes}
\label{sec:pareto_polyhedra}

\begin{table*}
   \caption{
      \label{tab:extremal_simplices_asym}
      Extremal simplices along with their interpretations; asymmetric scenario.
      See \cref{tab:extremal_simplices_both} for a similar table covering both scenarios.
   }
   \centering
   \begin{tabular}{|c|c|c|c|c|c|}
      \hline
      Simplex & Scaling condition & Natural $T$ & Parameterization & Learning & Interpretation \\
      \hline
      \hline
      A -- B & $H \asymp p^{\nu - 1}$, $p^{\nu - 1} \sigma^\nu \to \infty$ & $H \sigma^{2\nu-2}$ & Balanced & No & Free evolution \\
      \hline
      B -- C & $p^{\nu - 1} H \sigma^{2\nu} \asymp 1$, $H \sigma^\nu \to \infty$ & $H \sigma^{2\nu-2}$ & Over- & Lazy & NTK \\
      \hline
      C & $p^{\nu - 1} H \sigma^{2\nu} \to 0$, $H \sigma^\nu \to \infty$ & $H \sigma^{2\nu-2}$ & Over- & Lazy & NTK, $f(0) \equiv 0$ \\
      \hline
      C -- D & $H \sigma^\nu \asymp 1$, $H / p^{\nu - 1} \to \infty$ & $\sigma^{\nu-2}$ & Over- & Rich & Mean-field \\
      \hline
      D -- E & $H \asymp p^{\nu - 1}$, $H \sigma^\nu \to 0$ & $\sigma^{\nu-2}$ & Balanced & Rich & --- \\
      \hline
      E -- A & $p^{\nu - 1} \sigma^\nu \asymp 1$, $H / p^{\nu - 1} \to 0$ & $\sigma^{\nu-2}$ & Under- & Rich & --- \\
      \hline
   \end{tabular}
\end{table*}

Theorem \ref{th:pareto_front} implies the following remarkable fact: for each $s$ the Pareto power triplets $(q,n,2l)$ form a planar polygon in $\mathbb R^3$, with the same normal vector 
\begin{equation}\nonumber
    (\alpha_p, \alpha_H, \alpha_{\sigma})=\begin{cases}\left(1,\frac{\nu}{2},\frac{1-\nu}{\nu}\right),& \text{SYM, even } \nu,\\ 
    \left(1,\nu-1,\frac{1-\nu}{\nu}\right),& \text{ASYM}. \end{cases}
\end{equation}  
This shows that there is a special joint scaling $p^{\nu-1}\asymp H^{2(\nu-1)/\nu}\asymp\sigma^{-\nu}$ (SYM, even $\nu$) or $p^{\nu-1}\asymp H\asymp \sigma^{-\nu}$ (ASYM) at which all the Pareto monomials in $Y_s$ will be leading. Moreover, we can similarly identify scaling conditions associated with various faces (i.e., sides or  vertices) of the Pareto polygons. In general, a $d$-dimensional face corresponds to a $(3-d)$-dimensional set of scaling vectors $\boldsymbol{\alpha}$, or a $(2-d)$-dimensional set of normalized scaling vectors. 

We identify the different faces and respective scaling conditions with \emph{extreme learning regimes} of the model. 
Most of them have a natural interpretation summarized in \cref{tab:extremal_simplices_asym} for the ASYM scenario, and in \cref{tab:extremal_simplices_both} for both scenarios.
For example, some of the sides correspond to limit regimes that previously appeared in the literature: namely, the constant NTK and the mean-field regimes; see \cref{sec:hyperparameter_polygon_detailed} for a detailed description and derivations of the scaling conditions for each extremal simplex, and \cref{sec:literature_overview_infinite_width} for a detailed literature overview on limit regimes.

For each of the extreme regimes, there is a natural scaling of the inverse learning rate $T$ that leads to a nondegenerate evolution of the average loss as a function of $t$.
It is obtained by balancing the base of power $s$ in the leading Pareto-term provided by \cref{th:pareto_front} with $T$ (see \cref{sec:hyperparameter_polygon_detailed}).

Here we describe high-level properties of the Pareto polygon.
As \cref{th:pareto_front} shows, it is a triangle in the symmetric scenario, while in the asymmetric one, one or two summits have to be removed, which turns the polygon either into a quadrangle or into a pentagon, see \cref{fig:polygon_both}.

As one moves from left to right, $s_D$ increases, hence $s_R = s+1-s_D$ decreases.
The rightmost edge, A-B, corresponds to $s_R = 0$, which means that the target is never used for computing loss derivatives, hence the target is never learned.
This edge becomes dominant when initialization is large: in this case, the identity target is essentially zero compared to the initial model, and the model attempts to learn zero throughout most of its training process.
We call this regime \emph{free evolution} and consider it in detail in \cref{sec:free-evo}.

Moving left gives $s_R > 0$, and we expect learning to occur.
When only points with $s_R$ bounded from above by a constant independent of $s$ become dominant, we say that \emph{weak learning} occurs: loss derivatives do depend on the target but only weakly (polynomially with bounded degree).
A notable qualitative difference between the symmetric and the asymmetric scenarios is that weak learning exists in the latter, but does not exist in the former.
Indeed, Point C and Edge B-C are extremal simplices with $s_R = 2$ and $s_R \in \{0,2\}$, respectively.
On the other hand, the only extremal simplices that contain $s_R > 0$ in the symmetric scenario are Edges B-E and E-A, Point E, and the whole triangle A-B-E.
All three contain Point E with $s_R = s+1$.
In Sec. \ref{sec:ntk}, we explain why linearized training does not exist in the symmetric scenario but exists in the asymmetric one.

Following previous works (e.g. \citet{chizat2019lazy}), we call the training process \emph{rich} if it is not lazy: i.e. when higher-order target correlations occur when computing loss derivatives.
Important examples of rich regimes are Edges B-E (symmetric) and C-D (asymmetric), which require $\sigma^2 \asymp 1 / H$ and $\sigma^2 \asymp 1 / H^{2/\nu}$, respectively.
For $\nu = 2$, these scalings coincide and yield a mean-field limit discovered independently in a number of works: see \cref{sec:literature_overview_infinite_width}.
Moving bottom-up in the Pareto polygon corresponds to increasing $n$. By Theorem \ref{th:pareto_front}, at larger $n$ increasing $H$ gets more important than increasing $p$. 
For this reason, top edges, B-E (resp. B-C and C-D), correspond to overparameterized regimes, while the bottom edge, E-A, is underparameterized.
In turn, side edges, A-B and E (resp. D-E), give balanced regimes.
We note that the balancedness condition depends on $\nu$: $H^2 \asymp p^\nu$ (resp. $H \asymp p^{\nu - 1}$), while $H = p$ is always necessary and sufficient to fit an identity tensor for any $\nu$.

\section{Explicit solutions of formal loss expansions}\label{sec:gensol}
In previous sections we showed that the polynomials $Y_s$  have specific leading terms as $p,H\to\infty$, depending on the mutual scaling among $p$, $H$, and $\sigma$. By suitably rescaling $T$, we can ensure that each coefficient $\mathbb E[d^s L/dt^s(0)] = T^s Y_s$ of the formal loss expansion \eqref{eq:lossexp} has a finite limit. We can then consider the \emph{formal coefficient-wise limiting expansion} of $\EE[L(t)]$, and try summing this series. 

This procedure is not easily mathematically justified and may not be valid in general. Even if valid, it may require non-standard summation methods (e.g., Borel summation). Nevertheless, we show below that it does work in a wide range of cases, and theoretical predictions agree very well with experimental results. Our general methodology is as follows.  
\begin{enumerate}
    \item We write a recurrence relation for the formal coefficients of the loss expansion. In these coefficients, we generally keep only the Pareto-optimal terms, and further drop terms that are subleading in the particular regime under consideration. 

    Typically, such a recurrence relation cannot be written without considering additional auxiliary variables. Accordingly, this requires us to consider the diagram expansion of the loss as a special case of a more general  generating function (g.f.) $h(\mathbf x)$.

    \item Using Theorem \ref{th:recurrence} below, we convert the recurrence relation into a (formal) partial differential equation (PDE) on $h$.

    \item In many cases, the resulting PDE is a first-order PDE of the form $\Phi(\mathbf x)^T\nabla h(\mathbf x) = \phi(\mathbf x)$
    with some particular vector field $\Phi(\mathbf x)$ and function $\phi(\mathbf x)$. Such an equation can be solved by the method of characteristics:
    \begin{equation}\label{eq:char}
        h(\mathbf x(\tau_0))=h(\mathbf x(\tau_1))e^{-\int_{\tau_0}^{\tau_1}\phi(\mathbf x(\tau))d\tau},
    \end{equation}
    where $\mathbf x(\tau)$ is any integral curve of the field $\Phi$, i.e. a solution of the ordinary differential equation (ODE) $\tfrac{d}{d\tau}\mathbf x(\tau)=\Phi(\mathbf x(\tau)).$ We use Eq. (\ref{eq:char}) to transfer the values of $h$ from the points $\mathbf x(\tau_1)$ where we know $h$ to points $\mathbf x(\tau_0)$ of our interest.
\end{enumerate}
The PDE for the generating function $h$ is derived by the following general theorem (see \ref{sec:recurrence_pdf} for proof and comments).
\begin{theorem}\label{th:recurrence}
Given a formal multivariate power series 
$
h(\mathbf x)\sim\sum_{\mathbf n\in \mathbb N_0^d} C_{\mathbf n}\mathbf x^{\mathbf n},
$ 
suppose that its coefficients satisfy the conditions
\begin{equation}
 \sum_{\mathbf k\in K} C_{\mathbf n+\mathbf k}P_{\mathbf k}(\mathbf n+\mathbf k)=0,\quad \forall \mathbf n\in\mathbb Z^d,   
\end{equation}
where $K$ is a finite subset of $\mathbb Z^d$, $P_{\mathbf k}$ are some $d$-variate polynomials, and $C_{\mathbf n}=0$ if $\mathbf n\in \mathbb Z^d\setminus\mathbb N_0^d$.
Then $h$ formally satisfies the differential equation
\begin{equation}
  \sum_{\mathbf k\in K}\mathbf x^{-\mathbf k}P_{\mathbf k}(\mathbf x\frac{\partial}{\partial\mathbf x})h=0.  
\end{equation}
\end{theorem}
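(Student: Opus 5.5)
The plan is to compute the action of each operator $\mathbf x^{-\mathbf k}P_{\mathbf k}(\mathbf x\partial_{\mathbf x})$ on $h$ coefficient by coefficient, and then read off the recurrence. All manipulations are formal: they are identities between formal Laurent series in $\mathbf x$ with finitely many negative exponents. Throughout, $\mathbf x\frac{\partial}{\partial\mathbf x}$ denotes the commuting family of Euler operators $\theta_j=x_j\partial_{x_j}$, $j=1,\dots,d$, and $P_{\mathbf k}(\mathbf x\partial_{\mathbf x})$ means $P_{\mathbf k}(\theta_1,\dots,\theta_d)$. This is well defined because the $\theta_j$ commute.

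\textbf{Step 1: action of $P_{\mathbf k}$ on monomials.} Since $\theta_j\mathbf x^{\mathbf m}=m_j\mathbf x^{\mathbf m}$, every monomial is a joint eigenvector of the $\theta_j$. Hence any polynomial acts diagonally:
\[
P_{\mathbf k}(\theta)\mathbf x^{\mathbf m}=P_{\mathbf k}(\mathbf m)\mathbf x^{\mathbf m}.
\]
Extending linearly term by term, which is legitimate for formal series because each coefficient of the result depends on only one coefficient of the input, we get
\[
P_{\mathbf k}(\theta)h\sim\sum_{\mathbf m}C_{\mathbf m}P_{\mathbf k}(\mathbf m)\mathbf x^{\mathbf m}.
\]

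\textbf{Step 2: the shift.} Multiplying by $\mathbf x^{-\mathbf k}$ gives
\[
\sum_{\mathbf m}C_{\mathbf m}P_{\mathbf k}(\mathbf m)\mathbf x^{\mathbf m-\mathbf k}.
\]
Reindexing with $\mathbf m=\mathbf n+\mathbf k$ turns this into $\sum_{\mathbf n}C_{\mathbf n+\mathbf k}P_{\mathbf k}(\mathbf n+\mathbf k)\mathbf x^{\mathbf n}$. Here we use the convention $C_{\mathbf n}=0$ for $\mathbf n\notin\mathbb N_0^d$, which lets the sum run over all $\mathbf n\in\mathbb Z^d$. The result is a formal Laurent series whose exponents are bounded below componentwise, namely $n_j\ge -k_j$. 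Since $K$ is finite, the sum over $\mathbf k\in K$ is again such a series, and its coefficients are finite sums.

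\textbf{Step 3: conclusion.} The coefficient of $\mathbf x^{\mathbf n}$ in $\sum_{\mathbf k\in K}\mathbf x^{-\mathbf k}P_{\mathbf k}(\theta)h$ is exactly $\sum_{\mathbf k\in K}C_{\mathbf n+\mathbf k}P_{\mathbf k}(\mathbf n+\mathbf k)$, and by hypothesis this vanishes for every $\mathbf n\in\mathbb Z^d$. The differential expression is therefore the zero formal series, which is the claim.

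The only point needing care is the bookkeeping. First, the operator ordering must be as stated, with the polynomial applied first and the shift second, because $\mathbf x^{-\mathbf k}$ does not commute with $\theta$: $\theta_j\mathbf x^{-\mathbf k}=\mathbf x^{-\mathbf k}(\theta_j-k_j)$. Second, the result must lie in a ring of formal Laurent series where all products involved are defined. The hypothesis for all $\mathbf n\in\mathbb Z^d$, rather than only $\mathbf n\in\mathbb N_0^d$, is precisely what makes the negative-exponent coefficients vanish as well.

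There is no real obstacle. If one prefers an honest power-series statement, multiply the equation by $\mathbf x^{\mathbf k_{\max}}$, where $(\mathbf k_{\max})_j=\max_{\mathbf k\in K}\max(k_j,0)$, to clear negative powers. The conclusion is unchanged.
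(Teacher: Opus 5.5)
Your proposal is correct and follows essentially the same route as the paper's proof. Both use the diagonal action $P(\mathbf x\partial_{\mathbf x})\mathbf x^{\mathbf m}=P(\mathbf m)\mathbf x^{\mathbf m}$, reindex $\mathbf m=\mathbf n+\mathbf k$ after applying $\mathbf x^{-\mathbf k}$, and observe that the coefficient of each $\mathbf x^{\mathbf n}$ is exactly the assumed vanishing sum. Your extra bookkeeping on operator ordering and on formal Laurent series with exponents bounded below is a harmless clarification of what the paper leaves implicit.
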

Sections \ref{sec:free-evo}, \ref{sec:nu2} and \ref{sec:nu4} below show successful applications of this procedure.

\section{Free evolutions [\ref{sec:free-evo-app}]}\label{sec:free-evo}
\begin{figure}
    \centering
    \includegraphics[width=0.99\linewidth]{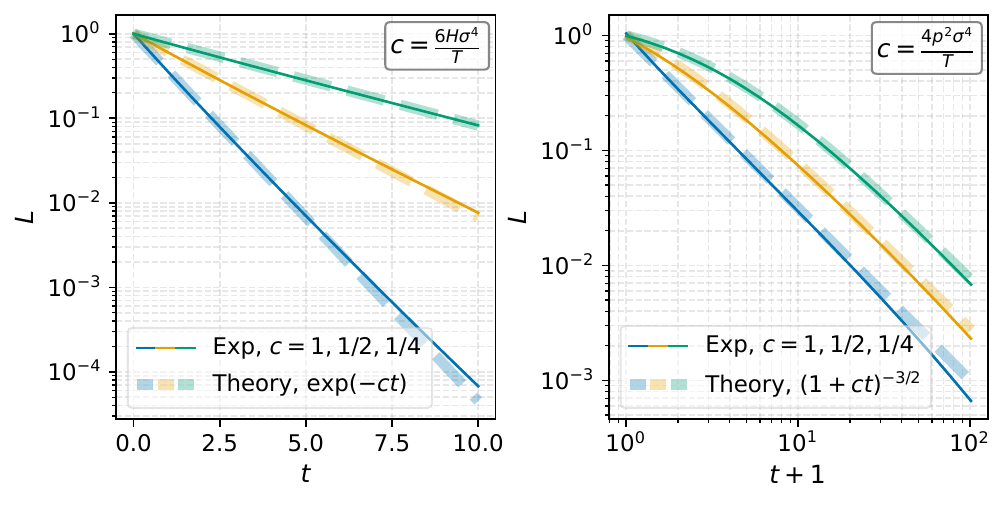}
    \caption{Free ASYM $\nu=3$ (Sec. \ref{sec:free-evo}). {\bf Left}: theory for the overparameterized case verified experimentally with $p=32,H=p^3=32768$. {\bf Right:} theory for the underparameterized case verified experimentally with $p=H=128$.}
    \label{fig:free-main}
\end{figure}

\begin{figure}
    \centering
    \includegraphics[width=0.65\linewidth]{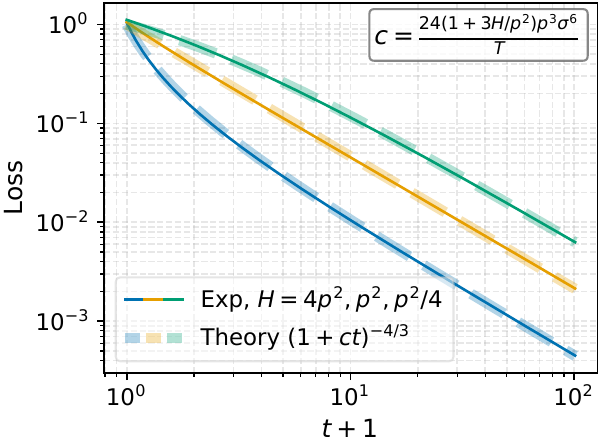}
    \caption{Free SYM $\nu=4$ (Sec. \ref{sec:free-evo}). The experiments were conducted with $p=64$ and with different values of $H$ proportional to $p^2$, while we fixed $\EE [L(0)] = 1, T=96 p^3 \sigma^6$.}
    \label{fig:nu4-free-gen}
\end{figure}

We refer to GF with zero target as \emph{free evolution}. Free evolution is also relevant for nonzero targets when they are sufficiently small compared to the randomly initialized model: in this case free evolution naturally approximates the ``deflation'' of the model during the initial learning stage. 

In our setting free evolutions form a one-parameter family characterized by the model size parameter $H$  (segment A-B in Fig. \ref{fig:polygon_both}). This family includes two extreme regimes: \emph{underparameterized} (A) and \emph{overparameterized} (B), admitting explicit solutions and suggestive interpretations. The analytic results agree well with experiment (see Fig. \ref{fig:free-main}). 

The extreme underparameterized evolution (point A) corresponds to ``flower'' contracted diagrams $D_{2\nu}^{\star s}$ with a single contracted $H$-node (Fig. \ref{fig:diagrams}h). The expected loss is an asymptotic power law in both scenarios:
\begin{equation*}\label{eq:free-underparam-loss}
\frac{\mathbb E[L(t)]}{\mathbb E[L(0)]}\sim 
\begin{cases}
\left(1+\frac{2(\nu-1)p^{\nu-1}\sigma^{2(\nu-1)}t}{T}\right)^{-\frac{\nu}{\nu-1}}, & {\text{\small ASYM}},\\ 
\left(1+\frac{2\nu(\nu-1)p^{\nu-1}\sigma^{2(\nu-1)}t}{T}\right)^{-\frac{\nu}{\nu-1}}, & \genfrac{}{}{0pt}{}{\text{SYM}}{\text{even }\nu}.
\end{cases}
\end{equation*}
The extreme overparameterized evolution (point B) is more complex. While for the symmetric even-$\nu$ model the loss is still an asymptotic power law:
\begin{equation*}
\frac{\mathbb E[L(t)]}{\mathbb E[L(0)]}\sim
\left(1+\tfrac{2\nu(\nu-1)!!(\nu-1)p^{\nu/2-1} H \sigma^{2(\nu-1)} t}{T}\right)^{-\frac{\nu}{\nu-1}},
\end{equation*}
in ASYM it falls off exponentially (see Fig. \ref{fig:free-main}):
\begin{equation*}
\frac{\mathbb E[L(t)]}{\mathbb E[L(0)]}\sim
e^{-2\nu \sigma^{2(\nu-1)}Ht/T},
\end{equation*}
showing a significant difference between the two scenarios. 

Additionally, the method of Section \ref{sec:gensol} allows us to find explicit solutions for general free evolutions (i.e., the full segment AB) in even-$\nu$ SYM. The cases $\nu=2$ and $\nu\ge 4$ are qualitatively different. For $\nu=2$, the free evolution can be derived in terms of the Narayana generating function as an extreme case of the general solution \eqref{eq:loss-narayana}-\eqref{eq:loss-narayana2} of the triangle A-B-E given in Section \ref{sec:nu2}. The relevant diagrams are single loops optimally contracted to trees. On the other hand, for even $\nu\ge 4$ the diagrams are multi-looped, but (in contrast to the case $\nu=2$) all optimal contractions of merged diagrams are mergers of optimally contracted diagrams, which allows us to derive a simple recurrence yielding
\begin{equation*}
\frac{\mathbb E[L(t)]}{\mathbb E[L(0)]}\sim
\Big(1+\tfrac{2\nu(\nu-1)\big[1+\frac{(\nu-1)!!H}{p^{\nu/2}}\big]p^{\nu-1}\sigma^{2(\nu-1)} t}{T}\Big)^{-\frac{\nu}{\nu-1}}
\end{equation*}
(see Fig. \ref{fig:nu4-free-gen}).

\section{The NTK regime [\ref{app:linearized_model_equivalence}]}
\label{sec:ntk}
With any differentiable parametric model $f_{\mathbf x}(\mathbf u)$ one associates the \emph{Neural Tangent Kernel} (NTK) $\Theta_{\mathbf x,\mathbf x'}(\mathbf u) = \nabla^\top f_{\mathbf x}(\mathbf u) \nabla f_{\mathbf x'}(\mathbf u)$, where $\mathbf x$ and $\mathbf x'$ are model inputs, while $\mathbf u$ are its learnable parameters.

As shown in \citet{jacot2018neural}, for neural nets under a specific parameterization, as the network width approaches infinity, their NTK converges to a deterministic non-evolving limit, while the training process becomes equivalent to a kernel method.

The NTK limit exists for all neural networks whose training process can be described as a \emph{Tensor Program} (TP) \citep{yang2020tensor_ii,yang2021tensor_iib}.
We emphasize that the model we consider \emph{cannot} be described as a TP even at initialization for all $\nu \geq 3$.
That is why the results we present below are novel and are not covered by any of the existing works.

As we consider a scenario where not only the hidden dimension $H$, but also the input dimension $p$ grows to infinity, it is not immediately obvious how to define the model (or kernel) convergence.
Fortunately, due to symmetries, we are able to describe element-wise convergence of the model and the initial NTK.
We prove the following in \cref{app:linearized_model_equivalence}:
\begin{proposition}
    \label{prop:ntk_training_main}
    Consider $f$ defined in \cref{eq:f_F_def} in the ASYM case.
    Then for $T \sim \eta^{-1} H \sigma^{2\nu-2}$,

    \begin{enumerate}
        \item The initial NTK of $f$ converges a.s. to the identity tensor as $H \to \infty$: $T^{-1} \Theta_{i_1,\ldots,i_\nu; i'_1,\ldots,i'_\nu}(0) \to \eta \nu \prod_{m=1}^\nu \delta_{i_m = i'_m}$.
        \item Under the scalings corresponding to Point C and Point B of \cref{fig:polygon_both}, the $t$-expansion terms of $f_{i_1,\ldots,i_\nu}$ converge to those of $\left(1 - e^{-\eta t}\right) \delta_{i_1=\ldots=i_\nu}$ in probability.
    \end{enumerate}
\end{proposition}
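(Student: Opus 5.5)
The plan is to treat the two claims separately. Claim 1 is a law of large numbers over the $H$ independent rank-one components. Claim 2 is a diagrammatic estimate of the first and second moments of each Taylor coefficient $d^s f_{i_1,\ldots,i_\nu}/dt^s(0)$, with fixed external indices, in the spirit of \cref{th:polycoeff} and \cref{th:pareto_front}.

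\emph{Step 1 (initial NTK).} Differentiating \eqref{eq:f_F_def} in ASYM gives
\[
\Theta_{i;i'}(0)=\sum_{m=1}^\nu \delta_{i_m=i'_m}\sum_{k=1}^H \prod_{m'\neq m} u^{(m')}_{k,i_{m'}}u^{(m')}_{k,i'_{m'}} .
\]
For fixed external indices, the inner sum over $k$ is a sum of $H$ i.i.d. random variables. Each summand has mean $\sigma^{2\nu-2}\prod_{m'\ne m}\delta_{i_{m'}=i'_{m'}}$ and finite moments of all orders.

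Dividing by $T=\eta^{-1}H\sigma^{2\nu-2}$ and applying the strong LLN to $\frac1H\sum_k\sigma^{-(2\nu-2)}(\cdots)$ gives the stated limit $\eta\nu\prod_m\delta_{i_m=i'_m}$ almost surely. This uses the fact that the normalized variables $u/\sigma$ are standard Gaussians, so the rescaled summands have an $H$-independent law. Convergence is entrywise; when $p$ also grows, there are only finitely many distinct index-coincidence patterns, so this is uniform over them.

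\emph{Step 2 (reduction to the linearized dynamics).} Applying \eqref{eq:dgdt} to $G=f_{i}$ gives $df/dt=-T^{-1}\Theta(\mathbf u)(f-F)$. Hence $d^sf/dt^s(0)$ is a polynomial in the weights, represented by diagrams obtained by repeated merging of the ``open'' diagram of $f_i$ (one $H$-node, external $p$-indices) with $\tfrac12 D_{2\nu}-R_\nu$.

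I will show that, in probability, $d^sf_i/dt^s(0)\to(-\eta\nu)^{s}\big(f_i(0)-F_i\big)$ for $s\ge1$, with the $\nu$ absorbed into the normalization of $\eta$ to match the stated $(1-e^{-\eta t})$, and that $f_i(0)\to0$. The second fact is immediate: $\operatorname{Var} f_i(0)=H\sigma^{2\nu}$. At Point C this tends to $0$ because $H\sigma^{2\nu}\le p^{\nu-1}H\sigma^{2\nu}\to0$. At Point B it is $\asymp p^{1-\nu}\to0$.

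For the first fact I will compute $\mathbb E$ and $\mathbb E[(\cdot)^2]$ of each coefficient by Wick contraction, exactly as in \cref{sec:lossexp}, now with fixed external nodes. The count of free $p$-nodes, $H$-nodes and edges gives a monomial $p^qH^n\sigma^{2l}T^{-s}$. With $T\propto H\sigma^{2\nu-2}$, I expect the following behaviour.
\begin{itemize}
\item The contractions realizing ``$\Theta$ replaced by its mean, applied $s$ times to the target'' are $O(1)$. These are chains with $s_R$ target insertions, where each kernel step pairs the two fresh copies of a $D_{2\nu}$ or $R_\nu$ vertex with one another.
\item Every other pairing carries a factor of at least one of $(H\sigma^\nu)^{-1}$, $p^{\nu-1}H\sigma^{2\nu}$, or $p^{-1}$. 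The conditions $H\sigma^\nu\to\infty$ together with $p^{\nu-1}H\sigma^{2\nu}\lesssim1$ (Points B, C) make all of these vanish.
\end{itemize}
The variance bound is the same computation applied to the doubled diagram. Together these give $L^2$, hence in-probability, convergence.

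\emph{Main obstacle.} The hard step is the uniform power counting for arbitrary $s$: showing that every non-kernel contraction is suppressed. In particular, the terms from $\tfrac12 D_{2\nu}$ that would be leading on edge A--B must be shown negligible, which at Point B holds only up to the borderline $p^{\nu-1}H\sigma^{2\nu}\asymp1$. There I expect a genuine cancellation: contributions of $f(0)$ enter only through $f(0)-F$, and $f(0)$ is entrywise small, but summed over $p^{\nu-1}$ indices it is not. I would first try an inductive bound on merged diagrams, mirroring the Pareto analysis of \cref{th:pareto_front} restricted to open diagrams. The key point is that each extra $H$-node pays $H\sigma^{2\nu-2}/T=O(1)$ while each extra free $p$-node pays $\sigma^2$. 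I would use the exclusion of odd $s_R$ in ASYM to handle the cross terms.
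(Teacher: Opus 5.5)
Your Step 1 is the paper's argument: an entrywise strong LLN for $\Theta(0)/(H\sigma^{2\nu-2})$. Deducing $f_i(0)\to0$ from $\operatorname{Var} f_i(0)=H\sigma^{2\nu}$ is also what the paper does.

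The gap is Step 2, which carries all the content of Claim 2. Its central claim is asserted (``I expect''), not proved. That claim is a uniform-in-$s$ statement: among open diagrams with fixed external indices, and their doubled versions needed for the variance, only the ``mean-kernel'' chains survive.

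Moreover, the mechanism you assert does not close at Point B. You list $p^{\nu-1}H\sigma^{2\nu}$ among the suppression factors that ``all vanish''. On Edge B--C, however, this quantity is $\asymp 1$; it is exactly the scaling you use to get $\operatorname{Var}f_i(0)\asymp p^{1-\nu}$. So $f(0)$ is entrywise small but has $\|f(0)\|_F^2\asymp p\asymp\|F\|_F^2$, and its aggregated contributions are of leading order in the closed expansions there. You recognize that a cancellation is then needed, but you neither identify it nor indicate how an induction over open diagrams would exhibit it. A Pareto-type classification of open and doubled open diagrams would have to be built from scratch, and that is the whole difficulty.

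The paper avoids open diagrams altogether. It works with the closed statistics $R_\nu=\sum_i f_{i\ldots i}$, $R^{(2)}_\nu=\sum_i f_{i\ldots i}^2$ and $D_{2\nu}$. Their Taylor coefficients are closed mergers, already classified in the proof of Theorem~\ref{th:pareto_front}:
\begin{itemize}
\item at Point C, flowers in which every merger hits an odd petal and the target enters exactly twice;
\item at Point B, chains of $D_{2\nu}$ merged along a single color.
\end{itemize}
Each of these is reproduced by merging diagrams of the linearized model: $\hat D^{m_j,m_{j+1}}_{2\nu}$ terminated by $\hat R^{m_k}_\nu$ at C, and $\hat D^{m,m}_{2\nu}$ at B. Hence the full and linearized models have the same leading coefficients. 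The linearized model is kernel regression with the kernel from Step 1, so it is solved in closed form. Index-permutation symmetry then transfers the conclusions from the sums to single entries, with off-diagonal entries handled via $D_{2\nu}-R^{(2)}_\nu$. The borderline $f(0)$-contributions at B are therefore never cancelled by hand; they are simply identical in both models. In return, a completed version of your second-moment argument would reach the in-probability conclusion more directly than matching expected Taylor coefficients of $f$ and $f^2$.

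Finally, your rate $(-\eta\nu)^s$, i.e. $1-e^{-\nu\eta t}$, is the one consistent with Claim 1. It also matches the ASYM free decay $e^{-2\nu\sigma^{2(\nu-1)}Ht/T}$ of the loss. You cannot ``absorb $\nu$ into $\eta$'', since $\eta$ is fixed by $T$ and already appears in Claim 1. You should instead flag the exponent in the statement as off by a factor $\nu$; the paper's kernel-regression step carries the same slip.
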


In \cref{fig:polygon_both} for the SYM, even $\nu$ case, there is no extreme point that would correspond to Point C of the ASYM case.
This suggests that no scaling leads to the NTK limit in this case.
We prove the following in \cref{app:NTK_trace_evolution_nu=2}:

\begin{proposition}
    For the SYM model with $\nu=2$, $\Theta_{i,j; i',j'}(0) = \lim_{t\to\infty} \Theta_{i,j; i',j'}(t)$ under gradient flow $\forall i,j,i',j' \in [p]$ is equivalent to $f_{i,j}(0) = \lim_{t\to\infty} f_{i,j}(t)$ under gradient flow $\forall i,j \in [p]$.
\end{proposition}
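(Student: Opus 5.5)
The plan is to compute the NTK of the symmetric $\nu=2$ model in closed form and observe that it is a fixed, invertible linear function of the model output $f$ itself. For $\nu=2$ in SYM we have $f_{i,j}=\sum_{k=1}^H u_{k,i}u_{k,j}$, i.e.\ $f=U^\top U$ with $U=(u_{k,i})\in\RR^{H\times p}$. Differentiating gives
\[
\frac{\partial f_{i,j}}{\partial u_{k,l}}=\delta_{i=l}u_{k,j}+\delta_{j=l}u_{k,i}.
\]
Substituting this into $\Theta_{i,j;i',j'}=\sum_{k,l}\frac{\partial f_{i,j}}{\partial u_{k,l}}\frac{\partial f_{i',j'}}{\partial u_{k,l}}$ and carrying out the sums over $l$ (which kill the Kronecker deltas) and over $k$ (which produce entries of $U^\top U$) should yield the identity
\[
\Theta_{i,j;i',j'}=\delta_{i=i'}f_{j,j'}+\delta_{i=j'}f_{j,i'}+\delta_{j=i'}f_{i,j'}+\delta_{j=j'}f_{i,i'}.
\]
This identity holds at every time $t$ along the flow, since it uses only the parametrization and not the dynamics.

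From this identity the direction ``$f(0)=\lim_{t\to\infty}f(t)\Rightarrow\Theta(0)=\lim_{t\to\infty}\Theta(t)$'' is immediate. The map $\mathcal A: f\mapsto\Theta$ is linear with constant coefficients, hence continuous. So if $f(t)$ converges to $f(0)$, then $\Theta(t)=\mathcal A f(t)$ converges to $\mathcal A f(0)=\Theta(0)$.

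For the converse I will show that $\mathcal A$ is injective on symmetric $p\times p$ matrices and write an explicit linear left inverse. Contracting $i=i'$ and summing over $i$ gives
\[
\sum_i\Theta_{i,j;i,j'}=(p+2)f_{j,j'}+\delta_{j=j'}\operatorname{tr}f.
\]
A further trace over $j=j'$ gives $(2p+2)\operatorname{tr}f$. These two contractions determine first $\operatorname{tr}f$ and then every entry $f_{j,j'}$, through a fixed linear map $\mathcal B$ with $\mathcal B\mathcal A=\mathrm{id}$. Consequently, if $\Theta(t)\to\Theta(0)$, then $f(t)=\mathcal B\Theta(t)\to\mathcal B\Theta(0)=f(0)$. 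This shows that the two limits exist simultaneously and that the two equalities are equivalent.

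The only step needing care is the index bookkeeping in the NTK computation, in particular the four cross terms. Each of them must come out as a single entry of $f$, which works because $f$ is symmetric. I would also state explicitly that both sides of the equivalence include existence of the limit, and that this transfers through the continuous linear maps $\mathcal A$ and $\mathcal B$.
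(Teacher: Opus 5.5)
Your proof is correct. The NTK identity $\Theta_{i,j;i',j'}=\delta_{i=i'}f_{j,j'}+\delta_{i=j'}f_{j,i'}+\delta_{j=i'}f_{i,j'}+\delta_{j=j'}f_{i,i'}$ and the easy direction are exactly the paper's.

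The converse is where your route differs. The paper does not construct a full inverse. It takes pairwise distinct $i,j,j'$ and reads off $\Theta_{i,j;j,j'}=f_{i,j'}$ from a single kernel entry, then compares $t=0$ with $t\to\infty$. That is more elementary, but as written it only recovers the off-diagonal entries of $f$, and it tacitly needs $p\ge 3$. The diagonal would need an extra observation such as $\Theta_{i,i;i,i}=4f_{i,i}$.

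Your partial-trace inversion avoids both issues. It uses
\[
\operatorname{tr} f=\frac{1}{2p+2}\sum_{i,j}\Theta_{i,j;i,j},
\]
\[
f_{j,j'}=\frac{1}{p+2}\Bigl(\sum_i\Theta_{i,j;i,j'}-\delta_{j=j'}\operatorname{tr} f\Bigr).
\]
This recovers every entry of $f$ uniformly, for all $p\ge 1$. It also makes explicit that the existence of the limit transfers through fixed linear maps, at the cost of a slightly longer computation.
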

This means that if the NTK at the end of training is the same as at the beginning, the model does not learn anything.

\section{SYM $\nu=2$: general solution [\ref{sec:sym_nu2}]}\label{sec:nu2}
\begin{figure}
    \centering
    \includegraphics[width=0.65\linewidth]{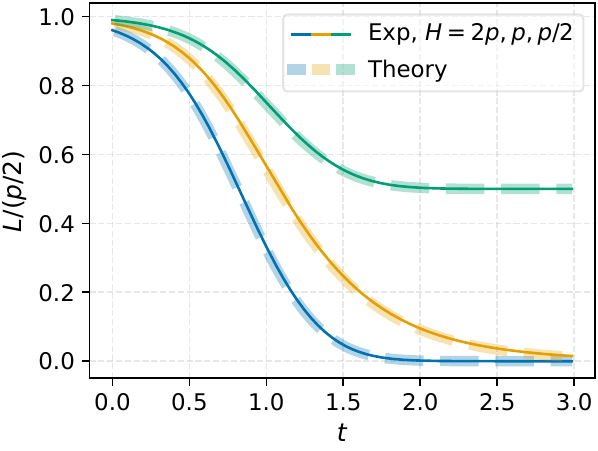}
    \caption{Experimental confirmation of theoretical predictions (eq. \eqref{eq:loss-narayana}) for the loss evolution in SYM $\nu=2$. Experiments were performed for a fixed $p=512$ and varying $H=1024,512,256$.}
    \label{fig:nu2-sym-gen}
\end{figure}
As a showcase of complex and non-lazy dynamics solvable by the general method of Section \ref{sec:gensol} we give the solution of the SYM $\nu=2$ model with the identity target \eqref{eq:idtarget}:

{
\begin{align}\label{eq:loss-narayana}
\mathbb E[L(t)]\sim{}& \frac{p}{2}+p^2\sigma^2\Psi(-t/T, H/p, p\sigma^2),\\
\Psi(x,y,z)
={}&\frac{ze^{-8x}\partial_1 h(z(1-e^{-4x}),y)}{2}\label{eq:loss-narayana2}
\\-&e^{-4x}h(z(1-e^{-4x}),y),\nonumber\\
h(z,y)={}&\frac{1-z(y+1) - \sqrt{1-2z(y+1)+z^2(y-1)^2}}{2z^2}.\nonumber
\end{align}} 

The solution agrees very well with experiment: see Fig. \ref{fig:nu2-sym-gen}. Despite the small-$t$ nature of the loss expansion, the solution correctly predicts the limiting loss $\lim_{t\to+\infty} \mathbb E[L(t)]=\max(\tfrac{p-H}{2}, 0)$ corresponding to the optimal approximation of the identity tensor by rank-$H$ decompositions (namely, using one of the $H$ terms to fill one diagonal component). 

\section{SYM $\nu=4$: gradient ascent [\ref{sec:nu4-app}]}\label{sec:nu4}
\begin{figure}
    \centering
    \includegraphics[width=0.99\linewidth]{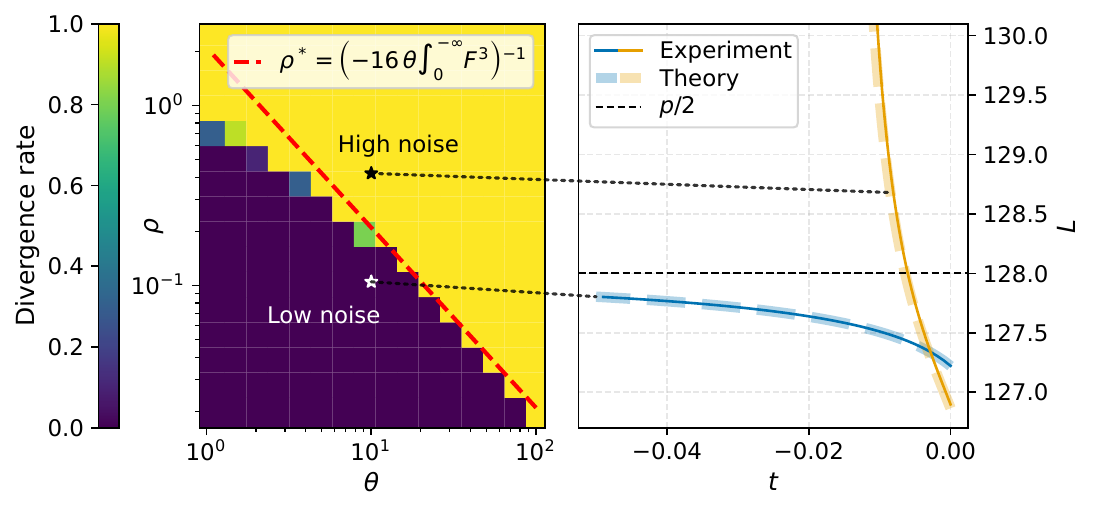}
    \caption{SYM $\nu=4.$ {\bf Left:} experimental confirmation of the boundary \eqref{eq:nu4-threshold}. Gradient ascent divergence rates were obtained in 10 independent experiments for different values of $\rho=p^3\sigma^4, \theta = 1+3H/p^2$ with a fixed $p=64$. {\bf Right:} typical examples of experimental curves ($p=256$) together with theoretical predictions (eq. \eqref{eq:nu4-loss}) for both \emph{high-noise} and \emph{low-noise} scenarios.}
    \label{fig:nu4-ascent}
\end{figure}
\paragraph{Solution.} For SYM $\nu = 4$ with target \eqref{eq:idtarget}, the method of Section \ref{sec:gensol} equipped with Borel summation yields
{
\begin{align}\label{eq:nu4-loss}
    \EE [L(t)] &\sim \frac{p}{2}+p^{2} g\big(-t/T,H/p^{2}\big),\\
    g(-\tau,y)
    &= \frac{y \theta}{2}\big(p\sigma^2\phi(\tau)F(\sigma^2\psi(\tau))\big)^4\nonumber
    \\&-\frac{y}{8} p \sigma^4 \phi(\tau)^2 F'(\sigma^2\psi(\tau)),\nonumber
\end{align}  
}\noindent
where $\theta = 1 + 3y$, $F(a)=\int_0^\infty e^{4au^2-u} u du,$
and $\phi,\psi$ are determined by the first integral
\begin{align}\label{eq:nu4-1st-int}
\phi(\tau)^{-2} &= 1+16 \theta p^3 \sigma^4 \int_{0}^{\sigma^2\psi(\tau)} F(u)^3 du,\\
\psi'(\tau)&=\phi(\tau), ~\psi(0)=0, ~\phi(0)=1. \nonumber
\end{align}
However, note that this solution is only applicable when $t<0$, i.e. for \emph{gradient ascent}. Indeed, $F$ diverges unless $\sigma^2\psi(\tau) \le 0$, and $\psi(\tau)=\int_0^\tau \phi$ has the same sign as $\tau$. We leave discussion of gradient descent ($t>0$) to another publication. For gradient ascent, our solution yields a nontrivial quantitative prediction of two regimes -- with or without divergence -- confirmed by experiments (see Figure \ref{fig:nu4-ascent}).

\paragraph{Gradient ascent: two regimes.} Since $F(a)>0$ for $\sigma^2\psi\le 0$, we have $\int_0^{\sigma^2\psi}F(u)^3du\le 0$ and the condition for the first integral \eqref{eq:nu4-1st-int} may fail to remain non-negative at sufficiently negative $\psi$. Therefore, if we require the solution to exist for all $\tau\le 0$, we get a condition on weight scaling $\sigma$:
\begin{equation}\label{eq:nu4-threshold}
p^3\sigma^4 < \rho^* = \left(-16\theta \int_0^{-\infty} F(u)^3du\right)^{-1}.
\end{equation}
Thus, we identify two regimes:

\begin{enumerate}
    \item \emph{Low-noise}: if \eqref{eq:nu4-threshold} holds, then $g(-\tau, y) \asymp -\tfrac{1}{\tau^2}, ~\tau\to-\infty$,
    and the loss \eqref{eq:nu4-loss} converges to $p/2$ as $t \to -\infty$.
    \item \emph{High-noise}: if \eqref{eq:nu4-threshold} is violated, then $\phi=\psi'$ blows up at a finite $\tau_{\rm crit}<0$ and
    \begin{equation}
    g(-\tau,y) \asymp |\tau-\tau_{\rm crit}|^{-4/3}, ~\tau\downarrow\tau_{\rm crit},
    \end{equation}
    which is in agreement with the divergence for negative times in the free evolution and matching the divergence exponent $-\nu/(\nu-1)=-4/3$ (see Sec. \ref{sec:free-evo}).
\end{enumerate}

\section{Discussion}

\paragraph{Theoretical predictions.}

While our paper  primarily aims to develop a new general mathematical framework, our approach led to several specific and non-obvious new theoretical predictions confirmed by experiments, for example:

\begin{enumerate}
    \item The NTK regime is present in the CP decomposition problem in the asymmetric setting for any order $\nu$, but missing from the symmetric setting with even orders.
    \item In the free evolution of the overparameterized model, the loss falls off as a power law  in the symmetric, even-$\nu$ setting, but exponentially in the asymmetric setting.
    \item Gradient ascent (gradient flow for $t<0$, ``unlearning'') of the loss may diverge or converge to a finite value; for  target \eqref{eq:idtarget} we have found an explicit analytic condition.  
\end{enumerate}

\paragraph{Generalizations.}
Although our paper focuses on a specific model and target, the methodology we present here is by no means restricted to this setting.
As explained in \cref{sec:lossexp}, we can consider any targets representable with (hyper)graphs: the modular addition target $F_{i_1,\ldots,i_\nu} = \delta_{i_\nu = \sum_{m=1}^{\nu-1} i_m \mod p}$ is an important example.
We can as well consider any model representable as a graph; this includes multi-layer nets with polynomial activations.

Although our model's inputs were one-hot vectors, our method naturally covers finite Gaussian datasets.
This allows us to study \emph{generalization} by comparing performance on a given dataset with that on the distribution.
Applying hypergraph targets, we can also study generalization on datasets of one-hot vectors.
This paves the way for studying complex phenomena such as {grokking} in modular addition.

\paragraph{Rigorous justification.} Our framework in its present form is not mathematically complete: there are important questions that we leave open for the moment. In particular, while we have shown and experimentally confirmed in several scenarios that the formal power series in the large-size limit can be successfully used to deduce the phase structure and derive explicit solutions of GF by a suitable summation method, these procedures may not be applicable in general. Rigorous conditions of applicability and related convergence questions are certainly an important topic of future research. 

These questions closely parallel similar questions for Feynman diagram expansions in quantum field theory and statistical physics \citep{feynman1948space}: while conceptually and computationally very convenient and present in most textbooks, these expansions are notoriously hard to rigorously justify. Typically they are only considered term-wise or summed by heuristically selecting most relevant components \citep{mattuck_guide_1992}. Rigorous constructions of interacting theories are normally performed by other methods such as cluster expansions; only after the theory is rigorously constructed it can sometimes be connected to a Feynman diagram expansion \citep{rivasseau2009constructive}. Some theories have never been rigorously constructed: for example, this question for the Yang–Mills theory is one of the six unsolved Millennium problems \citep{JaffeWitten:2000}.

\section*{Acknowledgments}
The research was supported by the Russian Science Foundation grant No.~25-11-00355,\\\url{https://rscf.ru/project/25-11-00355/}.

\section*{Impact Statement}


We believe that our paper introduces a novel and important theoretical perspective on learning dynamics in large machine learning problems.

\bibliography{ICLR2026/references}
\bibliographystyle{icml2026}

\newpage
\appendix
\onecolumn

\section{Literature survey}
\label{sec:literature_overview}

Our work is not based directly on any of the existing works we are aware of.
Nevertheless, it draws connections with numerous topics, namely: (1) modular arithmetic as a supervised learning problem, (2) infinitely wide networks, (3) gradient descent dynamics on linear networks, (4) matrix factorization, (5) diagrammatic methods in Deep Learning, and (6) parameter expansions.
We overview the first topic in \cref{app:modular_addition} and the remaining topics below.

\subsection{Infinitely wide networks}
\label{sec:literature_overview_infinite_width}
\paragraph{Neural Tangent Kernel.}

A pioneering work of \cite{jacot2018neural} demonstrated that under a specific parameterization and in the limit of infinite width, the process of training a fully-connected neural network is equivalent to kernel gradient descent for a specific kernel coined \emph{Neural Tangent Kernel}, or NTK.
The main advantage of this equivalence is that kernel methods are much better understood compared to neural nets in general.
The main disadvantage is that the required parameterization is non-standard; because of this, the equivalence does not hold for standardly parameterized neural nets.
Since a kernel method could be seen as a linear model in a large (but fixed) feature space, the absence of a kernel method equivalence means that neural networks, in fact, do learn features during training.
This fact is believed to explain why neural nets perform better in some tasks compared to conventional kernel methods.
Two of the limit regimes we identify in our setting, namely, that corresponding to Point C and Edge B-C in the asymmetric scenario (see \cref{fig:polygon_both}, right), correspond to the constant NTK limit of \cite{jacot2018neural}.
Indeed, Point C corresponds only to diagrams in the loss expansion resulting from merging exactly two sub-diagrams $R_\nu$ together with any number of sub-diagrams $D_{2\nu}$.
This implies that the loss at any time step $t$ depends only quadratically on the target.
This is possible only when the learned weights depend linearly on the target.
See \cref{app:linearized_model_equivalence} for further details.

\paragraph{Mean-field limit.}

A number of works demonstrate that the gradient flow dynamics on a two-layer MLP is equivalent to an evolution of a measure: \cite{chizat2018global,rotskoff2018neural,sirignano2020mean}.
Under a specific parameter scaling, the moments of this measure stay finite as the number of neurons in the hidden layer grows.
Thanks to this fact, a limit of infinite width comes naturally.
Note that the required parameterization is different from that required for the NTK equivalence of \cite{jacot2018neural}.
While the scaling leading to the NTK regime corresponds to Edge B-C on the hyperparameter polygon, that leading to the mean-field regime corresponds to Edge C-D in the asymmetric case, and Edge B-E in the symmetric one.
See \cref{sec:hyperparameter_polygon_detailed} for further details.

The name ``mean-field'' comes from the fact that the measure interacts with itself not directly, but through interacting with a scalar variable (namely, the loss) which it forms itself.
This is in line with a mean-field approximation in particle physics, where particles are assumed not to interact with each other directly, but to interact with a field formed by these particles collectively (e.g. charged particles collectively form a magnetic field and interact with it).

The main advantage of the mean-field limit is that it allows for feature learning (that is, the associated NTK changes throughout training).

The main disadvantage of the mean-field limit is that a measure evolution is much trickier to deal with compared to a kernel machine.

Another big disadvantage of the mean-field limit is that the constructions of the above-mentioned works do not allow for a straightforward generalization to MLPs beyond two layers.
Nevertheless, several works propose various formulations of the gradient flow in terms of a measure evolution for deep nets: see \cite{araujo2019mean,nguyen2019mean,sirignano2022mean,nguyen2023rigorous}.
We note also a $\mu P$ limit \citep{yang2021tensor_iv} which is well-defined for deep nets and coincides with the mean-field one for two-layered nets, but does not allow for a physical interpretation in terms of measures as above.

\paragraph{Classification of infinite-width regimes.}

Note that the network parameterizations required for the NTK limit and the $\mu P$ limit are different and lead to qualitatively different behavior: in the former case, the model becomes equivalent to a kernel method, hence does not learn features, while in the latter case it provably does \citep{yang2021tensor_iv}.
Are there any other meaningful infinite-width regimes?
One of the contributions of the present work is a classification of infinite $p$ and $H$ regimes for identity tensor decomposition: see \cref{th:pareto_front} visualized with a hyperparameter polygon of \cref{fig:polygon_both}.
A few earlier works \citep{golikov2020dynamically,golikov2020towards} classified all possible infinite-width limits (in our terms, only $H$ is infinite) for a two-layer net.
It follows from their classification that all other limits are in some sense degenerate.
Later, \cite{yang2021tensor_iv} provided an alternative classification for multi-layer nets.

\subsection{Gradient descent dynamics on linear networks}
\label{sec:literature_overview_linear_nets}
In the matrix case, one could think of the problem we study as training a linear network $f_{U,V}(x) = V^\top U x$ to fit an identity map.
Linear networks, i.e. MLPs with identity activation functions, attracted a lot of attention during recent years.
The reason is that even though they cannot express nonlinear maps, their training dynamics is nonlinear and might exhibit features which are typical for nonlinear nets, but could not be observed for a one-layer linear model.

In contrast to generic non-linear nets, the training dynamics of linear ones could be solved in several important cases which we review below.
One of the main contributions of the present work is explicit solutions for our tensor decomposition model in various limit regimes, see \cref{sec:gensol} together with subsequent sections in the main.
We underline that our model is a linear net with one hidden layer for $\nu=2$, but it is not a linear net for $\nu \geq 3$.

The study of gradient descent dynamics for multi-layer linear nets dates back to a seminal work of \cite{fukumizu1998effect} who presented an exact analytic solution for gradient flow under a crucial assumption of \emph{balanced initialization}.
This assumption is quite specific and not satisfied by a random Gaussian initialization.
Their solution was later revised by \cite{braun2022exact} and generalized to $\lambda$-balanced initializations by \cite{domine2024lazy}.

Whereas the balancedness assumption of \cite{fukumizu1998effect} does not depend on data, an alternative solution proposed by \cite{saxe2013exact} assumed the initial linear map to be aligned with the eigenvectors of the feature correlation matrix of the training dataset.
If this is the case, the above alignment property holds throughout the whole training process, and the whole parameter evolution separates into independent scalar evolutions.

While the above assumption is neither automatically satisfied for standard initialization strategies, it is trivially satisfied for zero initialization.
Though zero is a saddle point from which the training process cannot start, one can hope that initializing near the origin also approximately keeps the alignment property.
\cite{saxe2013exact} empirically observed this claim, while \cite{braun2022exact} proved it for balanced initializations.

We underline that the explicit solutions we present in \cref{sec:gensol} hold for a generic Gaussian initialization without any additional constraints.

\paragraph{Saddle-to-saddle regime.}

An interesting feature of aligned near-zero initialization is that in this case, the gradient flow empirically exhibits a peculiar dynamics.
It starts near a saddle point (the origin), stays there for a while, then escapes it until it gets stuck near another saddle point, where the process repeats.
The training process ends when the gradient flow arrives at a minimum instead of a saddle.
Each saddle corresponds to learning a finite number of strongest principal components of the data.
Hence, the data are learned sequentially.
This regime is coined \emph{saddle-to-saddle} and is studied in a number of works, including \cite{li2020towards,jacot2021saddle}.

Since we consider only identity targets, all our principal components have equal strength, and are learned at the same time.
Therefore the only saddle we encounter during the optimization process is the one in the origin.
However, we claim that our analysis generalizes easily to a number of other targets, including $F^0_{i_1,\ldots,i_\nu} = \delta_{i_1 = \ldots = i_\nu = 1}$, which corresponds to a tensor with a single ``one'' on a diagonal with all the remaining entries being zero, and a mix of the latter and the identity target: $F = F^I + \alpha F^0$ for some $\alpha > 0$ possibly depending on $p$ and $H$.
In the latter case, the target has two distinct eigenvalues: 1 and $1 + \alpha$.
Here, we expect to observe a non-trivial saddle-to-saddle dynamics, for which the strong component is learned first (it corresponds to the first non-trivial saddle in the weight evolution), while the rest are learned afterwards.
\subsection{Applications of diagrams to Deep Learning theory}
\label{sec:literature_overview_diagrams}
Diagrams turn out to be a handy tool we use in our analysis to get leading terms of loss time derivatives for large $H$ and $p$; see examples in \cref{fig:diagrams} and the discussion of our approach in \cref{sec:largesize}.
Diagrammatic approaches akin to Feynman diagrams in particle physics turned out to be a natural tool for studying small deviation corrections for limits of certain quantities.
In Deep Learning, the most well-studied limit is the infinite width limit in the NTK regime (see the discussion above).

Before proceeding with a discussion of works that compute finite-width corrections for limit NTKs using diagrammatic approaches, we underline that in contrast to expanding quantities around infinite width, we expand them around zero training time.
Surprisingly, our expansions match very well with numerical experiments even for very large training time: see \cref{fig:free-main,fig:nu2-sym-gen,fig:app-nu2}.

\paragraph{Finite-width corrections to NTK.}

The work of \cite{dyer2019asymptotics} considers so-called \emph{correlation functions}, which are expectations of products of derivative tensors of a given scalar function $f$.
NNGP, NTK, as well as higher-order kernels and kernel time-derivatives could be expressed as correlation functions.
The main result is a simple upper-bound for an exponent of a correlation function as a function of width $n$.

This bound is computed by counting the numbers of even-sized and odd-sized components in the corresponding \emph{cluster graph}.
Vertices of this graph correspond to derivative tensors, while edges mark that the two tensors are tied with derivatives.
While the definition of correlation functions is agnostic to the structure of $f$, the exponent upper-bound crucially relies on the structure of $f$ being an NTK-parameterized fully-connected network.

The bound is proven in some scenarios (e.g. linear nets), but empirically holds for all scenarios considered (including Tanh and ReLU nets).
Feynman diagrams \emph{are not} cluster graphs; they are used merely as a tool for proving the bound for linear nets.
That is, for correlation functions not involving any derivatives (that is, $\mathbb E[f(x_1),\ldots,f(x_m)]$), the corresponding Feynman diagrams are graphs with vertices corresponding to inputs $x_1,\ldots,x_m$, and edges corresponding to paired weights of the same layer.
The pairing is given by Wick's formula.
That is, Feynman diagrams encode pairings involved in Wick's formula.
Each vertex has $L$ incident edges, where $L$ is the number of layers.
When the correlation function involves derivatives, the two $f$'s contracted by the derivative act as a forced edge in the same type of diagrams.

In \cite{dyer2019asymptotics}, Feynman diagrams encode pairings in Wick's formula, while vertices encode different model inputs.
In our work, these pairings are encoded by diagram contractions, while vertices encode common summation indices for adjacent weight matrices.

Higher-order derivatives we use in the loss expansion could be directly mapped to cluster graphs.
However,
\begin{enumerate}
   \item Initial weight variance $\sigma^2$ is a free parameter, dependence on which we study.
   \item There are two non-equivalent notions of ``width'', which are $p$ and $H$.
   \item We are interested not only in the leading term exponents, but also in exact constants in the leading terms, in their dependence on $s$ specifically.
   This is because we are looking for exact loss behavior for large $t$.
\end{enumerate}
This is why the results of \cite{dyer2019asymptotics} do not suffice in our case.

A follow-up work of \cite{aitken2020asymptotics} aims to prove the main exponent bound of \cite{dyer2019asymptotics} for nets with polynomial activation functions.
As before, the main theorem is formulated in terms of cluster graphs.
However, Feynman diagrams are no longer used as a proof technique.
Instead, they use ``tree-like structures'' (actually, forests), where vertices correspond to indices of weight matrices (that is, input and output weight matrices are mapped to a single vertex, while intermediary weight matrices are mapped to a pair of vertices), while edges mark the fact that adjacent weight matrices are tied with the corresponding summation index.
Pairings defined by Wick's formula are represented also as edges (as a pairing essentially ties the indices).
These tree-like structures are more similar to diagrams in our work, but are still not the same thing.

\cite{andreassen2020asymptotics} is another follow-up work.
It aims to generalize the results of \cite{dyer2019asymptotics} to networks with convolutions, global average poolings, and skip connections.
Their main conjecture is very similar to that of \cite{dyer2019asymptotics} but slightly more general: they consider \emph{mixed correlation functions} for which derivative tensors are taken for different functions sharing weights (this is needed to model convolutions).
The proof technique also relies on Feynman diagrams as in \cite{dyer2019asymptotics}.
\subsection{Parameter expansions in Deep Learning}
\label{sec:literature_overview_expansions}

Recall that we expand the loss function as a function of time $t$: $L(t) = \sum_{k=0}^\infty \tfrac{d^k L(0)}{dt^k} \tfrac{t^k}{k!}$, see \cref{sec:largesize}.
A similar expansion appeared earlier in \cite{dyer2019asymptotics} for the NTK: $\Theta(t) = \sum_{k=0}^\infty \tfrac{d^k \Theta(0)}{dt^k} \tfrac{t^k}{k!}$.
The subsequent derivatives in that work were computed with a recursive formula.
A parallel work of \cite{huang2020dynamics} expresses this recurrence as an infinite system of ODEs, where the time derivative of $\Theta$ is expressed using a higher-order kernel, whose derivative in turn is expressed with an even higher-order one.
\section{Loss expansion and diagram calculus}\label{app:diagcalc}
\paragraph{Polynomials and diagrams.} We give now more details on the polynomials and associated diagrams introduced in Section \ref{sec:lossexp}. In the context of our CP decomposition problem \eqref{eq:f_F_def}, it is natural to generally define a diagram as a hypergraph $G=(V,E)$. Here $V$ is the set of nodes, and it consists of two subsets, $V=V_p\cup V_H,$ representing $p$-nodes and $H$-nodes. The $p$-nodes describe the ``external'' degrees of freedom, indexing the target, while the $H$-nodes describe the internal degrees of freedom in the model. 

The set $E=E_u\cup E_F$ consists of the set $E_u$ of \emph{model weight edges} and the set $E_F$ of \emph{target hyper-edges}. A weight edge $u_{k,i}^{(m)}\in E_u$ connects an $H$-node $k\in V_H$ to a $p$-node $i\in V_p$. The index $m$ (\emph{color}) is a property of the edge and can take values $1,\ldots,\nu$. In general, there may be many edges $u_{k,i}^{(m)}$, even with the same color $m$, between two given nodes $k$ and $i$ (i.e., $G$ is a \emph{multi}-hyper-graph). 

An element of the set $E_F$ is a hyper-edge $F_{i_1,\ldots,i_\nu}$ connecting $\nu$ $p$-nodes $i_1,\ldots, i_\nu\in V_p$. Again, there may be more than one hyper-edge connecting the same $p$-nodes.  

To each diagram $G$ we associate a polynomial in the model weights which we call \emph{diagram polynomial} and, by slightly abusing notation, denote by the same letter $G$. For clarity, let us enumerate all vertices and (hyper)-edges in $G$:
\begin{align}
    V_p={}&\{i_1,\ldots,i_Q\},\\
    V_H={}&\{k_1,\ldots,k_N\},\\
    E_u={}&\{u_{k^{(r)}, i^{(r)}}^{(m^{(r)})},r=1,\ldots,|E_u|\},\quad k^{(r)}\in V_H, i^{(r)}\in V_p,\\
    E_F={}&\{F_{i^{(r,1)},\ldots,i^{(r,\nu)}},r=1,\ldots,|E_F|\},\quad i^{(r,s)}\in V_p.
\end{align}
Then the polynomial $G$ is defined by
\begin{equation}\label{eq:gendiagpoly}
    G=\sum_{i_1=1}^p\ldots\sum_{i_Q=1}^p\sum_{k_1=1}^H\ldots\sum_{k_N=1}^H \Big(\prod_{r=1}^{|E_u|}u_{k^{(r)}, i^{(r)}}^{(m^{(r)})}\Big)\Big(\prod_{r=1}^{|E_F|}F_{i^{(r,1)},\ldots,i^{(r,\nu)}}\Big)
\end{equation}
In other words, each node implies summation over the respective index, and for each assignment of node indices the respective monomial is formed by multiplying over all (hyper-)edges the model weights and target values associated with respective node indices. (Here we also slightly abuse notation and use $i_r, k_r$ to denote both a node in the hyper-graph and a respective summation index.) 

There are two key reasons for these definitions of diagrams and associated polynomials: 
\begin{enumerate}
    \item All three terms of the loss expansion \eqref{eq:loss3terms} are expressible in the above form:
        $$
    \begin{array}{ccc}
    L=\dfrac{1}{2}\underbrace{\sum_{i_1,\ldots,i_\nu=1}^p \sum_{k=1}^H \sum_{k'=1}^H \prod_{m=1}^\nu u_{k,i_m}^{(m)} u_{k',i_{m}}^{(m)}}_{\text{model self-interaction }\textcolor{blue}{D_{2\nu}}}
    &-\underbrace{\sum_{i_1,\ldots,i_\nu=1}^p \sum_{k=1}^H  \prod_{m=1}^\nu u_{k,i_m}^{(m)}F_{i_1,\ldots,i_\nu}}_{\text{model-target interaction }\textcolor{red}{R_\nu}}&+\underbrace{\frac{1}{2}\sum_{i_1,\ldots,i_\nu=1}^p F_{i_1,\ldots,i_\nu}^2}_{\text{target self-interaction}}\\
    \includegraphics[clip, trim=20mm 14mm 18mm 4mm, scale=0.2]{L6.pdf} & \includegraphics[clip, trim=20mm 14mm 18mm 14mm, scale=0.2]{D3_F.pdf} &
    \end{array}
$$
(the last term does not involve the model weights and corresponds to an empty diagram). 
    \item Given two diagram polynomials $G, G'$, the expression $\sum_{u}\frac{\partial G}{\partial u}\frac{\partial G'}{\partial u}$, where summation is over all model weights, is also a diagram polynomial (see discussion of \emph{merging}  below). 
\end{enumerate}
These properties ensure, by Eqs. \eqref{eq:lossexp}, \eqref{eq:dgdt}, that all coefficients $d^s L/dt^s(0)$ in the loss expansion are expressible as linear combinations of diagram polynomials. 

\paragraph{The identity target.} The above description may be simplified for some specialized targets. In particular, in the case of the ``identity'' target \eqref{eq:idtarget}, i.e., $F_{i_1,\ldots,i_\nu}=\delta_{i_1}=\ldots= i_\nu$, we can consider only usual graphs rather than hyper-graphs, since the hyper-edges are associated with the target tensor $F$. In this case the model-target interaction term 
\begin{equation}
    \begin{array}{c}R_\nu=\sum_{i_1,\ldots,i_\nu=1}^p \sum_{k=1}^H  \prod_{m=1}^\nu u_{k,i_m}^{(m)}F_{i_1,\ldots,i_\nu}\\
    \includegraphics[clip, trim=20mm 14mm 18mm 14mm, scale=0.2]{D3_F.pdf}
    \end{array}
\end{equation}
simplifies to
\begin{equation}\label{eq:rnuidtarget}
\begin{array}{c}R_\nu=\sum_{i=1}^p \sum_{k=1}^H  \prod_{m=1}^\nu u_{k,i}^{(m)},\\
\includegraphics[clip, trim=20mm 14mm 18mm 14mm, scale=0.2]{L3.pdf}
\end{array}
\end{equation}
not involving hyper-edges. 

\paragraph{Diagram merging.} To see why property 2 above holds, note that the model weights $u$ in question are the variables $u_{k,i}^{(m)}$. Computing a derivative $\partial G/\partial u_{k,i}^{(m)}$ for a polynomial requires locating each weight edge of the diagram with matching color $m^{(r)}=m$, setting $k^{(r)}=k, i^{(r)}=i$ (i.e. removing summation at the two nodes of the edge), and removing the edge variable $\partial u_{k,i}^{(m)}$ from the product. When we perform summation over $k,i$ in $\sum_{u}\frac{\partial G}{\partial u}\frac{\partial G'}{\partial u}$, we restore summation over the node indices, but now they are shared between the two diagrams. This is exactly the \emph{merging operation} $\star$ introduced in Section \ref{sec:lossexp}. Let us summarize its action:
\begin{enumerate}
    \item Given diagram polynomials $G_1$ and $G_2,$ the merger $G_1\star G_2$ is a sum of new diagram polynomials $G$. 
    \item Each of these $G$ is obtained as follows. We choose a color $m$ and locate an edge $g_1=(k_1,i_1)$ of this color in $G_1$ and an edge $g_2=(k_2,i_2)$ of this color in $G_2$. We remove these two edges and identify the node $k_1$ with $k_2$ and the node $i_1$ with $i_2$.
    \item Merging is extended bilinearly to linear combinations of diagrams.
\end{enumerate}

Examples:
\begin{itemize}
    \item $D_{6}\star D_{6}$

    \vspace{-10mm}
    $$
    \includegraphics[clip, trim=20mm 14mm 18mm 4mm, scale=0.2, valign=c]{L6.pdf}  \star   \includegraphics[clip, trim=20mm 14mm 18mm 4mm, scale=0.2, valign=c]{L6.pdf} = 4\includegraphics[clip, trim=20mm 13mm 18mm 14mm, scale=0.2, valign=c]{L6starL6.pdf} + 8\text{ recolored}   
    $$
    \item $D_{6}\star R_{3}$

    \vspace{-10mm}
    $$
    \includegraphics[clip, trim=20mm 14mm 18mm 4mm, scale=0.2, valign=c]{L6.pdf}  \star   \includegraphics[clip, trim=20mm 14mm 18mm 4mm, scale=0.2, valign=c]{L3.pdf} = 2\includegraphics[clip, trim=20mm 13mm 18mm 14mm, scale=0.2, valign=c]{L6starL3.pdf} + 4\text{ recolored}   
    $$
\end{itemize}

\paragraph{Wick's theorem and diagram contractions.} Recall that Wick's theorem states that the expectation $\mathbb E[X_1\cdots X_{2l}]$ of a product of jointly Gaussian variables equals the sum of products $\prod_{r=1}^l \mathbb E[X_rX_{\phi(r)}]$ over all partitions $\{1,\ldots, 2l\}=\sqcup_{r=1}^l \{r, \phi(r)\}$ of the variables into disjoint pairs. In our context, the variables $X$ are the initial model weights $u_{k,i}^{(m)}$. By linearity, expectations of diagram polynomials \eqref{eq:gendiagpoly} reduce to expectations of monomials
\begin{equation}
    \mathbb E\Big[\prod_{r=1}^{|E_u|}u_{k^{(r)}, i^{(r)}}^{(m^{(r)})}\Big].
\end{equation}
Since the weights are represented by edges, this means that we need to consider all \emph{edge pairings} in the diagram $G$ (these only involve the weight edges $E_u$, not the target hyper-edges $E_F$). Since the initial weights are independent, a pairing has a non-vanishing contribution only if the paired edges have a matching color ($m$). Moreover, the expectation $\mathbb E[u_{k,i}^{(m)}u_{k',i'}^{(m')}]=\sigma^2 \delta_{m=m', i=i', k=k'}$ vanishes unless $k=k'$ and  $i=i',$ meaning that we need to impose these additional constraints when summing over the indices.

This shows that $\mathbb E[G]$ can be computed as follows.
\begin{enumerate}
    \item Consider all possible pairings of edges with matching colors.
    \item For each such pairing, identify (``\emph{contract}'') the diagram nodes corresponding to paired edges. (Note that an identification can only involve same-type nodes, i.e. $p$-nodes are never identified with $H$-nodes).
    \item If the resulting diagram is left with $n$ contracted $H$-nodes $\{k_1,\ldots,k_n\}$ and $q$ contracted $p$-nodes, and the number of edges is $2l$, then this pairing contributes  to $\mathbb E[G]$ the term 
    \begin{equation}\label{eq:aterm}
        A = \bigg[\sum_{i_1=1}^p\ldots\sum_{i_q=1}^p\sum_{k_1=1}^H\ldots\sum_{k_n=1}^H \prod_{r=1}^{|E_F|}F_{i^{(r,1)},\ldots,i^{(r,\nu)}}\bigg]\sigma^{2l}.
    \end{equation}
\end{enumerate}
We henceforth refer to the identification of some of the same-type nodes as \emph{diagram contraction}. 

In the special case of the identity target \eqref{eq:idtarget}, assuming that the target hyper-edge has been incorporated into contracted nodes per Eq. \eqref{eq:rnuidtarget} so that the diagram only has weight edges, expression \eqref{eq:aterm} simplifies to $p^q H^n \sigma^{2l}$:
    \begin{equation}\label{eq:aterm1}
        A = \bigg[\sum_{i_1=1}^p\ldots\sum_{i_q=1}^p\sum_{k_1=1}^H\ldots\sum_{k_n=1}^H 1\bigg]\sigma^{2l}=p^q H^n \sigma^{2l}.
    \end{equation}

\textbf{Example:} In the ASYM scenario, $\mathbb E[D_{2\nu}]= p^\nu H \sigma^{2\nu}$, since there is only one valid edge pairing corresponding to the $\nu$ pairs of same-color edges, and this pairing contracts the two $H$-nodes:
\vspace{-3mm}
$$
\includegraphics[clip, trim=20mm 14mm 18mm 4mm, scale=0.2, valign=c]{L6.pdf}
\to
\includegraphics[clip, trim=20mm 14mm 18mm 4mm, scale=0.2, valign=c]{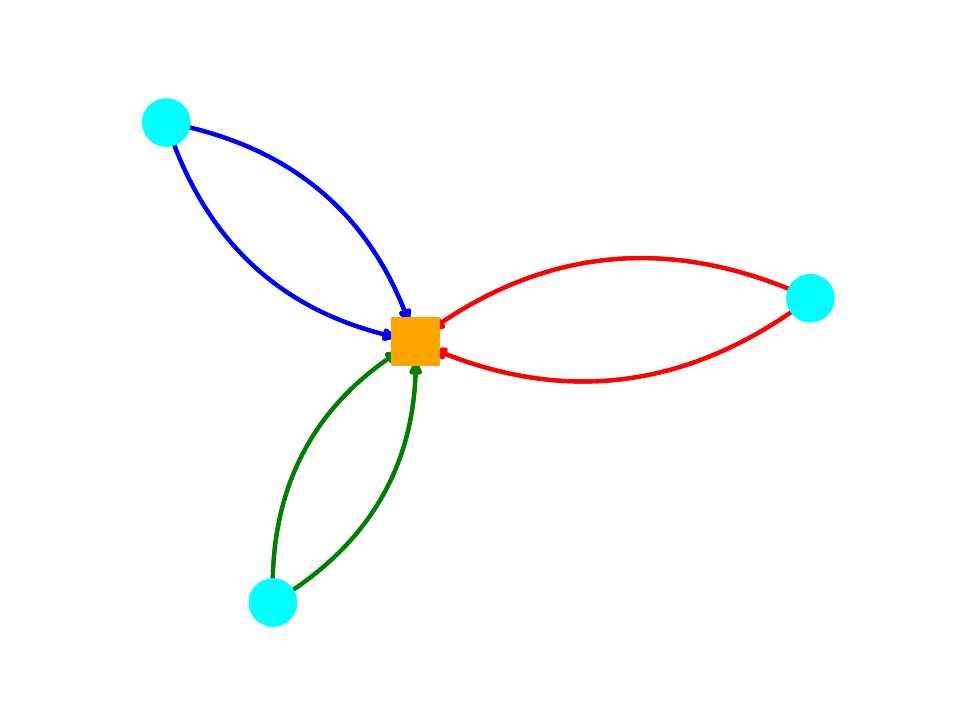}
$$
In SYM, all the edges pairings contribute non-vanishing terms, e.g., at $\nu=2$ and $\nu=3$
\begin{align}
    \mathbb E[D_{4}]={}& (p^2 H+p H^2 +p H) \sigma^{4},\\
\mathbb E[D_{6}]={}& (p^3 +5p^2  +9p) H\sigma^{6}.    
\end{align}
    
\paragraph{Proof of Theorem \ref{th:polycoeff}.} Based on representation \eqref{eq:aterm}, we will prove a more general statement that for any diagram $G$ the expectation $\mathbb E[G]$ is a polynomial in $p,H,\sigma^2$ if the target tensor $F_{i_1,\ldots,i_\nu}$ can be written as a polynomial in $H,p$, indices $i_1,\ldots,i_\nu$ and Kronecker deltas $\delta_{i_a=i_b}$ for $a,b\in\overline{1,\nu}$. This conclusion will then also hold for  $T^s \mathbb E[d^s L/dt^s(0)]=\mathbb E[(\tfrac{1}{2}D_{2\nu}-R_\nu)^{\star(s+1)}]$, which is the expectation of a linear combination of diagram polynomials.  

By linearity, it is sufficient to consider the case when $F_{i_1,\ldots,i_\nu}$ is a monomial rather than polynomial, in $H,p$, indices $i_1,\ldots,i_\nu$ and Kronecker deltas $\delta_{i_a=i_b}$ for $a,b\in\overline{1,\nu}$. Clearly, the powers of $H$ and $p$ simply add the respective powers to the result, and the delta-factors $\delta_{i_a=i_b}$ constrain summation by effectively contracting the nodes. Then, the proof is completed by noting that expressions of the form 
\begin{equation}
    \sum_{i_1=1}^p\ldots\sum_{i_{q'}=1}^p\sum_{k_1=1}^H\ldots\sum_{k_{n'}=1}^H i_1^{s_1}\cdots i_{q'}^{s_q'}k_1^{r_1}\cdots k_{n'}^{r_n'}
\end{equation}
with nonnegative integers $s_1,\ldots, s_{q'}, r_1,\ldots,r_{n'}$ are polynomials in $p$ and $H$.

\section{Connection between modular addition and $\nu=3$ problem}
\label{app:modular_addition}

In this section we discuss a direct connection between our general framework and the modular addition task. This connection can be established in the case $\nu=3$, where the tensor factorization problem can be interpreted as learning modular addition with a simple neural architecture.

\subsection{Modular addition setup}

Consider one-hot inputs $\mathbf x,\mathbf y\in \mathbb{R}^p$, and define a one-hidden-layer bilinear network, where the activation is given by multiplication:
\begin{equation}\label{eq:mul_net}
f_l(\mathbf x,\mathbf y)=\sum_{k=1}^H w_{kl}\Bigg(\sum_{i=1}^p u_{ki}x_i\Bigg)\Bigg(\sum_{j=1}^p v_{kj}y_j\Bigg),
\qquad l=1,\ldots,p,
\end{equation}
where $\{u_{ki}\},\{v_{kj}\}$ are the input weights and $\{w_{kl}\}$ are the output weights. The target function is the modular addition map
\begin{equation}\label{eq:target_add}
f_{*,l}(\mathbf x,\mathbf y) = (\mathbf x * \mathbf y)_l = \sum_{s=1}^p x_s y_{\,l-s}, \qquad l\in \mathbb Z_p,
\end{equation}
and we train with the quadratic loss
\begin{equation}\label{eq:loss_real_space}
L = \frac{1}{2}\,\mathbb E_{\mathbf{x},\mathbf{y}}\sum_{l=1}^p \Big(f_l(\mathbf x,\mathbf y)-f_{*,l}(\mathbf x,\mathbf y)\Big)^2,
\end{equation}
where the expectation is taken over the training dataset.

\subsection{Fourier analysis}

The map \eqref{eq:target_add} is a cyclic convolution in $l$ and therefore diagonalizes under the unitary discrete Fourier transform (DFT) over $\mathbb Z_p$.
We use the unitary DFT matrix $F\in\mathbb C^{p\times p}$ with entries
\begin{equation}\label{eq:DFT-matrix}
F_{rs}=p^{-1/2}e^{-2\pi i rs/p},\qquad r,s\in\{1,\ldots,p\},
\end{equation}
so that $F^\ast F=FF^\ast=I$ (with $F^\ast$ the Hermitian conjugate).
For any vector $\mathbf a=(a_1,\ldots,a_p)^\top\in\mathbb C^p$ we write
\begin{equation}\label{eq:fourier-pair}
\hat{\mathbf a}=F\,\mathbf a,\qquad \check{\mathbf a}=F^\ast\,\mathbf a,
\end{equation}
i.e., componentwise,
\begin{equation}\label{eq:fourier-pair-comp}
\hat a_r=p^{-1/2}\sum_{s=1}^p e^{-2\pi i rs/p}\,a_s,
\qquad
\check a_s=p^{-1/2}\sum_{r=1}^p e^{2\pi i rs/p}\,a_r.
\end{equation}
We use $\hat{\mathbf x}=F\mathbf x$, $\hat{\mathbf y}=F\mathbf y$, $\hat{\mathbf f}=F\mathbf f$ for the Fourier transforms of inputs/outputs.

When taking the DFT along the \emph{output} index $l$ (with hidden index $k$ fixed), we apply \eqref{eq:fourier-pair-comp} to the vector $(w_{k1},\ldots,w_{kp})^\top$:
\begin{equation}\label{eq:W-hat-def}
\hat w_{kl}=p^{-1/2}\sum_{t=1}^p e^{-2\pi i lt/p}\,w_{kt},\qquad l=1,\ldots,p,
\end{equation}
with inverse
\begin{equation}\label{eq:W-inv-def}
w_{kt}=p^{-1/2}\sum_{l=1}^p e^{2\pi i lt/p}\,\hat w_{kl},\qquad t=1,\ldots,p.
\end{equation}
Similarly, to rewrite the linear forms $\sum_{i=1}^p u_{ki}x_i$ and $\sum_{j=1}^p v_{kj}y_j$ in the Fourier basis, we apply the \emph{inverse} DFT $F^\ast$ along the corresponding input indices:
for each $k$, define
\begin{equation}\label{eq:uk-vk-inv-def}
\check u_{kr}:=p^{-1/2}\sum_{i=1}^p e^{2\pi i ri/p}\,u_{ki},
\qquad
\check v_{kq}:=p^{-1/2}\sum_{j=1}^p e^{2\pi i qj/p}\,v_{kj},
\end{equation}
so that, using $F^\ast F=I$,
\begin{equation}\label{eq:linform-fourier}
\sum_{i=1}^p u_{ki}x_i=\sum_{r=1}^p \check u_{kr}\,\hat x_r,
\qquad
\sum_{j=1}^p v_{kj}y_j=\sum_{q=1}^p \check v_{kq}\,\hat y_q.
\end{equation}

\paragraph{Diagonalization of the target.}
Applying the forward DFT (in $l$) to \eqref{eq:target_add} and using changes of variables and sum reordering, we get
\begin{align}
\hat f_{*,l}(\hat{\mathbf x},\hat{\mathbf y})
&= p^{-1/2}\sum_{t=1}^p e^{-2\pi i lt/p}\, f_{*,t}(\mathbf x,\mathbf y)
= p^{-1/2}\sum_{t=1}^p e^{-2\pi i lt/p}\sum_{s=1}^p x_s\,y_{\,t-s}
\label{eq:fourier-target-deriv-1}\\
&= p^{-1/2}\sum_{s=1}^p x_s \sum_{t=1}^p e^{-2\pi i lt/p}\,y_{\,t-s}
\qquad\text{(swap the order of summation)} \label{eq:fourier-target-deriv-2}\\
&= p^{-1/2}\sum_{s=1}^p x_s \sum_{u=1}^p e^{-2\pi i l(u+s)/p}\,y_{\,u}
\qquad\text{(let $u=t-s$; indices are modulo $p$)} \label{eq:fourier-target-deriv-3}\\
&= p^{-1/2}\Bigg(\sum_{s=1}^p x_s\, e^{-2\pi i ls/p}\Bigg)
\Bigg(\sum_{u=1}^p e^{-2\pi i lu/p}\,y_{\,u}\Bigg)
\qquad\text{(factor $e^{-2\pi i ls/p}$)} \label{eq:fourier-target-deriv-4}\\
&= p^{-1/2}\,\big(\sqrt{p}\,\hat x_l\big)\,\big(\sqrt{p}\,\hat y_l\big)
= p^{1/2}\,\hat x_l\,\hat y_l. \label{eq:fourier-target}
\end{align}

Thus the modular addition target is diagonal in the Fourier basis:
\[
\hat f_{*,l}(\hat{\mathbf x},\hat{\mathbf y})= \sqrt{p}\,\hat x_l\,\hat y_l.
\]

\paragraph{Network in the Fourier basis.}
By definition of the DFT of the output,
\[
\hat f_l(\hat{\mathbf x},\hat{\mathbf y})
= p^{-1/2}\sum_{t=1}^p e^{-2\pi i lt/p}\,f_t(\mathbf x,\mathbf y).
\]
Substituting \eqref{eq:mul_net} and interchanging sums yields
\begin{align}
\hat f_l(\hat{\mathbf x},\hat{\mathbf y})
&= \sum_{k=1}^H \Bigg( p^{-1/2}\sum_{t=1}^p e^{-2\pi i lt/p}\,w_{kt}\Bigg)
\Bigg(\sum_{i=1}^p u_{ki}x_i\Bigg)\Bigg(\sum_{j=1}^p v_{kj}y_j\Bigg) \nonumber\\
&= \sum_{k=1}^H \hat w_{kl}\,
\underbrace{\Bigg(\sum_{i=1}^p u_{ki}x_i\Bigg)}_{=\,\sum_{r=1}^p \check u_{kr}\hat x_r}
\underbrace{\Bigg(\sum_{j=1}^p v_{kj}y_j\Bigg)}_{=\,\sum_{q=1}^p \check v_{kq}\hat y_q}
\label{eq:fx-fourier-pre}\\
&= \sum_{k=1}^H \hat w_{kl}\Big(\sum_{r=1}^p \check u_{kr}\,\hat x_r\Big)\Big(\sum_{q=1}^p \check v_{kq}\,\hat y_q\Big),
\label{eq:fx-fourier}
\end{align}
where $\sum_i u_{ki}x_i=\sum_r \check u_{kr}\hat x_r$ follows from $u_k^\top x=(F^\ast u_k)^\top(Fx)$ (and similarly for $v_k$).

\paragraph{Averaging over the full dataset.}
Because $F$ is unitary, Parseval's identity lets us evaluate the loss in the Fourier domain:
\begin{align}
L
&= \frac{1}{2}\,\mathbb E_{\mathbf x,\mathbf y}\sum_{l=1}^p
\Big|\hat f_l(\hat{\mathbf x},\hat{\mathbf y})-\hat f_{*,l}(\hat{\mathbf x},\hat{\mathbf y})\Big|^2
\label{eq:loss-fourier-start-detailed}\\
&= \frac{1}{2}\,\mathbb E_{\mathbf x,\mathbf y}\sum_{l=1}^p
\Bigg|\sum_{k=1}^H\sum_{r=1}^p\sum_{q=1}^p
\hat w_{kl}\,\check u_{kr}\,\check v_{kq}\,\hat x_r\,\hat y_q
\;-\;\sqrt{p}\,\hat x_l\,\hat y_l\Bigg|^2,
\label{eq:loss-fourier-expand0}
\end{align}
where we used \eqref{eq:fx-fourier} and \eqref{eq:fourier-target}. 

\paragraph{Computing the expectations inside the sum.}
Fix $l\in\{1,\dots,p\}$ and set
\[
A_{rq}^{(l)} \;:=\; \sum_{k=1}^H \hat w_{kl}\,\check u_{kr}\,\check v_{kq}\qquad(r,q=1,\dots,p).
\]
Then \eqref{eq:loss-fourier-expand0} reads
\[
\hat f_l-\hat f_{*,l}
\;=\;\sum_{r,q} A_{rq}^{(l)}\,\hat x_r\,\hat y_q \;-\;\sqrt{p}\,\hat x_l\,\hat y_l.
\]
Expand the squared modulus and take the dataset expectation:
\begin{align}
\mathbb E_{\mathbf x,\mathbf y}\Big|\hat f_l-\hat f_{*,l}\Big|^2
&=\mathbb E_{\mathbf x,\mathbf y}\Bigg[
\Big(\sum_{r,q} A_{rq}^{(l)}\hat x_r\hat y_q-\sqrt{p}\,\hat x_l\hat y_l\Big)
\Big(\sum_{r',q'} \overline{A_{r'q'}^{(l)}}\check x_{r'}\check y_{q'}-\sqrt{p}\,\check x_l\check y_l\Big)
\Bigg]\nonumber\\
&=\underbrace{\sum_{r,q}\sum_{r',q'} A_{rq}^{(l)}\overline{A_{r'q'}^{(l)}}\,
\mathbb E[\hat x_r\check x_{r'}]\,\mathbb E[\hat y_q\check y_{q'}]}_{\text{(I)}}\nonumber\\
&\quad-\underbrace{\sqrt{p}\sum_{r,q} A_{rq}^{(l)}\,
\mathbb E[\hat x_r\check x_l]\,\mathbb E[\hat y_q\check y_l]}_{\text{(II)}}\nonumber\\
&\quad-\underbrace{\sqrt{p}\sum_{r',q'} \overline{A_{r'q'}^{(l)}}\,
\mathbb E[\hat x_l\check x_{r'}]\,\mathbb E[\hat y_l\check y_{q'}]}_{\text{(III)}}\nonumber\\
&\quad+\underbrace{p\,\mathbb E[\hat x_l\check x_l]\,
\mathbb E[\hat y_l\check y_l]}_{\text{(IV)}}.
\label{eq:four-terms}
\end{align}
Here we used independence of $\mathbf x$ and $\mathbf y$ to factor the expectations.
Since the dataset is the full Cartesian product with a uniform measure and the Fourier modes are orthogonal:
\[
\mathbb E_{\mathbf x}[\hat x_r\check x_{r'}]=\delta_{rr'},
\qquad
\mathbb E_{\mathbf y}[\hat y_q\check y_{q'}]=\delta_{qq'}.
\]
Therefore each line in \eqref{eq:four-terms} simplifies as follows:
\begin{align*}
\text{(I)}\;&=\; \sum_{r,q}\sum_{r',q'} A_{rq}^{(l)}\overline{A_{r'q'}^{(l)}}\,\delta_{rr'}\delta_{qq'}
\;=\;\sum_{r,q}\big|A_{rq}^{(l)}\big|^2,\\[2mm]
\text{(II)}\;&=\; \sqrt{p}\sum_{r,q} A_{rq}^{(l)}\,\delta_{rl}\,\delta_{ql}
\;=\;\sqrt{p}\,A_{ll}^{(l)},\\[2mm]
\text{(III)}\;&=\; \sqrt{p}\sum_{r',q'} \overline{A_{r'q'}^{(l)}}\,\delta_{lr'}\,\delta_{lq'}
\;=\;\sqrt{p}\,\overline{A_{ll}^{(l)}},\\[2mm]
\text{(IV)}\;&=\; p\cdot 1\cdot 1 \;=\; p.
\end{align*}
Putting the pieces together gives, for each fixed $l$,
\[
\mathbb E_{\mathbf x,\mathbf y}\Big|\hat f_l-\hat f_{*,l}\Big|^2
\;=\;\sum_{r,q}\big|A_{rq}^{(l)}\big|^2 \;-\;2\sqrt{p}\,\Re\!\big(A_{ll}^{(l)}\big)\;+\;p.
\]
Finally, use the elementary identity (valid for any $A_{rq}$)
\[
\sum_{r,q}\Big|A_{rq}-\sqrt{p}\,\delta_{r=q=l}\Big|^2
=\sum_{r,q}|A_{rq}|^2 - 2\sqrt{p}\,\Re(A_{ll}) + p,
\]
to rewrite the previous line as
\[
\mathbb E_{\mathbf x,\mathbf y}\Big|\hat f_l-\hat f_{*,l}\Big|^2
\;=\;\sum_{r,q}\Big|A_{rq}^{(l)}-\sqrt{p}\,\delta_{r=q=l}\Big|^2.
\]
Summing over $l$ and restoring $A_{rq}^{(l)}=\sum_{k}\hat w_{kl}\check u_{kr}\check v_{kq}$ yields

\begin{equation}\label{eq:loss-fourier-simplified}
L=\frac{1}{2}\sum_{l=1}^p\sum_{r=1}^p\sum_{q=1}^p
\Bigg|\sum_{k=1}^H \hat w_{kl}\check u_{kr}\check v_{kq}-\sqrt{p}\,\delta_{r=q=l}\Bigg|^2
=\frac{1}{2}\sum_{i,j,l=1}^p
\Bigg|\sum_{k=1}^H \hat w_{kl}\check u_{ki}\check v_{kj}-\sqrt{p}\,\delta_{i=j=l}\Bigg|^2. 
\end{equation}

\paragraph{Problem simplification.}
The following two simplifications let us connect the modular-addition objective to our general CP setup and keep the notation lightweight.

\begin{enumerate}
\item \textbf{Absorbing the $\boldsymbol{\sqrt{p}}$ factor (w.l.o.g.).}
In \eqref{eq:loss-fourier-simplified} the target carries a factor $\sqrt{p}$. We remove this by the readout reparameterization
\[
\widetilde w_{kl}\;:=\;p^{-1/2}\,\hat w_{kl},
\]
which turns $\sqrt{p}\,\delta_{i=j=l}$ into $\delta_{i=j=l}$. This is a one-to-one change of variables on the readout that leaves the optimization landscape unchanged. For brevity we drop the tilde and continue to write $w$.
\item \textbf{Passing to real-valued weights (surrogate).}
The diagonal expression in \eqref{eq:loss-fourier-simplified} is written in complex Fourier coordinates (with conjugate symmetry when spatial-domain weights are real). One could keep the complex formulation or split into real and imaginary parts, but this introduces additional couplings and heavier notation. To streamline the analysis and align with our general framework, we adopt the following \emph{real}, diagonal surrogate. This surrogate is not mathematically identical to the complex formulation; we proceed under the working assumption that it preserves the key features of the corresponding gradient-flow dynamics.
\end{enumerate}

With these conventions, we get the real-valued tensor factorization objective
\begin{equation}\label{eq:loss-tensor-real}
L_{\mathrm{real}}=\frac{1}{2}\sum_{i,j,l=1}^p\Bigg|\sum_{k=1}^H u_{ki}v_{kj}w_{kl}-\delta_{i=j=l}\Bigg|^2.
\end{equation}
This is exactly the instance of our general setup \eqref{eq:loss} with $\nu=3$ (ASYM); imposing the constraint \eqref{eq:sym} gives the SYM version.

\subsection{Related works}
\label{sec:literature_overview_arithmetic}

The primary motivation for the problem of identity tensor decomposition we study in the present paper is the problem of training a two-layer fully-connected network with a quadratic activation function on modular addition.
We review various instances of modular arithmetic problems in Deep Learning.

\paragraph{Modular arithmetic as a case study for grokking.}

To the best of our knowledge, the problem of modular arithmetic first appeared in \cite{power2022grokking}.
In this work, the authors noticed that a shallow Transformer \citep{vaswani2017attention} trained to solve various modular arithmetic problems exhibits a remarkable phenomenon dubbed \emph{grokking}: the model achieves perfect generalization much after reaching the perfect train accuracy.
By the time the model achieves a perfect train accuracy, the test one usually stays at the random guess level.

Attempts to provide a theoretical explanation for this phenomenon in the original setting (i.e., Transformers trained on modular arithmetic) include \emph{slingshot mechanism} \citep{thilak2022slingshot} and \emph{circuit formation} \citep{nanda2023progress}.
This phenomenon has later been observed also for MLPs \citep{morwani2023feature,mohamadi2024you} and for non-neural models trained not with gradient methods \citep{mallinar2024emergence}.
Grokking has also been observed for various problems that differ from modular arithmetic, including (1) group operations \citep{chughtai2023toy}, (2) sparse parity \citep{barak2022hidden,bhattamishra2022simplicity}, (3) greatest common divisor \citep{charton2023learning}, (4) image classification \citep{liu2022towards,radhakrishnan2022mechanism}, as well as generic supervised learning problems under specific conditions \citep{lyu2023dichotomy}.

\paragraph{Mechanistic interpretation of learned algorithms for models trained on modular arithmetic.}

A problem of modular addition could be naturally solved by first, Fourier-transforming one-hot representations of the summands, convolving them next, and finally, transforming them back to the original space.
Applying methods of mechanistic interpretability, \cite{nanda2023progress} demonstrated that a trained Transformer indeed converges to this kind of algorithm.
Their conclusion was later refined by \cite{zhong2023clock} who demonstrated that the claimed algorithm (dubbed \emph{Clock}) is not unique a trained Transformer could converge to, but there is another typical one, dubbed \emph{Pizza}, as well as a class of algorithms that mixes between these two.
As a sequel, \cite{furuta2024towards} extended their analysis to other modular arithmetic problems.

In the meantime, \cite{gromov2023grokking} demonstrated that a two-layer MLP with a quadratic activation function could implement the above Fourier-feature algorithm in the limit of infinite width by explicitly providing the corresponding weights. Importantly, this work does \emph{not} analyze training dynamics (gradient descent or gradient flow); convergence to the constructed minimizers is supported empirically rather than proved. However, our goal in the present work is dynamical: we aim to derive explicit, time-dependent formulas for the loss expectation $\EE [L(t)]$ under the gradient flow.

A subsequent work of \cite{doshi2024grokking} extends their explicit construction to modular multiplication and modular addition with multiple terms.
Finally, a recent work of \cite{mccracken2025uncovering} building on \cite{nanda2023progress} and \cite{zhong2023clock} claims that both MLPs and Transformers trained on modular addition implement an abstract algorithm they dub \emph{approximate Chinese Remainder Problem} implicitly providing the corresponding weights.

\section{Proof of Theorem \ref{th:pareto_front}}\label{sec:th:pareto_front:proof}

In a nutshell, the proof consists in finding out which diagrams appearing in the merger $(\tfrac{1}{2}D_{2\nu}-R_\nu)^{\star(s+1)}$ 
can support an edge pairing that leaves the given numbers of contracted $p$- and $H$-nodes. The easier part of the proof is to show that for particular $(q,n)$ there are suitable diagrams: this can be done by explicit construction. The trickier, complementary part is to show that for particular $(q,n)$ there are no diagrams. This is done by induction on the number $s_D$ of free diagrams appearing in the merger: we show that if there is a diagram with $s_D$ factors $D_{2\nu}$ and some numbers of contracted nodes, then there must be a diagram with $s_D-1$ factors $D_{2\nu}$ and suitably decremented numbers of nodes. Also, parity considerations allow to rule out exceptional ASYM cases: diagrams merged using an odd number of interaction diagrams $R_\nu$, and free diagrams with fully uncontracted $H$-nodes.

\subsection{Symmetric models, even $\nu$}
\subsubsection{Realizability}\label{sec:symevenreal}
We need to prove that for each term
\begin{equation}\label{eq:pnusd}
p^{1+(\nu-1)s_D-\frac{\nu}{2}(n-1)}H^n\sigma^{\nu(s_D+1)+(\nu-2)s}
\end{equation}
with some $0\le s_D\le s+1$ and $1\le n\le s_D+1$, there exists a diagram $G$ obtained by merging, in some order, $s_D$ diagrams $D_{2\nu}$ and $s_R=s+1-s_D$ diagrams $R_{\nu}$, such that some contraction $\widehat G$ of $G$ has $n$ $H$-nodes, $1+(\nu-1)s_D-\frac{\nu}{2}(n-1)$ $p$-nodes, and admits an edge pairing. If true, such a diagram contributes the respective term (\ref{eq:pnusd}) to $\mathbb E[(\tfrac{1}{2}D_{2\nu}-R_\nu)^{\star(s+1)}]$ with the coefficient $\tfrac{(-1)^{s_R}}{2^{s_D}}$ multiplied by the number of admitted pairings.  

Note that there can be no cancellations of contributions of different diagrams: the triplet of powers 
\begin{equation}
    \begin{pmatrix}q\\ n \\ 2l
    \end{pmatrix}=\begin{pmatrix} 1+(\nu-1)s_D-\frac{\nu}{2}(n-1)\\ n \\ \nu(s_D+1)+(\nu-2)s\end{pmatrix}
\end{equation} in (\ref{eq:pnusd}) is uniquely determined by $s,s_D,n$, therefore all the diagrams with the same triplet of powers have the same coefficient $\tfrac{(-1)^{s_R}}{2^{s_D}}$. This shows that we only need to find one suitable diagram $G$ to guarantee the presence of the respective term $p^q H^n\sigma^{2l}$ in $\mathbb E[(\tfrac{1}{2}D_{2\nu}-R_\nu)^{\star(s+1)}]$. 

We construct desired examples by induction on $s_D$. The base of induction is $s_D=0$. The respective diagrams $G$ have the form $R_\nu^{\star(s+1)}$ and consist of one $p$-node and one $H$-node connected by $\nu+s(\nu-2)$ edges. Since $\nu$ is even, these diagrams admit an edge-pairing and contribute a term with $(n,q)=(1,1),$ consistent with Eq. (\ref{eq:pnusd}). 

Now we make an induction step. As the induction hypothesis, suppose that we have a diagram $G$  satisfying our conditions. We will show that merging $G\star D_{2\nu}$ produces another diagram $G'$ satisfying our conditions, with new numbers $n', q'$ of contracted $p$-nodes and $H$-nodes that also satisfy our conditions. We consider two options. 
\begin{enumerate}
    \item \textbf{[Retaining $n$]} We construct the new diagram $G'$ such that the new numbers $n',q'$ are related to the numbers $n, q$ for $ G$ by
\begin{equation}\label{eq:n'nq}
    n' = n,\quad q'=q+\nu-1.
\end{equation}
To this end, we simply take any edge $u_{ki}$ in $G$ and merge $G$ with $D_{2\nu}$ over this edge. This adds a new $H$-node $k'$ and $\nu-1$ new $p$-nodes $i_1,\ldots,i_{\nu-1}$ to $G$, and replaces the edge $u_{ki}$: 
\begin{equation}
    u_{ki}\rightsquigarrow u_{k'i}\prod_{m=1}^{\nu-1}u_{ki_m}u_{k'i_m}.
\end{equation}
\begin{center}
\includegraphics[clip, trim=15mm 12mm 15mm 12mm, scale=0.3]{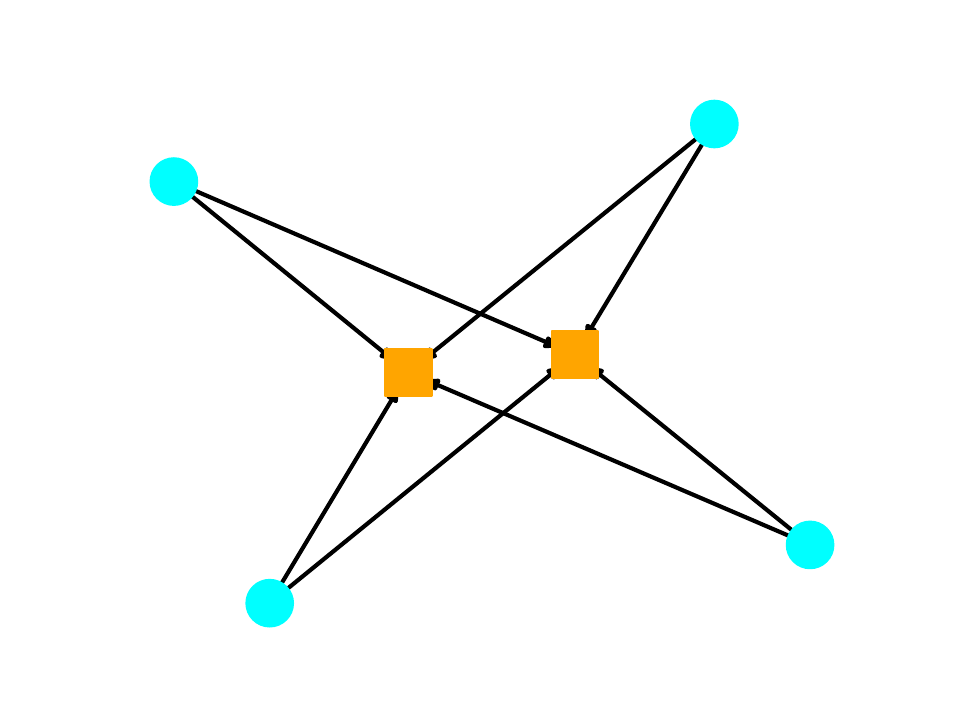}
\raisebox{10mm}{\scalebox{2}{$\stackrel{\star D_8}{\rightsquigarrow}$}}
\includegraphics[clip, trim=15mm 12mm 15mm 12mm, scale=0.3]{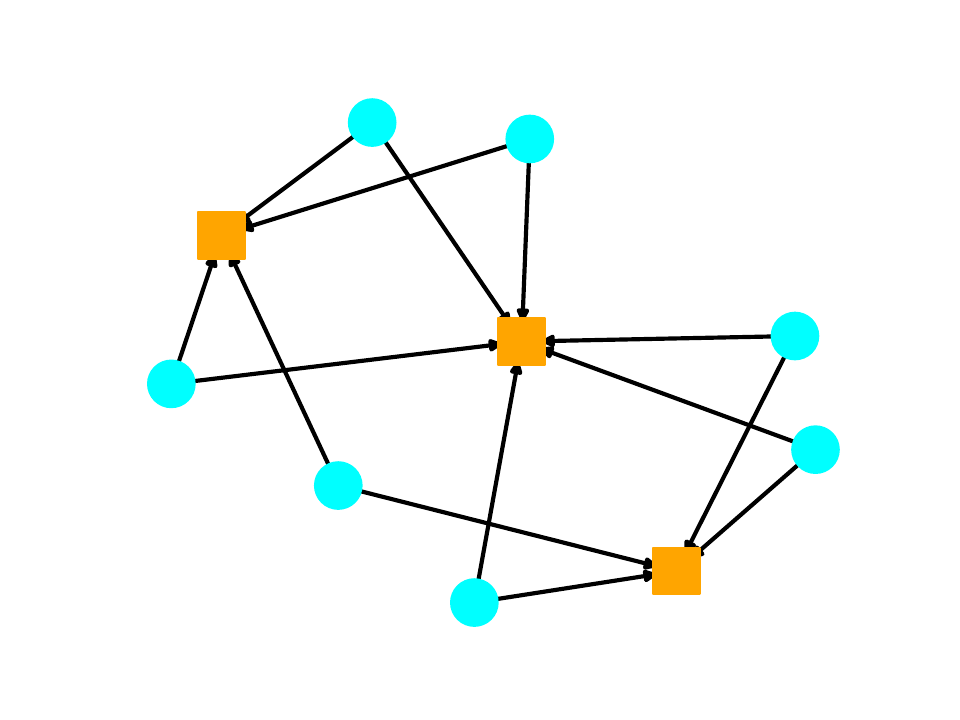}
\end{center}
Now, we contract the resulting diagram $G'$ by retaining the contractions in $G$ and additionally contracting the node $k'$ to $k$. 
\begin{center}
\includegraphics[clip, trim=15mm 12mm 15mm 12mm, scale=0.3]{D8_sym_star2.pdf}
\raisebox{10mm}{\scalebox{2}{$\rightsquigarrow$}}
\includegraphics[clip, trim=15mm 12mm 15mm 12mm, scale=0.3]{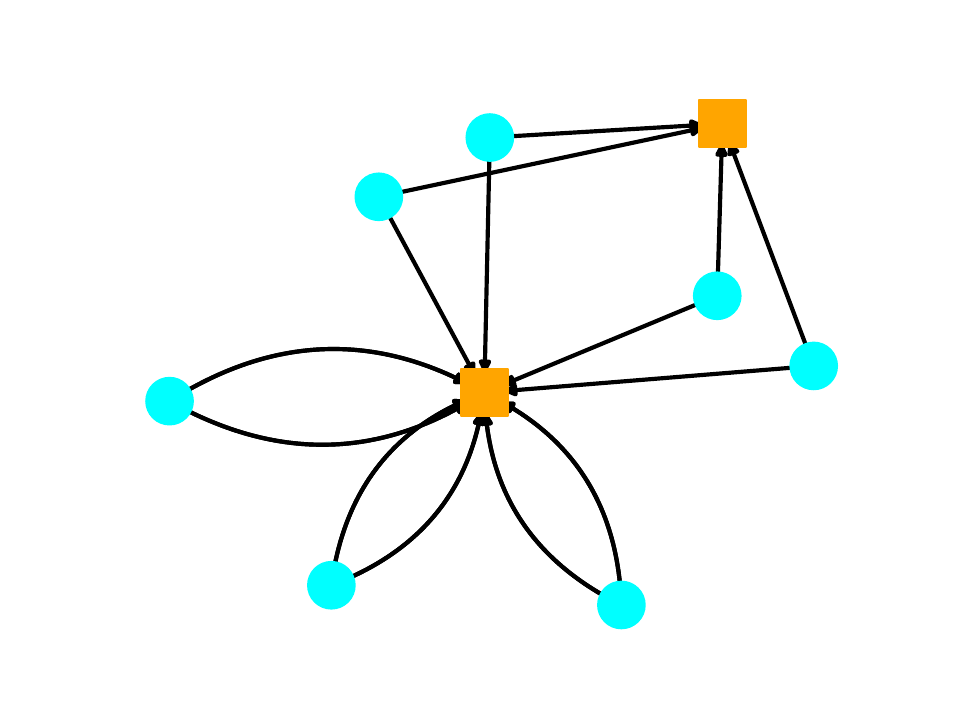}
\end{center}
This makes the edges $u_{ki_m}, u_{k'i_m}$ naturally paired, and also allows to identify the new edge $u_{k'i}$ with the old edge $u_{ki}$. We then have an edge pairing in $\widehat G'$ obtained by retaining the pairing in $\widehat G$ and additionally supplementing it with the pairing of the new edges $u_{ki_m}, u_{k'i_m}$. This creates the desired diagram fulfilling condition (\ref{eq:n'nq}).
\item \textbf{[Increasing $n$]} Alternatively, we can construct a new diagram such that
\begin{equation}\label{eq:n'nq2}
    n' = n+1,\quad q'=q+\frac{\nu}{2}-1.
\end{equation}
To this end, we define again $G'$ by merging $G$ with $D_{2\nu}$ over any edge. But now we don't contract the nodes $k$ and $k'$. Instead, we divide the nodes $i_1,\ldots,i_{\nu-1}$ and $i$ into pairs (which is possible since $\nu$ is even) and contract the nodes in each pair. This gives the relation (\ref{eq:n'nq2}).
\begin{center}
\includegraphics[clip, trim=15mm 12mm 15mm 12mm, scale=0.3]{D8_sym_star2.pdf}
\raisebox{10mm}{\scalebox{2}{$\rightsquigarrow$}}
\includegraphics[clip, trim=15mm 12mm 15mm 12mm, scale=0.3]{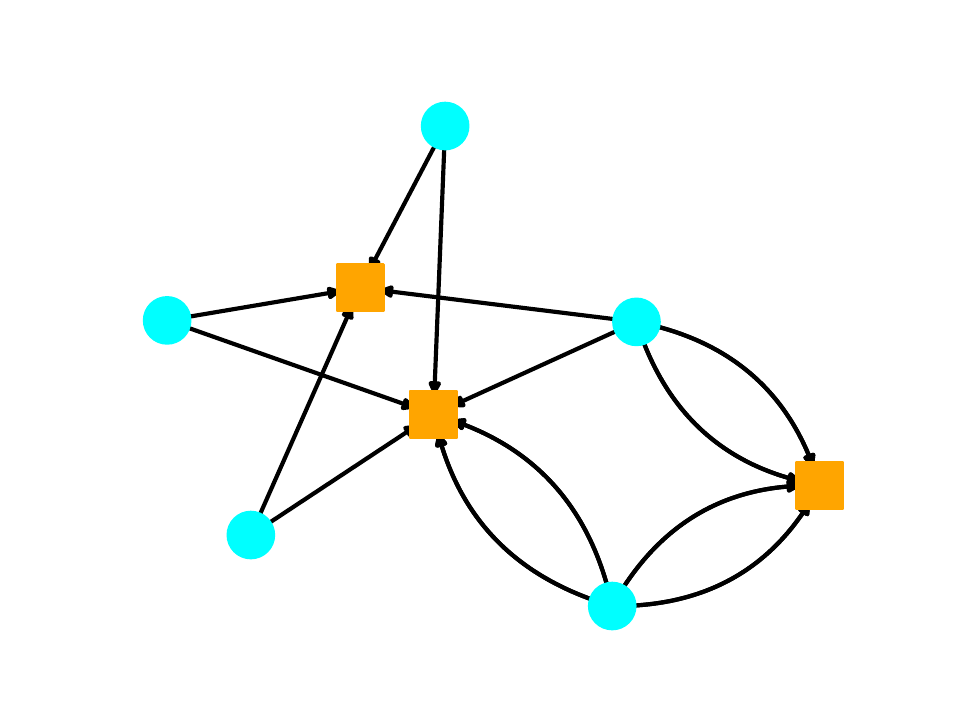}
\end{center}
To construct an admissible edge pairing, we form pairs among the edges $(u_{k'i_m})_{m=1,\ldots,\nu-1}$ and $u_{k'i}$ using the introduced contraction. Similarly, we form pairs among the edges $(u_{ki_m})_{m=1,\ldots,\nu-1}$ except for $m_*$ such that $i_{m_*}$ is contracted with $i$ (this $u_{ki_{m_*}}$ is left unpaired because of the missing edge $u_{ki}$ in $G'$). We also retain the edge pairing from $G$, except for the edge $u_{k''u''}$ paired with $u_{ki}$ (again, because $u_{ki}$ is no longer present in $G'$). To complete the pairing in $G'$, we can pair $u_{ki_{m_*}}$ with $u_{k''i''}$, because the node $k''$ is contracted to $k$ while the node $i_{m_*}$ is contracted to $i$ and hence to $i''$.       

\end{enumerate}
Clearly, the above two transformations allow to generate all terms (\ref{eq:pnusd}) (for given $s, s_D, n$ just start from a diagram $R_{2\nu}^{s+1-s_D}$, perform  $n$-retaining transformation $s_D+1-n$ times and $n$-increasing transformation $n-1$ times).

\subsubsection{Optimality}\label{sec:sym_opt}
We need to show that each term listed in Eq. (\ref{eq:pareto_terms}) is Pareto-optimal with respect to the factor $p^q H^n$, i.e. for given $s, s_D$ there are no diagrams in the binomial expansion $(\tfrac{1}{2}D_{2\nu}-R_\nu)^{\star(s+1)}$ with exactly $s_D$ factors $D_{2\nu}$ that would contribute a strictly Pareto-dominating factor $p^{q'} H^{n'}$.  

Since merging with $D_{2\nu}$ creates one additional $H$-node while merging with $R_\nu$ creates no $H$-nodes, possible values of $n$ are necessarily restricted to $1,\ldots, s_D+1$. Thus, the optimality condition that we need to prove can be stated as follows: for any contracted diagram with given $s, s_D$ and admitting a pairing, the number of contracted $H$-nodes $n$ and the number of contracted $p$-nodes $q$  satisfy the inequality 
\begin{equation}\label{eq:sym_nu_even_cond}
    q \le 1+(\nu-1)s_D-\frac{\nu}{2}(n-1).
\end{equation}

We prove this by induction on $s_D$ mimicking the induction used in the proof of realizability. The base of induction is $s_D=0,$ i.e. diagrams $R_\nu^{\star(s+1)}$. Such diagrams consist of one $p$-node and one $H$-node connected by $\nu+s(\nu-2)$ edges. Since $\nu$ is even, such diagrams admit pairings and thus contribute a term with $(n,q)=(1,1),$ satisfying condition (\ref{eq:sym_nu_even_cond}).

Now we make the induction step. We will use the following general argument, Let $G$ be some diagram obtained by merging $s_D$ diagrams $D_{2\nu}$ and some number of  diagrams  $R_\nu,$ in some order. Suppose that $G$ has a contraction -- call it $\widehat G$ -- that has $n$ $H$-nodes and $q$ $p$-nodes and admits a pairing. We will then construct a diagram $G'$ obtained by merging  $s_D-1$ diagrams $D_{2\nu}$ and some number of  diagrams  $R_\nu,$ in some order, such that $G'$ has some contraction $\widehat G'$ admitting a pairing and having $n'$ $H$-nodes and $q'$ $p$-nodes, where
\begin{equation}\label{eq:condq'}
    q'+\frac{\nu}{2}n'\ge q+\frac{\nu}{2}n-\nu+1.
\end{equation}
Condition (\ref{eq:sym_nu_even_cond}) then follows for $G$ by the induction hypothesis:
\begin{align}
    q \le{}& q' +\frac{\nu}{2}(n'-n)+\nu-1\\
    \le{}& 1+(\nu-1)(s_D-1)-\frac{\nu}{2}(n'-1)+\frac{\nu}{2}(n'-n)+\nu-1\\
    ={}&1+(\nu-1)s_D-\frac{\nu}{2}(n-1).
\end{align}

Now we detail the construction of $G'$. The given diagram $G$ can be viewed as  obtained by first merging $s_D-1$ diagrams $D_{2\nu}$ and some diagrams $R_\nu$ into a diagram $G_1$, then merging $G_1$ with $D_{2\nu}$, and then merging with some number of $R_\nu$'s\footnote{Here, the symbol $\in$ means that $G$ is one of the several diagrams produced by merging.}:
\begin{equation}\label{eq:gg1}
    G \in (\ldots((G_1\star D_{2\nu})\star R_\nu)\ldots)\star R_\nu.
\end{equation}

Observe that in the present case of symmetric models of even order $\nu$, merging with $R_\nu$ simply adds an even number of edges between some $H$-node and some $p$-node. If any diagram $F$ merged with $R_\nu$ admits a pairing, then $F$ also admits a pairing, with the same contraction (if necessary, we can rewire a pairing in $F\star R_\nu$ so that the edges of $F$ are paired within themselves, and the newly added edges are also paired within themselves). 

Thus, without loss of generality we can ignore the trailing mergers with $R_\nu$ in Eq. (\ref{eq:gg1}) and simply assume that 
\begin{equation}
    G\in G_1\star D_{2\nu}.
\end{equation}
Merging $G_1$ with $D_{2\nu}$ over an edge $u_{ki}$ in $G_1$ consists in creating $\nu-1$ new $p$-nodes $i_1,\ldots, i_{\nu-1}$ and one new $H$-node $k'$, and replacing the edge $u_{ki}$ with new edges:
\begin{equation}
    u_{ki}\rightsquigarrow u_{k'i}\prod_{m=1}^{\nu-1}u_{ki_m}u_{k'i_m}.
\end{equation}

Now consider several possibilities regarding the structure of the contracted diagram $\widehat G$.
\begin{enumerate}
    \item \textbf{The new $H$-node $k'$ is not contracted to any other node in $\widehat G$.} In this case we will show that a desired diagram $G'$ exists with
    \begin{equation}\label{eq:n'n1}
        n'=n-1,\quad q'\ge q-\frac{\nu}{2}+1,
    \end{equation}
    which satisfies condition (\ref{eq:condq'}).
    
    Specifically, since the new $H$-node $k'$ is not contracted to any other node, the incident edges $(u_{k'i_m})_{m=1,\ldots, \nu-1}$ and $u_{k'i}$ must be paired within each other. This requires at least $\tfrac{\nu}{2}$ contractions among the $\nu$ $p$-nodes $i_1,\ldots,i_{\nu-1}$ and $i$. Consider now the pairings in $\widehat G$ of the new edges $(u_{ki_m})_{m=1,\ldots, \nu-1}$ involving the old node $k$.  Without loss of generality (by rewiring the pairings if necessary), we can assume these pairings to mimic the respective pairings of the edges $(u_{k'i_m})_{m=1,\ldots, \nu-1}$ and $u_{k'i}$, since they can be served by the same contractions of the nodes $i_1,\ldots,i_{\nu-1}$ and $i$. The only exception involves the edge paired with $u_{k'i}$ --  call it $u_{k'j}$ (where $j\in\{i_1,\ldots, i_{\nu-1}\}$). The edge $u_{kj}$ is then paired in $\widehat G$ with some other edge $u_{kh},$ where $h$ is some node in $G$ contracted with $i$.

    Now define $G'$ simply as $G_1$, and define the contraction $\widehat G'$ to be inherited from the contraction $\widehat G$. Since $k'$ was not contracted to any node, we have $n'=n-1$ for the number of $H$-nodes. Also, as remarked, there were at least $\tfrac{\nu}{2}$ contractions among the nodes $i_1,\ldots,i_{\nu-1}$ and $i$, so removing these $\nu$ $p$-nodes means removing at most $\tfrac{\nu}{2}-1$ contracted $p$-nodes. This ensures condition (\ref{eq:n'n1}). The pairing admitted by $G'$ is the pairing inherited from $G$, with the exception that, on the one hand, we now need to pair the edge $u_{ki}$ not present in $G$, and we need to pair the edge $u_{kh}$ that in $\widehat G$ was paired with the edge $u_{kj}$ not present in $G'$. However, as pointed out, the nodes $j$ and $h$ were contracted in $G$, so we can just pair the edges $u_{kh}$ and $u_{kj}$.

    \item \textbf{The new $H$-node $k'$ is  contracted to the node $k$.} In this case we will show that a desired diagram $G'$ exists with
    \begin{equation}
        n'=n,\quad q'\ge q-\nu+1.
    \end{equation}
    Indeed, in this case each edge $u_{ki_m}$ is naturally paired with $u_{k'i_m}$ (for $m=1,\ldots,\nu-1$), and also $u_{k'i}$ can be identified with $u_{ki}$. By rewiring if necessary, we can assume this pairing structure without loss of generality. We can then again choose $G'$ simply as $G_1$ with inherited contraction -- this gives above conditions for $n', q'$. The pairing in $G$ is naturally restricted to a pairing in $G'$.

    \item \textbf{The new $H$-node $k'$ is  not contracted to the node $k$, but is contracted to some other nodes in $G$.} In this case we will again show that a desired diagram $G'$ exists with
    \begin{equation}\label{eq:n'n2}
        n'=n,\quad q'\ge q-\nu+1.
    \end{equation}
    Let us collectively denote all the nodes contracted to $k$ in $G$ by $K$, and all the nodes contracted to $k'$ by $K'$. Also collectively denote by $I_1, \ldots, I_r$ all the contracted groups of $p$-nodes to which the nodes $i_1,\ldots,i_{\nu-1}$ and $i$ belong. The number $r$ of such groups is, clearly, not larger than $\nu$.
    
    Since $k'$ is not contracted to $k$, $k$ does not belong to $K'$, and $k'$ to $K$. Accordingly, the new edges $(u_{ki_m})_{m=1,\ldots,\nu-1}$ cannot be paired with edges $(u_{k'i_m})_{m=1,\ldots,\nu-1}$. If $i_m$ belongs to a contracted group $I_t,$ the edge $u_{ki_m}$ is paired with some edge $u_{\widetilde{k} \widetilde{j}}$ with $\widetilde{k}\in K, \widetilde{j}\in I_t,$ and the edge $u_{k'i_m}$ is paired with some edge $u_{\widetilde{k}' \widetilde{j}'}$ with $\widetilde{k}'\in K, \widetilde{j}'\in I_t.$ The latter also applies to the edge $u$.

    Now define $G'$ as $G_1$. Define the contraction $\widehat G'$ as inherited from $\widehat G,$ but with the additional contraction of all the groups $I_1,\ldots, I_{r}$ into a single group $I$. Since $k'$ was contracted and $r\le \nu,$ we have conditions (\ref{eq:n'n2}). 
    
    Now we check that there is a pairing admitted by this contraction. Consider the set of edges $u_{\widetilde{k}' \widetilde{j}'}$ introduced above. They were paired in $G$ with the edges $u_{k'i_m}$ and $u_{k'i}$ that are no longer present in $G'$. Since $\nu$ is even, the number of such edges $u_{\widetilde{k}' \widetilde{j}'}$ is even. Since we have contracted all the sets $I_1,\ldots, I_{r}$ into $I$, all these edges have the same contracted nodes ($p$-node $I$ and $H$-node $K'$) and can be paired.  

    A similar argument applies to the edges $u_{\widetilde{k} \widetilde{j}},$ but the number of these edges is odd. Accordingly, we can pair all of them with each other except for one. However, we recall that $G_1$ has the edge $u_{ki}$ not present in $G$. The unpaired edge $u_{\widetilde{k} \widetilde{j}}$ can then be paired with this $u_{ki}$. 
    
    All the other edge pairs in $\widehat G'$ are inherited from $\widehat G$.
\end{enumerate}
This completes the induction step and thus the proof for symmetric models with even $\nu$.

\subsection{Asymmetric models}\label{sec:asym_classification_proof}
\subsubsection{Realizability}\label{sec:asym_real}
We need to prove that for each $s, n, s_D$ such that
\begin{equation}\label{eq:sds1n}
    0\le s_D\le s+1,\quad 1\le n\le s_D+1,\quad s_R=s+1-s_D\text{ is even and } (s_D,n)\ne (s+1,s+2),
\end{equation} there exists a diagram $G$ constructed by merging $s_D$ diagrams $D_{2\nu}$ and $s_R$ diagrams $R_\nu$, and its contraction $\widehat G$ admitting a pairing and having $q$ $p$-nodes and $n$ $H$-nodes, where
\begin{equation}
    q = 1+(\nu-1)(s_D+1-n).
\end{equation}
As in the symmetric, even-$\nu$ case discussed in Section \ref{sec:symevenreal}, all the diagrams with the same triplet $(q,n,2l)$ (where $2l$ is the number of edges) contribute a coefficient of the same sign, so we just need to produce one such diagram. 

The key difference between the present asymmetric case and the symmetric case discussed in Section \ref{sec:symevenreal} is that the edges are now colored, so edge pairing requires not only that the edges have the same node (after contraction), but also the same color. 

An important observation is that merging any diagram $G$ with $R_\nu$ changes the parity of the total number of edges of each particular color, while merging  with $D_{2\nu}$ preserves this parity. In particular, this explains the constraint in (\ref{eq:sds1n}) that the number $s_R$ of mergers with $R_\nu$ must be even.

The second special constraint in (\ref{eq:sds1n}), that $(s_D,n)\ne (s+1,s+2)$, means that diagrams $D_{2\nu}^{\star(s+1)}$, involving only $D_{2\nu}$ but not $R_\nu$, cannot admit an edge pairing without contracting at least two $H$-nodes. This follows by observing that the new $H$-node created by merging any diagram with $D_{2\nu}$ has $\nu$ edges of different colors (one per color), so must be contracted to some other nodes for a valid edge pairing.

The \emph{$n$-retaining} transformation considered in Section \ref{sec:symevenreal} (see Eq. (\ref{eq:n'nq})) remains valid in the present colored context since it involves pairing same-color edges of $D_{2\nu}$. In contrast, the \emph{$n$-increasing} transformation (see Eq. (\ref{eq:n'nq2})) is no longer valid since it involves pairing edges that are now differently colored. 

It is possible to consider a suitable version of the {$n$-increasing} transformation in the present colored context, but it requires extra assumptions on the structure of the diagram $G$ appearing in the induction step. Therefore, we find it more convenient to directly construct a suitable diagram for given $(s,s_D,n)$.

First, consider the case $s_D=0$. Note that this case is realized only if $s$ is odd (otherwise $s_R=s+1$ is odd). The relevant diagrams are those appearing in $R_\nu^{\star(s+1)}$. These diagram have one $p$-node and one $H$-node connected by multiple edges. By above remark, since $s+1$ is even, the parity of the number of edges of each color is even, so there is a valid pairing, as desired. 

Note also that, similarly and more generally, given any diagram $G$ admitting an edge pairing, we can always merge it with an even number of diagrams $R_\nu$ without changing the number of nodes and so as to extend the pairing.

Now consider a general case with $s_D\ge 1$. It is useful to construct an auxiliary ``ring-like'' diagram $D_{2\nu}^{\star s_D}$ as follows. Pick some color, say $m=\nu$. Merge $s_D$ diagrams $D_{2\nu}$, always over an edge of this color. As a result we get a ring-like diagram with $s_D+1$ $H$-nodes and $\nu+s_D(\nu-1)$ $p$-nodes: 
\begin{center}
\includegraphics[clip, trim=0cm 12mm 0 12mm, scale=0.3]{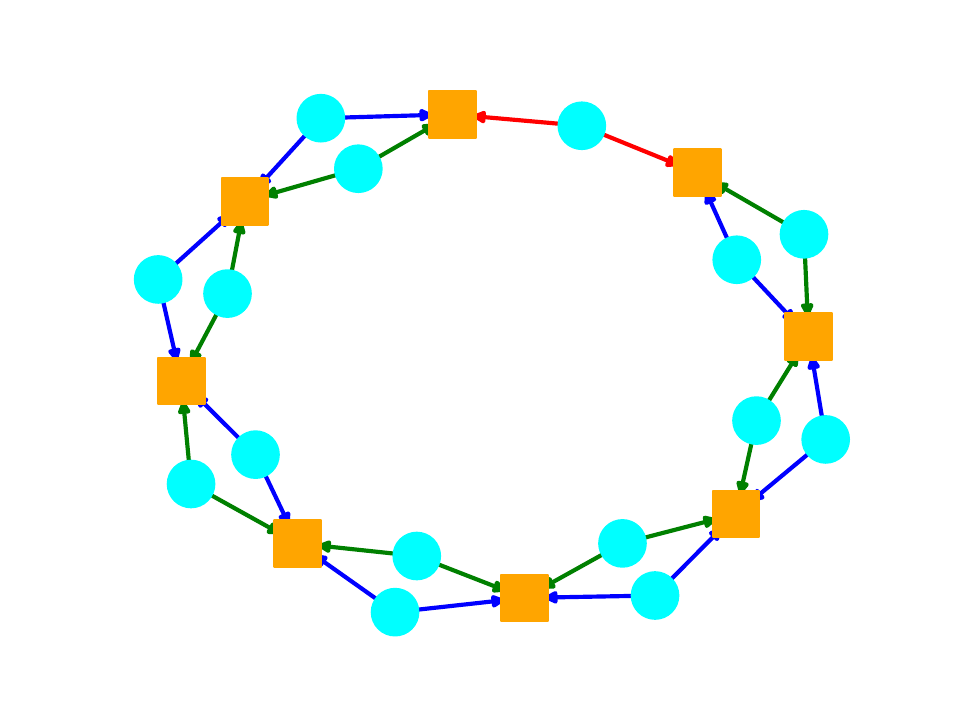}
\end{center}
(In this figure $\nu=3, s_D=7$.) There are two edges of color $\nu$ (red in the above figure) connecting one $p$-node to two $H$-nodes. All the other edges form $s_D$ groups of $2(\nu-1)$ edges of colors $1,\ldots,\nu-1$ (blue and green in the above figure), each connecting $\nu-1$ $p$-nodes to two $H$-nodes.

Let us now consider separately the cases $n\le s_D$ and $n=s_D+1$. 
\begin{enumerate}
    \item $n\le s_D$. In this case we can construct a desired diagram with $n$ $H$-nodes and $q=1+(\nu-1)(s_D-1-n)$ $p$-nodes by suitably contracting the above ring-like diagram. Specifically, first contract the two $H$-nodes having the $\nu$-colored edge: 
\begin{center}
\includegraphics[clip, trim=0cm 12mm 0 12mm, scale=0.3]{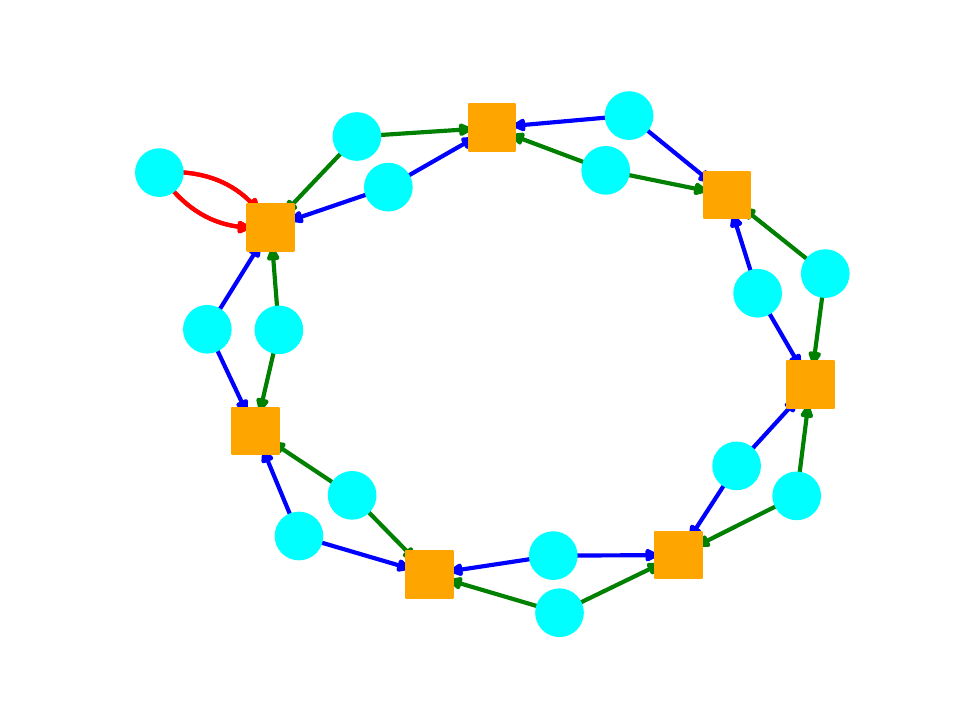}
\end{center}
    Then additionally contract a contiguous sequence of $s_D-n+1$ $H$-nodes so as to bring the total number of $H$-nodes to $n$:
        \begin{center}
\includegraphics[clip, trim=0cm 12mm 0 12mm, scale=0.3]{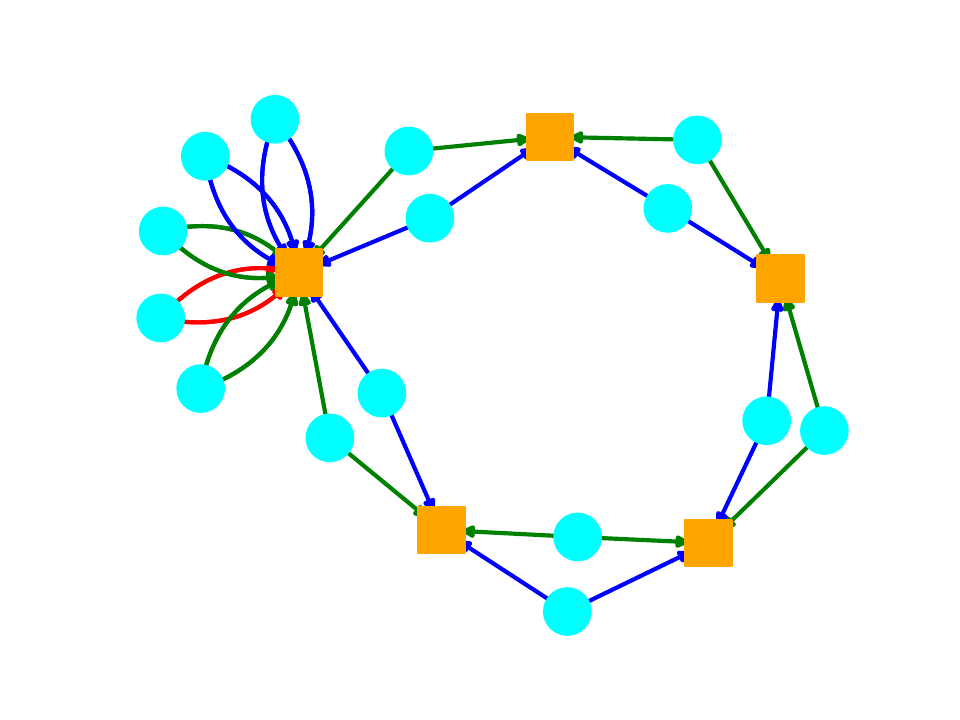}
\end{center}
Finally, for each color, contract all the $(\nu-1)(s_D-1-n)$ $p$-nodes remaining in the ring and incident to edges of this color: 
\begin{center}
\includegraphics[clip, trim=0cm 12mm 0 12mm, scale=0.3]{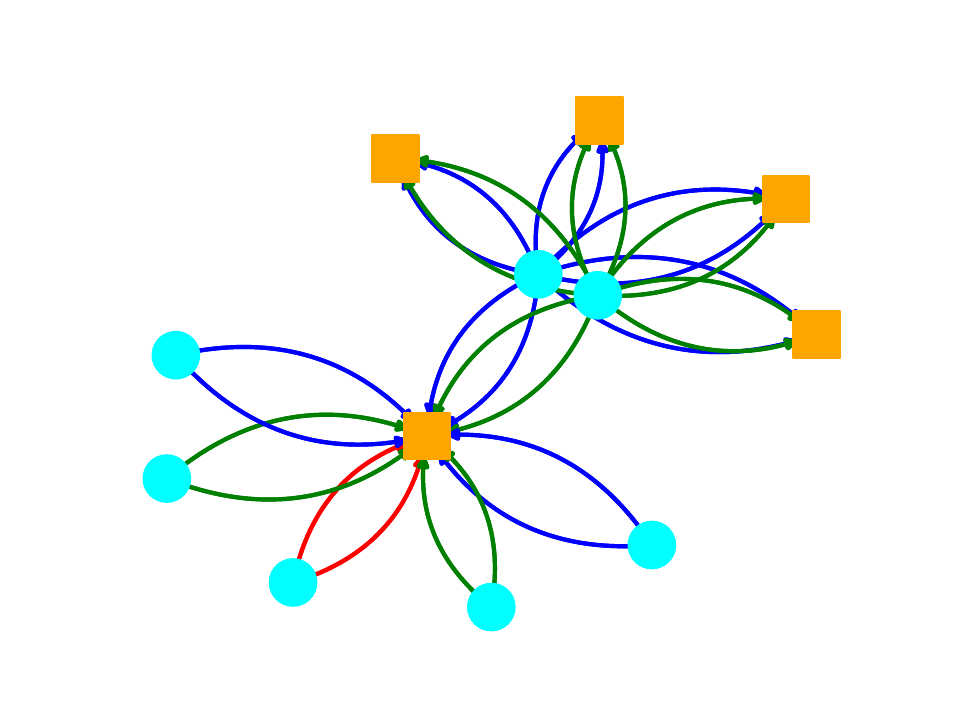}
\end{center} 
The result is a diagram with exactly $n$ $H$-nodes and $q=1+(\nu-1)(s_D-n+1)$ $p$-nodes, as desired. Thanks to the implemented contractions of $H$-nodes and $p$-nodes, the diagram admits edge pairing. 

The diagram was constructed without merging with diagrams $R_\nu$, but, as remarked, we can merge it  with $s_R=s+1-s_D$ such diagrams (assuming this number is even) while maintaining the numbers of $H$- and $p$-nodes and extending the edge pairing.

\item $n= s_D+1$. This case requires all $H$-nodes to stay uncontracted. However, the constructed ring-like diagram $D_{2\nu}^{\star s_D}$ contains two $\nu$-colored edges connected to different $H$-nodes, and so it cannot admit an edge pairing without contracting these nodes. But we can resolve this issue by merging the ring diagram twice with $R_{\nu}$ over these two special edges:  
\begin{center}
\includegraphics[clip, trim=15mm 12mm 15mm 12mm, scale=0.3]{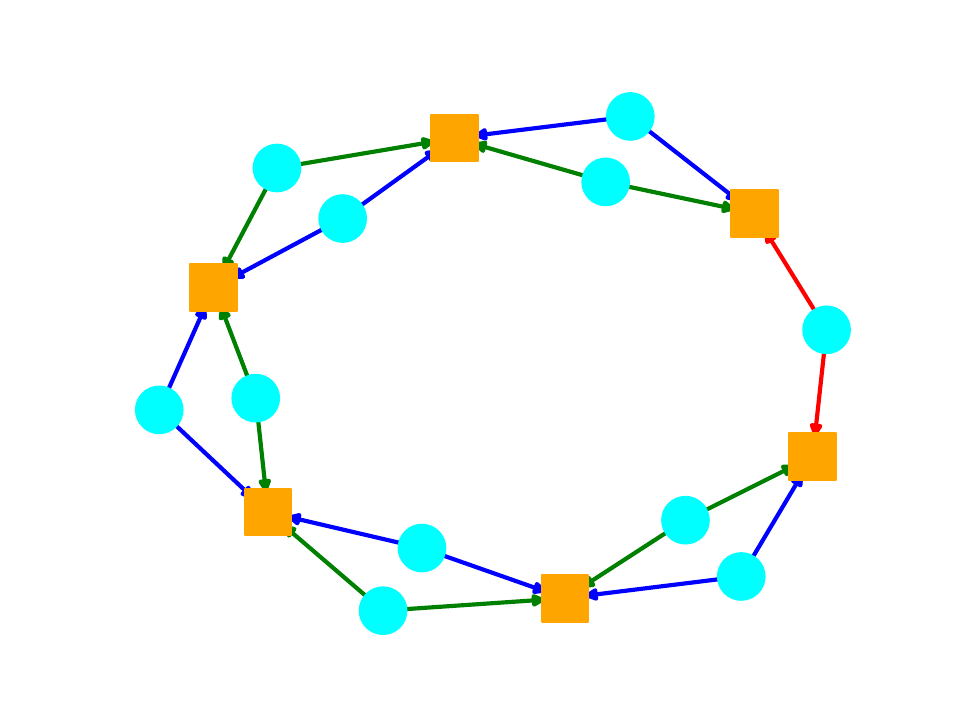}
\raisebox{10mm}{\scalebox{2}{$\rightsquigarrow$}}
\includegraphics[clip, trim=15mm 12mm 15mm 12mm, scale=0.3]{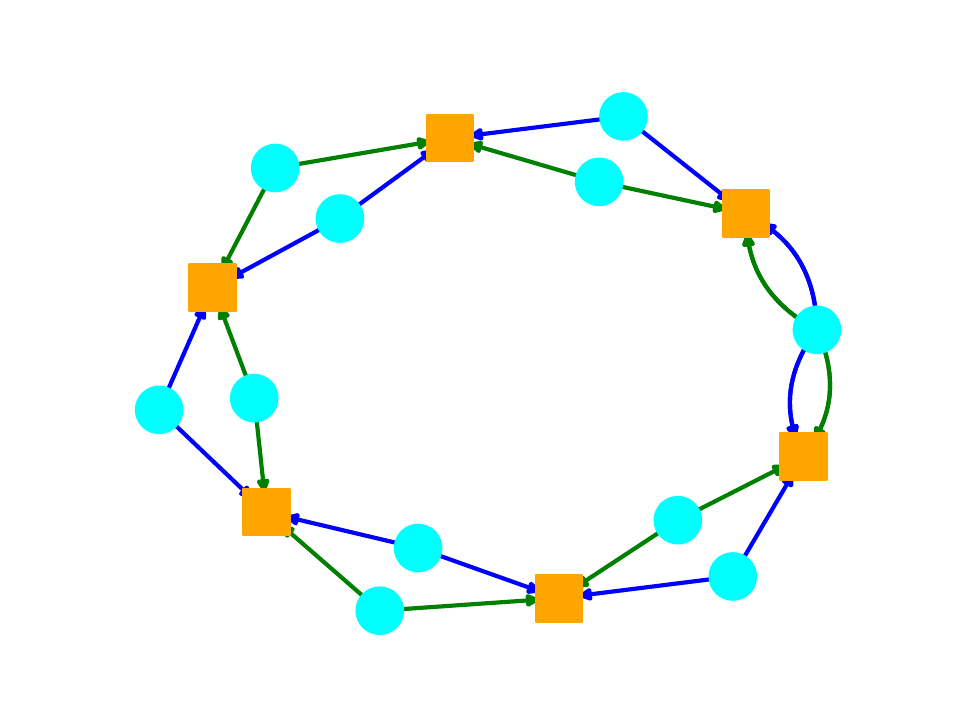}
\end{center}
The resulting diagram then has only edges of colors $1,\ldots, \nu-1$. Edges of each color form a circle going through each $H$-node; all the circles also connect at one $p$-node. We can contract all the $p$-nodes into one node, getting a flower-like diagram with $s_D+1$ petals, each consisting of $\nu-1$ pairs of edges of colors $1,\ldots,\nu-1$: 
\begin{center}
\includegraphics[clip, trim=0mm 12mm 0cm 12mm, scale=0.3]{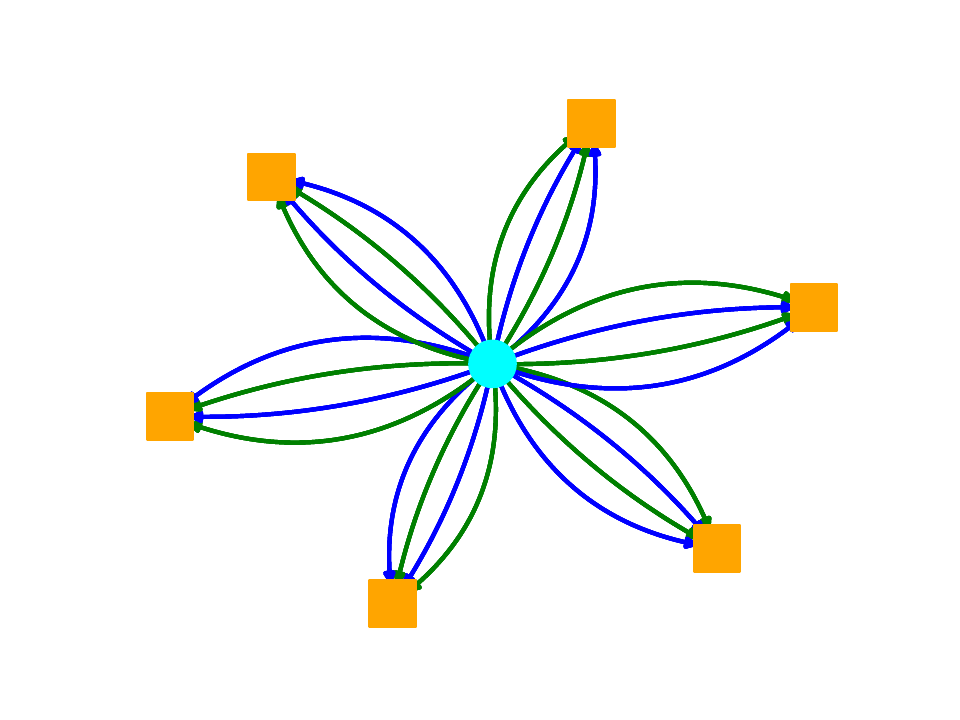}
\end{center}
Such a diagram admits an edge pairing and has $(n,q)=(s_D+1, 1)$, as desired. We can then further merge it with an even number of diagrams $R_\nu$ while keeping these properties.

The described construction requires merging with two diagrams $R_\nu$, so, as expected, it works for all $s$ except $s+1=s_D$, when $s_R=0$ and $n=s+2$. This is exactly the exceptional, unrealized case.    
    
\end{enumerate}

\subsubsection{Optimality}\label{sec:proof_opt}
We have already argued in Section \ref{sec:asym_real} that the terms with odd $s_R$ and $(s_D,n)=(s+1,s+2)$ are missing from the expansion. It remains to establish a bound on feasible $q$ analogous to the bound (\ref{eq:sym_nu_even_cond}) from the symmetric, even $\nu$ scenario:
\begin{equation}\label{eq:qle1n}
    q\le 1+(\nu-1)(s_D+1-n).
\end{equation}
We generally follows the logic of the proof given in Section \ref{sec:sym_opt} for that scenario, though the present asymmetric scenario is slightly more complex due to a more subtle influence of diagrams $R_\nu$.  

As before, we perform induction on the number $s_D$ of factors $D_{2\nu}$ appearing in a sequence of merging operations (along with some factors $R_\nu$). The base of induction is $s_D=0$, i.e. diagrams $R_\nu^{s+1}$. They have a vanishing contribution for $s$ even, and a nonzero contribution for $s$ odd with $n=q=1$, by parity considerations.

For the induction step, suppose again that a diagram $G$ is obtained from a diagram $G_1$ by first merging it with $D_{2\nu}$ and then additionally merging with some number of $R_\nu$'s:
\begin{equation}\label{eq:gg1asym}
    G \in (\ldots((G_1\star D_{2\nu})\star R_\nu)\ldots)\star R_\nu.
\end{equation}
Consider an $m$-colored edge $u_{ki}^{(m)}$ in $G_1$ with a $H$-node $k$ and $p$-node $i$. Without loss of generality, assume $m=\nu$. The merger $G_1\star D_{2\nu}$ then replaces this edge by
\begin{equation}
    u_{ki}^{(\nu)}\rightsquigarrow u_{k'i}^{(\nu)}\prod_{m=1}^{\nu-1}u_{ki_{m}}^{(m)}u_{k'i_{m}}^{(m)}.
\end{equation}

In contrast to the symmetric, even-$\nu$ scenario, we can no longer discard now all the trailing diagrams $R_\nu$, because now they can significantly affect the pairing possibilities. 

Observe, however, that merging has \emph{partial commutativity}, in the following sense. Given any three diagrams $G_1, G_2, G_3$, consider the difference between the merging sequences $(G_1\star G_2)\star G_3$ and $(G_1\star G_3)\star G_2$. Note that if in $(G_1\star G_2)\star G_3$ the merger with $G_3$ is performed over an edge in $G_1\star G_2$ already present in $G_1$, then the same resulting diagram will appear in $(G_1\star G_3)\star G_2$. In other words, the difference between $(G_1\star G_2)\star G_3$ and $(G_1\star G_3)\star G_2$ is exclusively due to the second mergers being over new edges created in the first mergers.

In particular, in sequence (\ref{eq:gg1asym}) we can commute all mergers with $R_\nu$ not performed over the new edges $u_{k'i}^{(\nu)}, (u_{ki_{m}}^{(m)}, u_{k'i_{m}}^{(m)})_{m=1}^{\nu-1}$ to be performed before the merger with $D_{2\nu}$. Accordingly, we can assume without loss of generality that the mergers with $R_\nu$ in (\ref{eq:gg1asym}) are performed over these new edges or the edges resulting from subsequent mergers with $R_\nu$.

Moreover, if we merge a diagram $G_1$ with $R_\nu$ over some edge with nodes $k$ and $i$, this simply adds new edges between these two nodes. If we continue the process by merging again with $R_\nu$ over some edges between $k$ and $i$, each merger changes the parity of the number of edges of each color. Accordingly, if the initial diagram admitted an edge pairing, then the diagram resulting after an even number of such mergers will admit an edge pairing. This shows, in particlar, that in (\ref{eq:gg1asym}) we don't need to consider more than one merger with $R_\nu$ over each specific edge among $u_{k'i}^{(\nu)}, (u_{ki_{m}}^{(m)}, u_{k'i_{m}}^{(m)})_{m=1}^{\nu-1}$. 

We consider now the particular options of transition (analogous to those in Section \ref{sec:sym_opt}) from $G$ given by (\ref{eq:gg1asym}) to a smaller diagram $G'$ also admitting an edge pairing. To establish desired condition (\ref{eq:qle1n}), we ensure that at each step we have
\begin{equation}\label{eq:condq'asym}
    q'+(\nu-1)n'\ge q+(\nu-1)(n-1).
\end{equation}
\begin{enumerate}
    \item \textbf{The new $H$-node $k'$ is not contracted to any other node in $\widehat G$.} In this case we show that a desired diagram $G'$ exists with
    \begin{equation}\label{eq:n'n1asym}
        n'=n-1,\quad q'\ge q,
    \end{equation}
    which satisfies condition (\ref{eq:condq'asym}).
    
    First we claim that, by color parity considerations, existence of an edge pairing in $G$ in this case requires the sequence (\ref{eq:gg1asym}) to include some trailing mergers with $R_\nu$ over the edges $(u_{k'i_{m}}^{(m)})_{m=1,\ldots,\nu-1}$ and $u_{k'i}^{(\nu)}$; moreover all new $p$-nodes $i_1,\ldots, i_{\nu}$ must be contracted together and to $i$. 
    
    Indeed, take some subset $A$ of the nodes $i_1,\ldots,i_{\nu-1}, i$ and consider the vector of total color parities of the edges between $k'$ and $A$ (one component of the vector corresponds to one color). Before merging with $R_\nu$, this vector has odd components for nodes in $A$ and even components for nodes in $\{i_1,\ldots,i_{\nu-1}, i\}\setminus A$. Merging with $R_\nu$ over any edge between $k'$ and $A$ changes the parity of each of the $\nu$ colors. If $A$ represents a set of contracted nodes, then for an edge pairing we need to make each component in the respective color parity vector even, This is only possible if $A$ is the whole set $\{i_1,\ldots,i_{\nu-1}, i\}$ and mergers with $R_\nu$ are performed an odd number of times over some edges between $k'$ and $A$.

    As a result, if we consider the contraction of the diagram $G_1$ inherited from the contraction of the diagram $G$ admitting a pairing, then it has the same number of $p$-nodes and one fewer $H$-node, in agreement with (\ref{eq:n'n1asym}). However, we still need to endow $G_1$ with an edge pairing. 
    
    As usual, we form the edge pairing by inheriting it with suitable modifications from the one  in $G$. As discussed, the new edges in $G$ connecting $k'$ with $i_1,\ldots,i_{\nu-1},i$ were paired within each other, so we only need to consider pairings involving the edges connecting $k$ with $(i_m)_{m=1,\ldots,\nu-1}$ and $i$. In $G,$ all these $p$-nodes are contracted and the color parity vector for the edges $u_{ki_1;1},\ldots,u_{ki_{\nu-1}}^{(\nu-1)}$ in $G$ is $(1,\ldots,1,0)$. Edge pairing is ensured by complementary edges either present in $G_1$ between $p$-node $i$ and some $H$-nodes contracted with $k$, or created by merging with $R_\nu$. The respective complementary parity vector is also $\mathbf v=(1,\ldots,1,0)$; if we exclude merging with $R_\nu$, then it may be $\mathbf v'=(0,\ldots,0,1)$. Now in $G_1$ we have the edge $u_{ki}^{(\nu)}$ instead of the edges $u_{ki_1;1},\ldots,u_{ki_{\nu-1}}^{(\nu-1)}$ in $G$. This edge has exactly the parity vector $\mathbf v'$. By merging with $R_\nu$ over this edge, we can also create a set of edges with the parity vector $\mathbf v$. Thus, if $G$ admits a pairing, then also either $G_1$ or $G_1\star G$ admits a pairing.   

    \item \textbf{The new $H$-node $k'$ is  contracted to the node $k$.} In this case  a desired diagram $G'$ exists with
    \begin{equation}\label{eq:n'n3}
        n'=n,\quad q'\ge q-\nu+1.
    \end{equation}
    Indeed, in this case each edge $u_{ki_m}^{(m)}$ is naturally paired with $u_{k'i_m}^{(m)}$ (for $m=1,\ldots,\nu-1$), and also $u_{k'i}^{(\nu)}$ can be identified with $u_{ki}^{(\nu)}$. If there were no subsequent mergers with $R_\nu$ over these edges (after the merger with $D_{2\nu}$), then the desired $G'$ is simply $G_1$ with the pairing inherited from $G$ by removing the pairs $(u_{k'i_m}^{(m)}, u_{ki_m}^{(m)})$.

    Suppose now that there have been mergers with $R_\nu$ over the edges in question, say with $u_{ki_m}^{(m)}$ with some $m\in\{1,\ldots,\nu-1\}$. Such a merger changes the color parity vector of the node $i_m$ from $(0,\ldots,0)$ to $(1,\ldots,1)$. Paring the resulting edges requires the node $i_m$ to be contracted to some other node $i'$ (or several nodes) that provide counterpart edges. These counterpart edges must collectively also have the color parity vector $(1,\ldots,1)$. If we remove the edges at the node $i_m$, since their color parity vector is $(1,\ldots,1),$ they can be compensated in the pairs with edges at the node $i'$ by new edges at $i'$ created through merging with $R_\nu$ over one of the existing edges.  

    This argument shows that in any case we can take $G'$ as $G_1$, possibly merged with some $R_\nu$. The bound (\ref{eq:n'n3}) holds in all cases (if there are extra mergers with $R_\nu$, the bound can even be strengthened since in these cases some of the vertices $i_m$ must be contracted).

    \item \textbf{The new $H$-node $k'$ is  not contracted to the node $k$, but is contracted to some other nodes in $G$.} In this case,  again,  a desired diagram $G'$ exists with
    \begin{equation}
        n'=n,\quad q'\ge q-\nu+1.
    \end{equation}
    The argument is similar to the respective proof for symmetric models in Section \ref{sec:sym_opt} and also to arguments above (remove the nodes $i_1,\ldots,i_{\nu-1}, i$ and $k'$ and contract together all the $p$-nodes contracted with some of $i_1,\ldots,i_{\nu-1}, i$; possibly merge $G_1$ with $R_\nu$ to compensate for the removed edges).    
\end{enumerate}
This completes the proof of the theorem in the asymmetric scenario.

\section{Extreme regimes}
\label{sec:hyperparameter_polygon_detailed}

\begin{table}
   \caption{
      \label{tab:extremal_simplices_both}
      Some extremal faces along with their interpretations.
   }
   \centering
   \begin{tabular}{|c|c|c|c|c|c|}
      \hline
      Face & Scaling condition & Natural $T$ & Parameterization & Learning & Interpretation \\
      \hline
      \hline
      \multicolumn{6}{|c|}{SYM, even $\nu$} \\
      \hline
      \hline
      A -- B & $H^2 \asymp p^\nu$, $p^{\nu - 1} \sigma^\nu \to \infty$ & $p^{\nu-1} \sigma^{2\nu-2}$ & Balanced & No & Free evolution \\
      \hline
      B -- E & $p^{\nu - 2} H^2 \sigma^{2\nu} \asymp 1$, $H^2 / p^\nu \to \infty$ & $\sigma^{\nu-2}$ & Over- & Rich & Mean-field \\
      \hline
      E & $p^{\nu - 2} H^2 \sigma^{2\nu} \to 0$, $p^{\nu - 1} \sigma^\nu \to 0$ & $\sigma^{\nu-2}$ & Agnostic & Rich & --- \\
      \hline
      E -- A & $p^{\nu - 1} \sigma^\nu \asymp 1$, $H^2 / p^\nu \to 0$ & $\sigma^{\nu-2}$ & Under- & Rich & --- \\
      \hline
      \hline
      \multicolumn{6}{|c|}{ASYM} \\
      \hline
      \hline
      A -- B & $H \asymp p^{\nu - 1}$, $p^{\nu - 1} \sigma^\nu \to \infty$ & $H \sigma^{2\nu-2}$ & Balanced & No & Free evolution \\
      \hline
      B -- C & $p^{\nu - 1} H \sigma^{2\nu} \asymp 1$, $H \sigma^\nu \to \infty$ & $H \sigma^{2\nu-2}$ & Over- & Lazy & NTK \\
      \hline
      C & $p^{\nu - 1} H \sigma^{2\nu} \to 0$, $H \sigma^\nu \to \infty$ & $H \sigma^{2\nu-2}$ & Over- & Lazy & NTK, $f(0) \equiv 0$ \\
      \hline
      C -- D & $H \sigma^\nu \asymp 1$, $H / p^{\nu - 1} \to \infty$ & $\sigma^{\nu-2}$ & Over- & Rich & Mean-field \\
      \hline
      D -- E & $H \asymp p^{\nu - 1}$, $H \sigma^\nu \to 0$ & $\sigma^{\nu-2}$ & Balanced & Rich & --- \\
      \hline
      E -- A & $p^{\nu - 1} \sigma^\nu \asymp 1$, $H / p^{\nu - 1} \to 0$ & $\sigma^{\nu-2}$ & Under- & Rich & --- \\
      \hline
   \end{tabular}
\end{table}

We interpret some of the extremal simplices of the hyperparameter polygon, \cref{fig:polygon_both}, below.
Our conclusions are summarized in \cref{tab:extremal_simplices_both}.
We compute the natural scaling for the inverse learning rate $T$ by balancing the base of power $s$ in the leading Pareto-term provided by \cref{th:pareto_front}.

\subsection{Symmetric scenario}

In this scenario, as \cref{th:pareto_front} states, the hyperparameter polygon is a triangle: see \cref{fig:polygon_both}, left.
Let us denote its summits as A, B, and E, as depicted on the figure.
\paragraph{Edge A-B.}
Here we have $s_D = s+1$, hence $s_R = 0$.
This means that the target is never used for computing the loss derivatives.
We call this regime \emph{free evolution}.
It becomes dominant when $p^\nu \asymp H^2$, while $p^{\nu - 1} \sigma^\nu \to \infty$.
That is, when the initialization is huge, the identity target is essentially zero compared to the initial model, and the model attempts to learn zero throughout most of its training time.
The natural scaling for $T$ is $T \asymp p^{\nu-1} \sigma^{2\nu-2}$.

The whole edge stays dominant when the model is balanced: $H \asymp p^{\nu/2}$.
When this condition gets violated, either Point A (underparameterization), or Point B (overparameterization) becomes solely dominant.

\paragraph{Edge B-E.}
Here we have $n = s_D+1$, hence $Q(n,s_D) = 1 + \left(\tfrac{\nu}{2} - 1\right) s_D$.
It becomes dominant when $p^{\nu - 2} H^2 \sigma^{2\nu} \asymp 1$, while $H^2 / p^\nu \to \infty$.
That is, $H$ has to be large compared to some power of $p$, hence this regime is overparameterized.
The natural scaling for $T$ is $T \asymp \sigma^{\nu-2}$.

Recall that $\nu = 2$ corresponds to matrix factorization, and the model is a conventional two-layer fully-connected network.
In this case, the edge B-E becomes dominant when $\sigma^2 \asymp 1 / H$, while $H / p \to \infty$.
This is nothing but a hyperparameter scaling that yields a mean-field limit discovered independently in a number of works (see \cref{sec:literature_overview_infinite_width}).
For two-layer nets, the training dynamics could be expressed as an evolution of a measure.
Under the above scaling, the moments of this measure stay finite in the limit.
\paragraph{Edge E-A.}
Here we have $n = 1$, hence $Q(n,s_D) = 1 + (\nu - 1) s_D$.
It becomes dominant when $p^{\nu - 1} \sigma^\nu \asymp 1$, while $H^2 / p^\nu \to 0$.
The natural scaling for $T$ is $T \asymp \sigma^{\nu-2}$.

For $\nu = 2$, the edge E-A becomes dominant when $\sigma^2 \asymp 1 / p$, while $p / H \to \infty$.
That is, the required scaling coincides with the mean-field one with $p$ and $H$ swapped.

\paragraph{Point E.}
This summit deserves a special consideration.
Recall that for Edge A-B the initialization variance $\sigma^2$ has to be large, while for the other two edges, it has to balance the growth of $p$ and $H$ in a specific way.
For Point E to become dominant, this variance has to be small: $p^{\nu - 2} H^2 \sigma^{2\nu} \to 0$ along with $p^{\nu - 1} \sigma^\nu \to 0$.
For $\nu = 2$, this yields a saddle-to-saddle regime extensively studied for linear networks: see \cref{sec:literature_overview_linear_nets}.
However, in our case, when the target is identity, the only saddle the training process encounters is the one at the origin.
This makes the saddle-to-saddle regime trivial.
Same as for Edges B-E and E-A, the natural scaling for $T$ is $T \asymp \sigma^{\nu-2}$.

\subsection{Asymmetric scenario}

As \cref{th:pareto_front} states, one or two extremal points are missing compared to the triangle of the symmetric scenario.
This gives a quadrangle for odd $s$, or a pentagon for even $s$, hence more edge-cases: see \cref{fig:polygon_both}, right.
In particular, the point C which appears as an extremal point due to exclusion of Point B from the triangle, yields a linearized training regime which was absent in the symmetric scenario; see below.

\paragraph{Edge A-B.}
This is the same free evolution as before, but the dominance condition changes: $p^{\nu - 1} \asymp H$, while $p^{\nu - 1} \sigma^\nu \to \infty$ and the natural scaling for $T$ is $T \asymp p^{\nu-1} \sigma^{2\nu-2}$ as before, or, equivalently, $T \asymp H \sigma^{2\nu-2}$.
Note that this condition coincides with that of the symmetric case when $\nu = 2$ (for matrix factorization).

\paragraph{Point C.}
This point corresponds to $(n, s_D) = (s, s-1)$.
In this case, $s_R = 2$, hence in contrast to free evolution, the target does appear in loss derivatives but only weakly.

This corresponds to a linearized learning regime akin to NTK, see \cref{sec:literature_overview_infinite_width}, but starting from a zero model.

The natural scaling for $T$ is $T \asymp H \sigma^{2\nu-2}$.
Point C becomes dominant when $p^{\nu - 1} H \sigma^{2\nu} \to 0$, while $H \sigma^\nu \to \infty$.
In particular, this implies that $H / p^{\nu - 1} \to \infty$, hence this regime is overparameterized.

\paragraph{Edge B-C.}
This edge always contains only two Pareto-optimal points: B and C.
Point B adds terms with $s_R = 0$; because of this, the corresponding regime is still linearized but the initial model is no longer zero.

Edge B-C becomes dominant when $p^{\nu - 1} H \sigma^{2\nu} \asymp 1$, while $H \sigma^\nu \to \infty$.
We again need $H / p^{\nu - 1} \to \infty$, hence this regime is still overparameterized.
The natural scaling for $T$ is again $T \asymp H \sigma^{2\nu-2}$.
When $\nu = 2$, for this edge to become dominant one needs $\sigma^2 \asymp 1 / \sqrt{p H}$.
This is nothing but a conventional NTK initialization scaling: see \cref{sec:literature_overview_infinite_width}.

We note that the original NTK paper of \cite{jacot2018neural} applies a constant learning rate, while we claim $T \asymp H \sigma^2$ to be natural for $\nu = 2$.
There is no contradiction.
The reason of this discrepancy lies in the fact that \cite{jacot2018neural} initialize neural network's weights from $\mathcal{N}(0,1)$, while putting $\sigma$ as a pre-factor in the form $\phi(\sigma \times W)$ for $\phi$ being an activation function and $W$ being a weight matrix.
In contrast, we initialize our weights from $\mathcal{N}(0,\sigma^2)$ with no pre-factors.
The GF dynamics in these two cases become equivalent if we rescale the learning rate appropriately; this gives us the claimed scaling for $T$.

\paragraph{Edge C-D.}
Here we have $n = s_D+1$, hence $Q(n,s_D) = 1$.
It becomes dominant when $H \sigma^\nu \asymp 1$, while $p^{\nu - 1} \sigma^\nu \to 0$.
In particular, this implies that $H / p^{\nu - 1} \to \infty$, hence this regime is overparameterized.
Akin to the symmetric scenario, we recover a conventional mean-field scaling when $\nu = 2$.

The natural scaling for $T$ is $T \asymp \sigma^{\nu-2}$.

\paragraph{Edge D-E.}
Here we have $s_D = 1$ for even $s$ and $s_D = 0$ for odd $s$.
Therefore $s_R$ is maximal, and only the highest-order target correlations survive in loss derivatives.

The natural scaling for $T$ is again $T \asymp \sigma^{\nu-2}$.
For this edge to become dominant, one needs $H \sigma^\nu \to 0$, while $H \asymp p^{\nu - 1}$.
In words, it is a critically overparameterized regime with small initialization.
Same as for Point E under the symmetric scenario, this edge yields a trivial (single saddle) case of a well-studied saddle-to-saddle regime for linear nets: see \cref{sec:literature_overview_linear_nets}.

\paragraph{Edge E-A.}
Similar to the symmetic scenario, we have $n = 1$, hence $Q(n,s_D) = 1 + (\nu - 1) s_D$.
This edge becomes dominant when $p^{\nu - 1} \sigma^\nu \asymp 1$, while $H / p^{\nu - 1} \to 0$.
That is, here we have a critically initialized insufficiently overparameterized regime.
Akin to the symmetric scenario, the dominance condition gives that of the mean-field regime (Edge C-D) after swapping $H$ and $p$ when $\nu = 2$.

The natural scaling for $T$ is again $T \asymp \sigma^{\nu-2}$.

\section{Recurrence relations and partial differential equations} \label{sec:recurrence_pdf}

We first clarify the context of Theorem \ref{th:recurrence} and then provide its proof. The coefficients $C_\mathbf{n}$ appearing in the definition of the generating function are indexed by $\mathbf n\in \mathbb N_0$, where $\mathbb N_0=\{0,1,2,\ldots\}$. However, padding the coefficients by 0 allows us to formally extend summation to all $\mathbf n\in \mathbb Z$:
\begin{equation}\label{eq:fpowser}
    h(\mathbf x)\sim\sum_{\mathbf n\in\mathbb Z^d}C_{\mathbf n}\mathbf x^{\mathbf n}.
\end{equation}
It is important for the recurrence to hold for all $\mathbf n\in \mathbb Z^d$ (otherwise a proper handling of the boundary conditions would be needed). 

The differential operator $\sum_{\mathbf k\in K}\mathbf x^{-\mathbf k}P_\mathbf{k}(\mathbf x\partial_{\mathbf x})$ appearing in the statement is understood in the sense
\begin{equation}
    \sum_{\mathbf k\in K}x_1^{-k_1}\cdots x_d^{-k_d}P_\mathbf{k}(x_1\partial_{x_1},\ldots,x_d\partial_{x_d}),
\end{equation}
where $\mathbf x=(x_1,\ldots,x_d)$ and $\mathbf k=(k_1,\ldots,k_d)$. When applied to a formal power series (\ref{eq:fpowser}), such a differential operator naturally produces another formal power series with well-defined coefficients. The statement of the theorem is that this series vanishes, i.e. all the coefficients are 0. 

\begin{proof}[Proof of Theorem \ref{th:recurrence}]
For any polynomial $P$
\begin{equation}
P(\mathbf x\tfrac{\partial}{\partial\mathbf x})h\sim\sum_{\mathbf n} C_{\mathbf n}P(\mathbf n)\mathbf x^{\mathbf n}=\sum_{\mathbf n} C_{\mathbf n+\mathbf k}P(\mathbf n+\mathbf k)\mathbf x^{\mathbf n+\mathbf k}.
\end{equation}
It follows that
\begin{equation}
\sum_{\mathbf k\in K}\mathbf x^{-\mathbf k}P_{\mathbf k}(\mathbf x\tfrac{\partial}{\partial\mathbf x})h\sim \sum_{\mathbf k\in K}\sum_{\mathbf n}C_{\mathbf n+\mathbf k}P_{\mathbf k}(\mathbf n+\mathbf k)\mathbf x^{\mathbf n}\sim 0.
\end{equation}
\end{proof}

\section{Free evolutions}\label{sec:free-evo-app}
\begin{figure}    
    \begin{subfigure}{1\textwidth}
    \centering
    \includegraphics[clip, trim=20mm 14mm 0mm 14mm, scale=0.35]{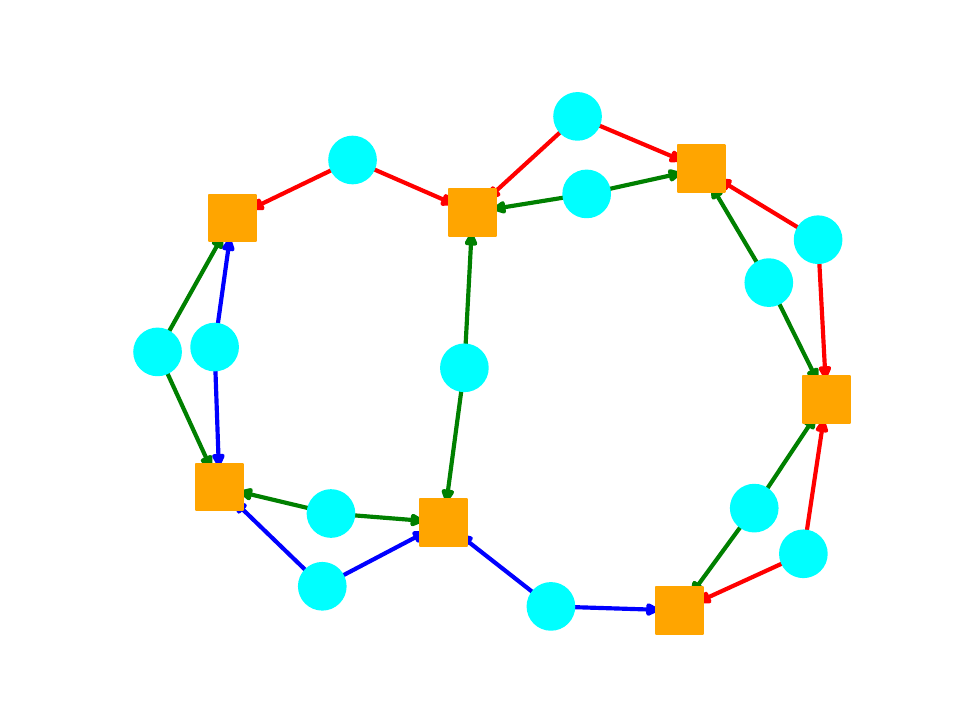}
    \includegraphics[clip, trim=0mm 14mm 18mm 14mm, scale=0.35]{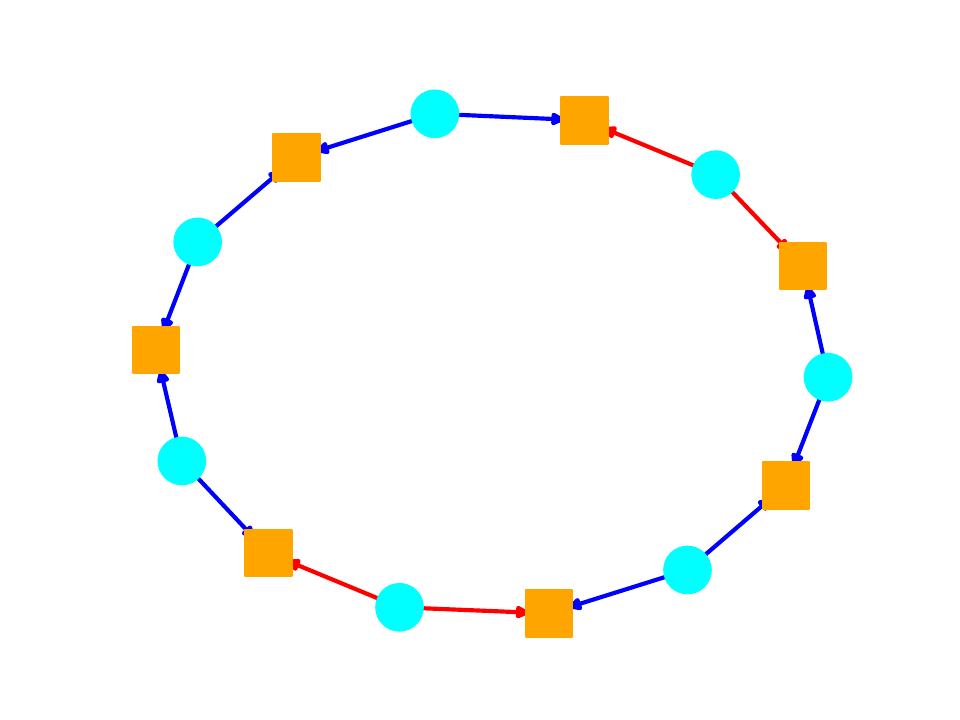}
        \caption{Uncontracted diagrams from $D_6^{\star s}$ (\textbf{left}) and $D_4^{\star s}$ (\textbf{right}). }
    \end{subfigure} 

        \begin{subfigure}{1\textwidth}
    \centering
    \includegraphics[clip, trim=20mm 14mm 18mm 14mm, scale=0.35]{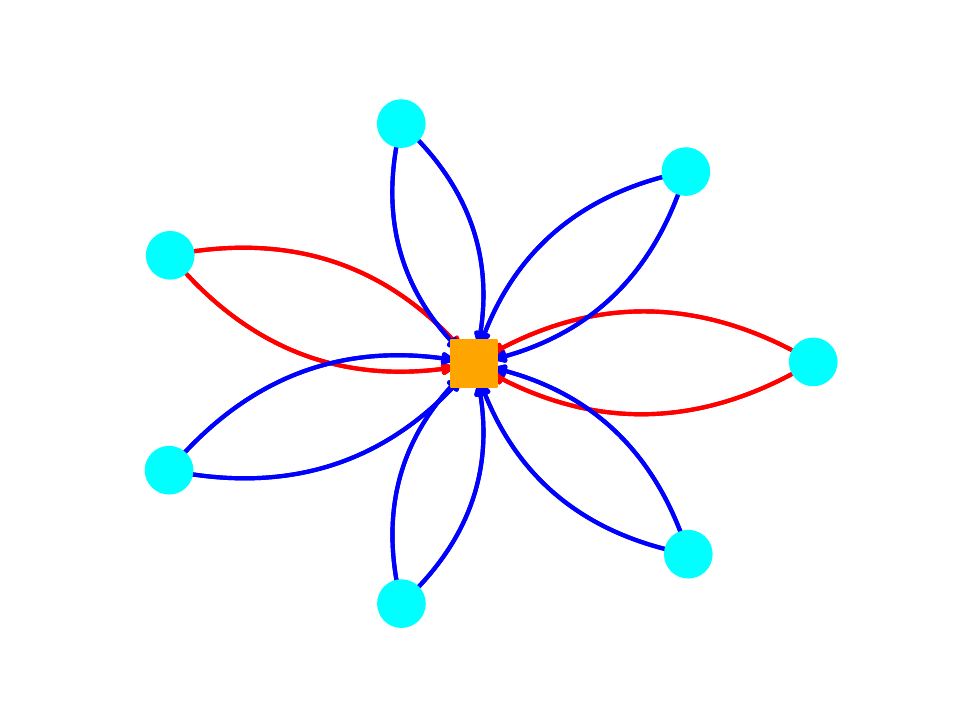}
        \caption{An optimally contracted diagram for the free underparameterized regime: a ``flower'' with one $H$-node and $\nu+(\nu-1)s$ ``petals''. Each petal has two edges of a matching color.}
    \end{subfigure}   
    \caption{Diagrams from the free evolutions $D_{2\nu}^{\star s}$.}
    \label{fig:diagrams_free}
\end{figure}

\paragraph{General diagram expansion.} Recall that free evolution is a special regime where the target tensor consist of only zeros, or equivalently when the target is just removed from the loss (\ref{eq:loss}). Consequently, the expected loss evolution in our diagram expansion includes only the diagrams $D_{2\nu}$ and no diagrams $R_\nu$:
\begin{equation}\label{eq:explossfreeapp}
    \mathbb E[L(t)]\sim \sum_{s=0}^\infty \mathbb E[(\tfrac{1}{2}D_{2\nu})^{\star(s+1)}]\frac{(-t)^s}{T^s s!} \sim \sum_{s=0}^\infty \mathbb E[D_{2\nu}^{\star(s+1)}]\frac{(-t)^s}{2^{s+1} T^s s!}.
\end{equation}
See Figure \ref{fig:diagrams_free} (a) for examples of diagrams in $D_{2\nu}^{\star s}$.

\subsection{Underparameterized free evolutions}

The large-$p,H$ behavior of the loss function is determined by Pareto-optimal terms described in Theorem \ref{th:pareto_front} and corresponding to minimal contractions of diagrams in $D_{2\nu}^{\star s}$. The combinatorics of such minimal contractions is complicated for general free regimes. However, it is substantially simplified in extreme cases, in particular in the underparameterized regime ($H\ll p^{\nu-1}$ in the asymmetric or $H\ll p^{\nu/2}$ in the symmetric even-$\nu$ scenario).

In this regime, by Theorem \ref{th:pareto_front}, the leading terms in $\mathbb E[(\tfrac{1}{2}D_{2\nu})^{\star(s+1)}]$ are given by contracted diagrams that have $n=1$ $H$-node and $q=\nu+(\nu-1)s$ $p$-nodes. Such diagrams have the form of ``flowers'' with  $\nu+(\nu-1)s$ ``petals'', each consisting of a $p$-node and two edges of matching colors (see Fig. \ref{fig:diagrams_free} (b)). This contraction has a unique edge pairing. 

This picture applies to both  symmetric and asymmetric scenario, the difference is only that in the symmetric scenario there is only one color. The arguments in both scenarios will be similar, up to different numerical coefficients due to different color distributions.

\paragraph{Asymmetric scenario.} For convenience, define the generating function $f(x)$ by
\begin{equation}
    f(x)=\sum_{s=0}^\infty C_s x^s,
\end{equation}
where the coefficient $C_s$ multiplied by $s!$ is the total number of minimally contracted diagrams (flowers) produced in $\mathbb E[D_{2\nu}^{\star(s+1)}]$. By Eq. (\ref{eq:explossfreeapp}), we can relate the loss  to $f$ by
\begin{align}
    \mathbb E[L(t)]\sim{}&  \sum_{s=0}^\infty C_s s! p^{\nu+(\nu-1)s}H\sigma^{2(\nu+(\nu-1)s)}\frac{(-t)^s}{2^{s+1} T^s s!}\\
    ={}&\frac{p^\nu H\sigma^{2\nu}}{2}f(-p^{\nu-1}\sigma^{2(\nu-1)}t/(2T)).
\end{align}
Observe that a contracted diagram in $D_{2\nu}^{\star(s+1)}$ is obtained by merging a contracted diagram in $D_{2\nu}^{\star s}$ with $D_{2\nu}$, and there are $2\cdot 2(1+(\nu-1)s)$ possibilities for that, since $D_{2\nu}^{\star s}$ has $2(1+(\nu-1)s)$
edges, and there are 2 edges of matching color in $D_{2\nu}$. It follows that
\begin{equation}\label{eq:freerecasymunder}
    sC_s = 4(1+(\nu-1)s)C_{s-1}.
\end{equation}
This condition also holds for $s=0$, since both the left- and right-hand sides vanish. Using Theorem \ref{th:recurrence}, we then obtain the ODE
\begin{equation}
    xf'=4x(\nu f+x(\nu-1)f'),
\end{equation}
or, simplifying,
\begin{equation}
    f'=4(\nu f+x(\nu-1)f').
\end{equation}
Taking into account the initial condition $f(0)=C_0=1$, its solution is
\begin{equation}
    f(x)=(1-4(\nu-1)x)^{-\frac{\nu}{\nu-1}}.
\end{equation}
We thus obtain
\begin{equation}
    \mathbb E[L(t)]\sim \frac{p^\nu H\sigma^{2\nu}}{2}(1+2(\nu-1)p^{\nu-1}\sigma^{2(\nu-1)}t/T)^{-\frac{\nu}{\nu-1}},
\end{equation}
as claimed.

\paragraph{Symmetric even-$\nu$ scenario.} The arguments in this scenario are completely similar, up to replacing the coefficient 4 in Eq. (\ref{eq:freerecasymunder}) by $4\nu$, since in this case any edge in $D_{2\nu}$ can be used for merging.

\subsection{Overparameterized free evolutions}
\paragraph{Asymmetric scenario.}
Consider a free diagram $G$ resulting by merging $D_{2\nu}^{*(s+1)}$. From the description of Pareto-optimal terms, after minimal contraction free overparameterized diagrams have $n=s+1$  $H$-nodes and $\nu$  $p$-nodes. In particular, only two $H$-nodes are contracted.

By examining the merging process, observe that each $H$-node in $G$  is either \emph{even} or \emph{odd}, in the sense that the associated $\nu$-dimensional edge color parity vector is either $(0,\ldots,0)$ or $(1,\ldots,1)$.  The parity vector changes every time this node is involved in a merger. Initially, $D_{2\nu}$ has two $H$-nodes. Whenever we merge with $D_{2\nu}$, we create one new odd $H$-node. It follows that the number of odd $H$-nodes always either increases or remains the same. The latter happens  iff we merge over an edge of an odd $H$-node. For a pairing, we need all contracted $H$-nodes to be even. 

It follows that a diagram $G$ makes a nonvanishing contribution only if it was created by mergers involving only odd $H$-nodes -- in this case $G$ has exactly two odd $H$-nodes that can be contracted to ensure all contracted $H$-nodes are even. 

It can be easily seen by induction that in such diagrams the edges of a particular color form a non-self-intersecting, non-branching path connecting one odd $H$-node to another. Each $p$-node belongs to one of such paths, so in particular can be assigned a color.

In the optimal contraction the $p$-nodes are contracted to $\nu$ nodes corresponding to different colors. In this case the path of each color forms a flower with center corresponding to the contracted $p$-node and petal ends given by different $H$ nodes. There is a unique valid edge pairing in such a contracted diagram. This gives
\begin{equation}
\mathbb E[D_{2\nu}^{\star (s+1)}]\sim (4\nu)^s p^\nu H^{s+1}\sigma^{2(\nu s+\nu-s)}
\end{equation}
so that 
\begin{equation}
\mathbb E[L(t)]\sim \frac{p^\nu H\sigma^{2\nu}}{2}e^{-2\nu \sigma^{2(\nu-1)}Ht/T}.
\end{equation}

\paragraph{Symmetric even-$\nu$ scenario.}
In this scenario all the $H$-vertices are uncontracted ($n=s+2=s_D+1$). The number of contracted $p$-nodes is $q=1+(\tfrac{\nu}{2}-1)(s+1).$

We can enumerate all optimal nonvanishing contracted diagrams and the relevant edge pairings as follows. Consider a diagram $G$ created from a diagram in $D_{2\nu}^{*s}$ by merging with $D_{2\nu}$. The merger creates a new $H$-node, say $k'$. This $H$-node is uncontracted, so all its $\nu$ $p$-neighbors must be contracted to enable an edge pairing. The minimal contraction divides them in pairs; in this case the merger adds exactly $\tfrac{\nu}{2}-1$ contracted $p$-nodes. Once the pairing of the new $p$-nodes is chosen, the pairing of new edges is uniquely defined.   

Taking into account various possibilities for choosing the merging edges and pairing the new $p$-nodes, 
\begin{equation}
\mathbb E[D_{2\nu}^{\star(s+1)}]\sim 4\nu(\nu-1)!!(\nu s-s+1)p^{\nu/2-1}H \sigma^{2(\nu-1)} \mathbb E[D_{2\nu}^{\star s}].
\end{equation}
Recalling that
\begin{equation}
\mathbb E[L(t)]\sim \tfrac{1}{2}\sum_{s=0}^\infty \mathbb E[D_{2\nu}^{\star(s+1)}]\frac{(-t)^s}{(2T)^s s!},
\end{equation}
we get the equation
\begin{equation}
2T \frac{d}{dt} \mathbb E[L(t)]=-4\nu(\nu-1)!! p^{\nu/2-1} H \sigma^{2(\nu-1)}\Big(t(\nu-1)\frac{d}{dt} \mathbb E[L(t)]+\nu\mathbb E[L(t)] \Big).
\end{equation}
This implies 
\begin{align}
\mathbb E[L(t)]\sim{}& \mathbb E[L(0)](1+2\nu(\nu-1)!!(\nu-1)p^{\nu/2-1} H \sigma^{2(\nu-1)} t/T)^{-\frac{\nu}{\nu-1}}\\
={}&\frac{(\nu-1)!!p^{\nu/2}H^2\sigma^{2\nu}}{2}(1+2\nu(\nu-1)!!(\nu-1)p^{\nu/2-1} H \sigma^{2(\nu-1)} t/T)^{-\frac{\nu}{\nu-1}}.
\end{align}

\subsection{General free evolutions for SYM with even $\nu\ge 4$}
\paragraph{The recurrence.} For a symmetric, even-$\nu$ scenario, let $C_{ns}$ denote the total number of Pareto-optimal edge pairings in $D_{2\nu}^{\star(s+1)}$ with $n$ contracted $H$-nodes. Recall from Theorem \ref{th:pareto_front} that the respective optimal number of $p$-nodes is $q=\nu+(\nu-1)s-\tfrac{\nu}{2}(n-1).$
 
\begin{proposition} For even $\nu\ge 4,$ the coefficients $C_{ns}$ satisfy the recurrence
\begin{equation}\label{eq:freesymnucrec}
C_{ns}=2\nu(2(\nu-1)s+2)[(\nu-1)!!C_{n-1,s-1}+C_{n,s-1}].
\end{equation}
\end{proposition}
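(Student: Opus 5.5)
The plan is to prove the recurrence by following the last merger in the construction of a Pareto-optimal contracted diagram of $D_{2\nu}^{\star(s+1)}$. Every diagram $G\in D_{2\nu}^{\star(s+1)}$ arises as $G\in G_1\star D_{2\nu}$ with $G_1\in D_{2\nu}^{\star s}$. By the edge count of Section \ref{sec:largesize} (with $s_R=0$), $G_1$ has $2\nu+2(\nu-1)(s-1)=2(\nu-1)s+2$ edges. In SYM every one of the $2\nu$ edges of $D_{2\nu}$ has the same colour as the chosen edge of $G_1$. So there are exactly $2\nu(2(\nu-1)s+2)$ ways to perform the last merger, which is the prefactor in \eqref{eq:freesymnucrec}. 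It then suffices to show two things.

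\emph{(i)} Every Pareto-optimal contracted diagram with an edge pairing of $G$ restricts to a Pareto-optimal one of $G_1$, in exactly one of the following two ways:
\begin{itemize}
\item[(a)] $k'$ is contracted to $k$, with $n$ and $q-(\nu-1)$;
\item[(b)] $k'$ is uncontracted and the $\nu$ $p$-nodes $i_1,\dots,i_{\nu-1},i$ are contracted in $\nu/2$ disjoint pairs, none merged with any other node, with $n-1$ and $q-(\nu/2-1)$.
\end{itemize}
\emph{(ii)} Conversely, each optimal pairing of $G_1$ extends in exactly one way via (a), and in exactly $(\nu-1)!!$ ways via (b), one for each perfect matching of the $\nu$ new $p$-nodes. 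Given (i) and (ii), the count is immediate: $C_{ns}$ equals the prefactor times $[C_{n,s-1}+(\nu-1)!!\,C_{n-1,s-1}]$.

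The extension part (ii) is the constructive half. It repeats the ``retaining $n$'' and ``increasing $n$'' transformations of Section \ref{sec:symevenreal}. Those transformations produce exactly the optimal counts $q=\nu+(\nu-1)s-\tfrac{\nu}{2}(n-1)$ of Theorem \ref{th:pareto_front}. I need to check that the extended pairing is uniquely determined in each case:
\begin{itemize}
\item in (a), the new edges $u_{ki_m},u_{k'i_m}$ are forced to pair with each other, and $u_{k'i}$ takes over the partner of the removed $u_{ki}$;
\item in (b), the matching of $p$-nodes fixes the pairing of the $k'$-edges, and then the $k$-edges are forced, with the single unpaired edge $u_{ki_{m_*}}$ pairing with the former partner of $u_{ki}$.
\end{itemize}

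The restriction part (i) reuses the three-case induction of Section \ref{sec:sym_opt}. In case 1 there ($k'$ uncontracted), that argument already yields $q_1\ge q-\nu/2+1$ and $n_1=n-1$. Optimality of $G$ together with the bound \eqref{eq:sym_nu_even_cond} for $G_1$ forces equality. Equality means the $\nu$ new $p$-nodes form exactly $\nu/2$ groups containing no old node; otherwise a group would be absorbed and $q$ would drop. This is configuration (b). In case 2 ($k'$ contracted to $k$), equality similarly forces the new $p$-nodes to remain singleton, uncontracted groups, which is configuration (a). I must also show that the rewiring used in that proof is not needed, i.e.\ that in an optimal configuration the pairing is already of the canonical form, so that the restriction map is a bijection rather than a surjection.

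The main obstacle is case 3: $k'$ is contracted to some old $H$-node other than $k$. Here the generic argument only gives $q_1\ge q-\nu+1$ with $n_1=n$, which is exactly the tight bound. I need a \emph{strict} loss for $\nu\ge4$, which is precisely where $\nu=2$ behaves differently, consistent with the remark in Section \ref{sec:free-evo}. My first attempt is to count new $p$-groups. In case 3, each new $p$-node $i_m$ carries two edges, $u_{ki_m}$ and $u_{k'i_m}$, attached to two distinct contracted $H$-nodes. Neither edge can be paired with the other, so each new node must share its contracted group with another node holding an edge at the same contracted $H$-node. If the new nodes paired only among themselves, the $k$-side and $k'$-side pairings would be inconsistent unless $k'\sim k$. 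So I expect at most $\nu/2$ new groups, and at least one of them merged with an old node. That gives $q\le q_1+\nu/2$, which is strictly less than $q_1+\nu-1$ when $\nu\ge4$. Hence case 3 contributes only non-Pareto-optimal terms and drops out of the recurrence.
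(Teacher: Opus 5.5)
Your architecture matches the paper's: count the $2\nu(2(\nu-1)s+2)$ last mergers, get the two terms from the $n$-retaining and $n$-increasing extensions, and run the three cases of Section~\ref{sec:sym_opt}, discarding case 3 for $\nu\ge4$. Cases 1 and 2 and the extension counts are fine. The bijection you flag is routine: in (a) each singleton new node, and in (b) each $k'$-class and each new-only pair, carries exactly two edges, so the only freedom is the partner of $u_{ki_{m_*}}$, which corresponds to the partner of $u_{ki}$.

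\textbf{The gap is case 3.} This is the one step where $\nu\ge4$ matters, and it is not closed. To show non-optimality you must exhibit an \emph{admissible} contraction of the smaller diagram $G_1$, meaning every (H-group, p-group) edge class has even size, with $q_1>q-(\nu-1)$. Counting groups of new nodes inside $G$ does not produce one. The contraction of $G_1$ obtained by simply deleting $i_1,\dots,i_{\nu-1}$ is in general not admissible: every old edge whose partner was some $u_{ki_m}$ or $u_{k'i_m}$ loses its partner.

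The decisive configuration is the one the paper isolates. Each of $i_1,\dots,i_{\nu-1},i$ lies in a different group that also contains old $p$-nodes, and every new edge is paired with an old one. This satisfies your condition that each new node shares its group with another node. Yet $\nu$ groups meet the new nodes, so ``at most $\nu/2$ new groups'' fails, and no group consists of new nodes only. After deletion, each of these $\nu$ groups has odd multiplicity at both $K$ and $K'$. For this configuration the merge-all contraction gives only $q\le q_1+\nu-1$, which is the tight bound. So your inequality $q\le q_1+\nu/2$ is asserted, not derived.

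\textbf{The missing idea is to re-contract $G_1$ and verify admissibility by parity.} The paper first uses the merge-all contraction, $q=q_1+m-1$ with $m$ the number of groups meeting $\{i_1,\dots,i_{\nu-1},i\}$, to force $m=\nu$. It then merges the groups of $i_1,i_2$ into one cluster and those of $i_3,\dots,i_{\nu-1},i$ into another. The freed old partners number $2$ at $K$ and $2$ at $K'$ in the first cluster, and $\nu-2$ at each in the second (counting the restored $u_{ki}$), so they pair up. This gives $q=q_1+\nu-2<q_1+\nu-1$, which needs $\nu\ge4$.

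Your bound $\nu/2$ can also be made rigorous. After deletion, a group $J$ has odd multiplicity at $K$ and at $K'$ if it contains an odd number of the nodes $i_1,\dots,i_{\nu-1},i$, and even multiplicity everywhere otherwise. Groups made only of new nodes contain an even number $\ge2$ of them and disappear. Merging the odd groups in pairs is then admissible and yields $q-q_1=\#\{\text{new-only groups}\}+\tfrac12\#\{\text{odd groups}\}\le\nu/2<\nu-1$. One of these two constructions has to be carried out.

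A minor point: in (b), the pair containing $i$ keeps whatever old nodes were already contracted with $i$. So ``none merged with any other node'' holds only for the other $\nu/2-1$ pairs.
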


\noindent
\textbf{Remark:} this recurrence \textbf{does not hold} at $\nu=2$! In this case the l.h.s. will generally be larger than the r.h.s. because of additional terms.

\begin{proof}
We establish a correspondence between Pareto-optimal pairings at $s$ and $s-1$. Given a diagram $G$, consider a diagram $G_1$ in $G\star D_{2\nu}$ obtained by replacing an edge $u_{ki}$ by the product $u_{k'i}\prod_{r=1}^{\nu-1}u_{ki_r}u_{k'i_r}$. The factor $2\nu(2(\nu-1)s+1)$ in Eq. \eqref{eq:freesymnucrec} corresponds to the number of diagrams $G_1$ created in this way from $G$ -- i.e. the number of ways to choose the edges in $D_{2\nu}$ and $G$ for merging -- in the case when the number of edges in $G$ is as in $D_{2\nu}^{\star s},$ i.e. $2(\nu-1)s+2$.   

Given a pairing in $G$, recall two optimal constructions of pairings for $G_1$:
\begin{enumerate}
\item \textbf{[increasing $n$]} Divide the edges $(u_{k'i_r})_{r=1}^{\nu-1}$ and $u_{k'i}$ into pairs and contract respective pairs of $p$-nodes $i_1,\ldots,i_{\nu-1},i$. Assume w.l.o.g. that $i$ is contracted to $i_{\nu-1}$. Form also respective pairs of edges $(u_{ki_r})_{r=1}^{\nu-2}$. Among the newly created edges, we are left with only one unpaired edge $u_{ki_{\nu-1}}$ which is then paired with the edge in $G$ that was previously paired with $u_{ki}$.

Compared to the contracted $G$, the contracted $G_1$ has 1 additional $H$-node and $\tfrac{\nu}{2}-1$ additional $p$-nodes. The number of pairings of $G_1$ generated from one pairing of $G$ is $(\nu-1)!!$. This gives the first term in Eq. \eqref{eq:freesymnucrec}. 

\item \textbf{[retaining $n$]} Contract the $H$-nodes $k$ and $k'$ and pair each $u_{ki_r}$ with $u_{k'i_r}$. The edge $u_{k'i}$ is paired with the edge that was paired with $u_{ki}$ in $G$.

Compared to the contracted $G$, the contracted $G_1$ has no additional $H$-nodes and $\nu-1$ additional $p$-nodes. There is only one pairing of $G_1$ generated in this way from one pairing of $G$. This gives the second term in Eq. \eqref{eq:freesymnucrec}. 
\end{enumerate}

We prove now that \emph{all} optimal pairings in $G_1$ are obtained by these two methods if $\nu\ge 4$. Recall from the proof of optimality in Section \ref{sec:proof_opt} that any optimal pairing of $G_1$ can be mapped to an optimal pairing of $G$, with three cases:

\begin{enumerate}
\item \textbf{The new $H$-node $k'$ is not contracted to any other other node in $G_1$.} This case corresponds to the case of increasing $n$ above. The pairing in $G_1$ must necessarily be obtained from a pairing in $G$ as indicated there.

\item \textbf{The new $H$-node $k'$ is  contracted to $k$.} This case corresponds to the case of retained $n$ above. The pairing in $G_1$ must necessarily be obtained from a pairing in $G$ as indicated there. 

\item \textbf{The new $H$-node $k'$ is not contracted to $k$, but is contracted to some other node in $G$.} This case does not correspond to either of the two above methods. This option is indeed realized for $\nu=2$, but we will show that it is not compatible with Pareto-optimality for $\nu\ge 4$.

Recall that given an edge pairing in $G_1$, we can obtain an edge pairing in $G$ by contracting all the $p$-nodes $i_1,\ldots,i_{\nu-1},i$ and then restructuring the pairing so as to get the new edges to be paired among themselves, and also the old edges to be paired among themselves. Let the nodes $i_1,\ldots,i_{\nu-1},i$ form $m$ contracted groups in $G_1$. Denoting the number of contracted $p$-nodes in $G_1$ and $G$ by $q_1$ and $q$, respectively, we have
\begin{equation}
q_1=q+m-1.
\end{equation}
Since the number of contracted $H$-nodes in $G_1$ and $G$ is the same, for Pareto-optimality of the pairing in $G_1$ we need $q_1\ge q+\nu-1$, i.e.
\begin{equation}
m=\nu.
\end{equation}
In other words, the Pareto-optimality requires all the $p$-nodes $i_1,\ldots,i_{\nu-1},i$ to be uncontracted among each other in $G_1$. All the new edges $u_{ki_r},u_{k'i_r}$ must be contracted to some old edges in $G$. Note, however, that in this case and if $\nu\ge 4,$ we can form a more efficient contraction admitting a pairing in $G$. Indeed, instead of contracting all the $p$-nodes $i_1,\ldots,i_{\nu-1},i$, contract separately the two groups $\{i_1,i_2\}$ and $\{i_3,\ldots,i_{\nu-1},i\}$. This gives an additional $p$-degree of freedom in contracted $G$:
\begin{equation}
q=q_1-\nu+2;\quad q_1=q+\nu-2.
\end{equation}     
At the same time, this contraction admits a valid paring in $G$. Indeed, in the group $\{i_1,i_2\}$, each of the new edges $u_{ki_1}, u_{ki_2}, u_{k'i_1}, u_{k'i_2}$ is paired with some old edge in $G$. We can remove the four new edges and form two pairs among the respective old edges (reflecting the pairs $u_{ki_1}, u_{ki_2}$ and $u_{k'i_1}, u_{k'i_2}$). Similarly, a consistent pairing can be constructed in the group $\{i_3,\ldots,i_{\nu-1},i\}$. 

Thus, the value $q_1$ cannot be optimal, since $q_1<q+\nu-1$ for some $q$ corresponding to a valid pairing in $G$ with the same $n$.

Note that the above sub-optimality argument breaks down for $\nu=2$ since in this case we have just two $p$-nodes $\{i_1,i\}$. In fact, the respective pairings and contractions in $G$ and $G_1$ can generally be optimal in this case.     
\end{enumerate} 

\textbf{Remark:} The above proof reveals a ``merger-contraction commutativity'' holding for $\nu=4$: the optimal contractions of the mergers $G\star D_{2\nu}$ are precisely those obtained by merging optimal contractions of $G$ and $D_{2\nu}$. Specifically, the diagram $D_{2\nu}$ has two kinds of optimal contractions, associated with contracting $H$-nodes or $p$-nodes:    
$$D_8\quad\includegraphics[clip, trim=20mm 13mm 18mm 14mm, scale=0.2, valign=c]{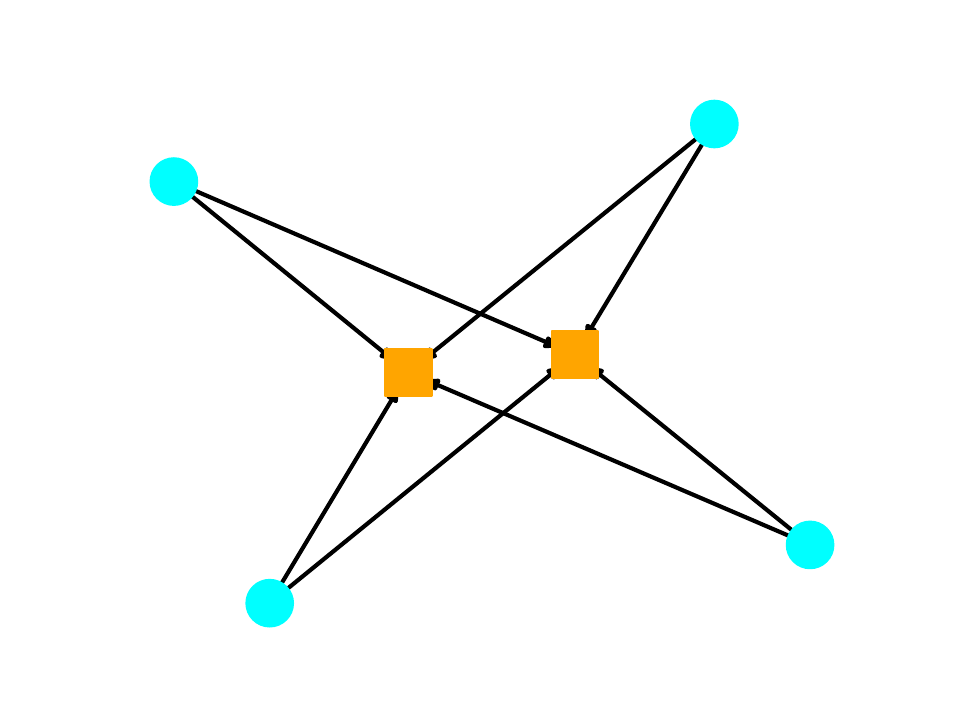}
\to
\includegraphics[clip, trim=20mm 13mm 18mm 14mm, scale=0.2, valign=c]{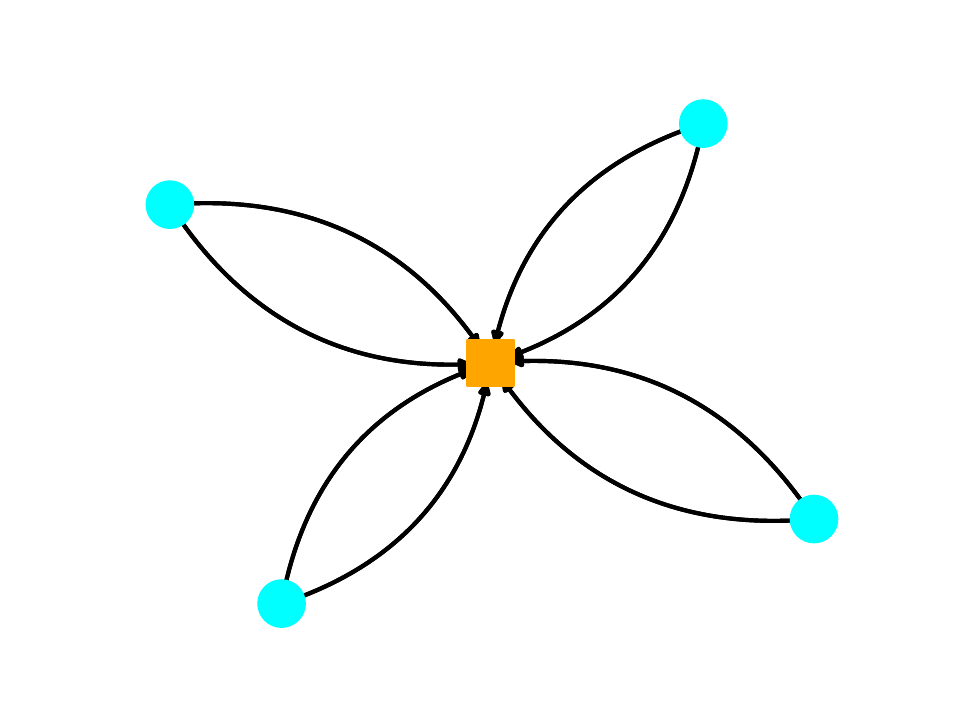}\text{ or }
\includegraphics[clip, trim=20mm 25mm 18mm 10mm, scale=0.2, valign=c]{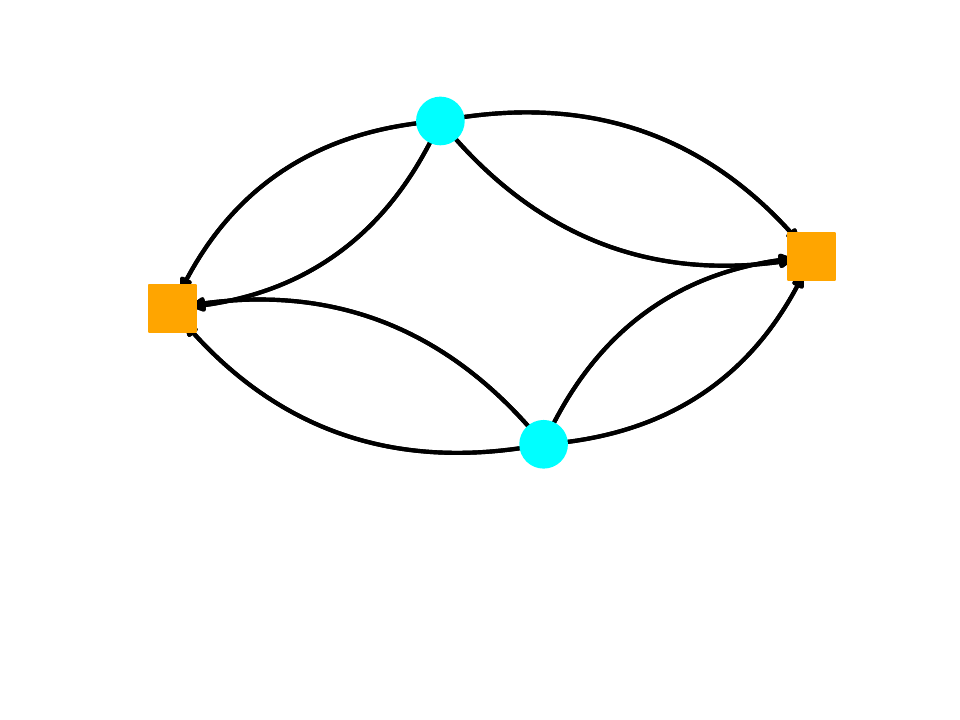}
$$
These two kinds correspond to the above ``retaining $n$'' and ``increasing $n$'' scenarios, respectively. 

\end{proof}

\paragraph{Expected loss.}
\begin{align} \mathbb E[L(t)] &\sim \frac{1}{2}\sum_{s=0}^\infty \mathbb E[D_{2\nu}^{\star(s+1)}]\frac{(-t)^s}{(2T)^s s!} = \frac{1}{2}\sum_{s=0}^\infty \sum_{n=1}^{s+2} C_{n s} p^{Q(n,s)} H^n \sigma^{2(\nu-1)s + 2 \nu} \frac{(-t)^s}{(2T)^s s!} =\\ &= \frac{p^{3 \nu / 2} \sigma^{2 \nu} }{2} \sum_{s=0}^\infty \frac{C_{n s}}{s!} \left( -p^{(\nu-1)} \sigma^{2(\nu-1)} \frac{t}{2T} \right)^s \sum_{n=1}^{s+2} \left( \frac{H}{p^{\nu/2}} \right)^n =\\ &= \frac{p^{3 \nu / 2} \sigma^{2 \nu} }{2} f\left(-p^{(\nu-1)} \sigma^{2(\nu-1)} \frac{t}{2T}, \frac{H}{p^{\nu/2}}\right), 
\end{align} 
where generating function $f(x,y)$ is defined by 
\begin{equation} f(x,y) = \sum_{s=0}^\infty \frac{1}{s!} C_{n s} x^s \sum_{n=1}^{s+2} y^n. 
\end{equation}
Absorbing $s!$ into coefficients and using the proposition we get recurrence
\[
s\,C_{n s}=4\nu\big((\nu-1)s+1\big)\Big((\nu-1)!!\,C_{n-1,s-1}+C_{n,s-1}\Big).
\]
Then we can obtain differential equation for $f(x,y)$:
\begin{equation}\label{eq:EGF-ode}
\Big[1-4\nu(\nu-1)\big(1+(\nu-1)!!\,y\big)\,x\Big]\;\partial_x f(x,y)
\;=\;4\nu^2\big(1+(\nu-1)!!\,y\big)\,f(x,y),
\end{equation}
with the initial condition \(f(0,y)=\sum_{n=1}^{2} C_{n,0}\,y^n = y + (\nu-1)!! y^2\). This equation can easily be solved because dependency on $y$ appears to be parametric.

Solving \eqref{eq:EGF-ode} yields the closed form
\begin{equation}\label{eq:EGF-sol}
f(x,y)=f(0,y)\,
\Big(1-4\nu(\nu-1)\big[1+(\nu-1)!!\,y\big]\,x\Big)^{-\nu/(\nu-1)} .
\end{equation}
Substituting \(x=-p^{\nu-1}\sigma^{2(\nu-1)}\tfrac{t}{2T}\) and
\(y=\tfrac{H}{p^{\nu/2}}\) then provides the corresponding loss evolution
\begin{equation}
    \boxed{\mathbb E[L(t)] \sim \frac{p^{\nu} H \sigma^{2 \nu} \left[1 + (\nu-1)!! \frac{H}{p^{\nu/2}}\right]}{2} \left( 1 + \frac{2\nu(\nu-1) \left[1 + (\nu-1)!! \frac{H}{p^{\nu/2}}\right] p^{\nu-1} \sigma^{2(\nu-1)}}{T} t\right)^{-\nu/(\nu-1)}.}
\end{equation}

\section{NTK limit}
\label{app:linearized_model_equivalence}

\begin{definition}
    Consider two sequences of random functions from $C^\infty(\RR_+)$ defined on the same probability space, $\left(f_{p,H}\right)_{p,H=1}^\infty$ and $\left(g_{p,H}\right)_{p,H=1}^\infty$, together with their formal argument expansions:
    \begin{equation}
        f_{p,H}(t) = \sum_{s=0}^\infty f^{(s)}_{p,H} t^s,
        \qquad
        g_{p,H}(t) = \sum_{s=0}^\infty g^{(s)}_{p,H} t^s.
    \end{equation}
    We write "$f \cong g$ (as $p,H \to \infty$)" whenever $\EE\left[f^{(s)}_{p,H}\right] \sim \EE\left[g^{(s)}_{p,H}\right]$ as $p,H \to \infty$ $\forall s \geq 0$.
\end{definition}

\subsection{Model summary statistics and their equivalence}

It is not clear how to define a limit model, since the input dimension depends on $p$ which grows to infinity.
Nevertheless, we can reason about the following scalar summary statistics of the model:
\begin{equation}
    R_\nu(\mathbf u)
    = \sum_{i=1}^p f_{i,\ldots,i}(\mathbf u),
    \qquad
    R^{(2)}_\nu(\mathbf u)
    = \sum_{i=1}^p f^2_{i,\ldots,i}(\mathbf u),
    \qquad
    D_{2\nu}(\mathbf u)
    = \sum_{i_1,\ldots,i_\nu=1}^p f^2_{i_1,\ldots,i_\nu}(\mathbf u).
\end{equation}
Let us define a linearized model as follows:
\begin{equation}
    f_{i_1,\ldots,i_\nu}^{lin}(\delta \mathbf u) 
    = f_{i_1,\ldots,i_\nu}(\mathbf u_0) + \nabla^\top f_{i_1,\ldots,i_\nu}(\mathbf u_0) \delta \mathbf u.
\end{equation}
It also admits similar summary statistics, which we denote $R_\nu^{lin}(\delta\mathbf u)$,  $R_\nu^{(2),lin}(\delta\mathbf u)$, and $D_{2\nu}^{lin}(\delta\mathbf u)$.

When $\mathbf u$ is learned with gradient flow, we will abuse notation and write $R_\nu(t)$ instead of $R_\nu(\mathbf u(t))$, same for $R^{(2)}_\nu$ and $D_{2\nu}$.
We use similar notation for $\delta\mathbf u$ of the linear model learned with the same gradient flow.

We have the following limit equivalence between the summary statistics of $f$ and $f^{lin}$:
\begin{proposition}
    \label{prop:linearized_model_R_D_equivalence}
    Consider $f$ defined in \cref{eq:f_F_def} with no weight symmetry imposed.
    Then given $\mathbf u_0 = \mathbf u(0)$ and $\delta\mathbf u(0) = 0$, under the scalings corresponding to Point C and Point B of \cref{fig:polygon_both}, $R_\nu \cong R_\nu^{lin}$, $R^{(2)}_\nu \cong R^{(2),lin}_\nu$, and $D_{2\nu} \cong D_{2\nu}^{lin}$.
\end{proposition}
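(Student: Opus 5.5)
The plan is to compare the two time expansions coefficient by coefficient using the diagram calculus of Section \ref{sec:lossexp}. I will keep only the terms that dominate under the Point C and Edge B--C scalings, as identified by Theorem \ref{th:pareto_front}. The natural time scale is $T\asymp H\sigma^{2\nu-2}$.

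\textbf{Step 1: diagram expansion of the nonlinear statistics.} For the nonlinear model, each statistic $S\in\{R_\nu,R^{(2)}_\nu,D_{2\nu}\}$ is a diagram polynomial. Its $t$-expansion is
\[
S(t)\sim\sum_s S\star(\tfrac12 D_{2\nu}-R_\nu)^{\star s}\,\frac{(-t)^s}{T^s s!},
\]
with repeated mergers applied in the left-nested sense of \eqref{eq:dgdt}. Taking expectations via Wick contractions turns each coefficient into a polynomial in $p,H,\sigma^2$, exactly as in Theorem \ref{th:polycoeff}. Repeating the Pareto analysis of Theorem \ref{th:pareto_front} with the root diagram $S$ in place of $D_{2\nu}$ or $R_\nu$ should show the following. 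At Point C and on Edge B--C the leading terms come only from mergers in which every $D_{2\nu}$ factor is merged over an edge incident to an ``odd'' $H$-node, which is the path/flower structure used in the overparameterized free evolution of Appendix \ref{sec:free-evo-app}. These terms use at most two $R_\nu$ factors in total, counting those inside $S$ itself.

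\textbf{Step 2: diagram expansion of the linearized model.} For $f^{lin}$, the GF on $\delta\mathbf u$ is linear, with the constant kernel $\Theta(0)$. I would write the statistics $S^{lin}(t)$ as diagram polynomials too. Here $f^{lin}$ is $f(\mathbf u_0)$ plus a term that is $f$ with exactly one edge ``marked'' as $\delta u$. Each GF step pairs a marked edge with the gradient of the loss at $\mathbf u_0$, and the loss gradient now involves only diagrams with at most one marked edge. Combinatorially, $S^{lin}$'s expansion equals $S$'s expansion restricted to mergers in which each new $D_{2\nu}$ or $R_\nu$ factor is attached over an edge that was created by a previous merger and is ``first order'', i.e.\ it never merges twice into the same $D_{2\nu}$ copy over two different edges.

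\textbf{Step 3: identify the leading diagrams.} I will show that the dominant diagrams from Step~1 are exactly of the restricted type in Step~2, with the same Wick pairings and multiplicities. Then the leading monomials of $\mathbb E[S^{(s)}]$ and $\mathbb E[S^{lin,(s)}]$ agree, which gives $\cong$. The key input is the parity argument from Appendix \ref{sec:free-evo-app} and Section \ref{sec:asym_real}. Merging over an edge of an even $H$-node creates an additional odd $H$-node, which must then be contracted, and this costs a factor $H$ relative to the Pareto bound. Under the B--C and C scalings $H\to\infty$ with $T\asymp H\sigma^{2\nu-2}$, so such terms are subleading. Mergers that create nonlinear (second-order) dependence on $\delta\mathbf u$ are exactly merges at even nodes, and I will check this in each of the three cases of the optimality proof in Section \ref{sec:proof_opt}.

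\textbf{Main obstacle.} The hard part is Step 3: proving that every merger excluded from the linearized expansion is strictly subleading at both Point C and Edge B--C, uniformly in $s$ and for each of the three root statistics. I would first try to adapt the inductive case analysis of Section \ref{sec:proof_opt}, tracking one extra bit per merger (linear vs.\ nonlinear). The goal is an improved inequality $q+(\nu-1)n\le$ bound$-1$ whenever a nonlinear merger occurs, and then to check that the polygon normals at B and C turn this deficit into a vanishing factor.
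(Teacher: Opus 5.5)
Your overall architecture matches the paper's: the linearized expansion is a sub-sum of the full one with identical coefficients, so it suffices to show that every diagram and pairing producing the leading monomial at B or C lies in that sub-sum. Your Step~3 criterion is the right description of the sub-sum: a merge is nonlinear iff it is performed at an even $H$-node. This is more accurate than your Step~2 wording. Each half of the root and of every attached $D_{2\nu}$ copy can be merged over at most once, so an attached copy is reachable again only through its other half. The criterion must also hold for the $R_\nu$-merges, whereas your Step~1 constrains only the $D_{2\nu}$-merges.

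\textbf{The gap} is the lemma you plan to prove in Step~3: it is false that a nonlinear merge forces $q+(\nu-1)n\le(\nu-1)(s_D+1)$. Here is a counterexample.
\begin{itemize}
\item Take the root $D_{2\nu}$ with odd $H$-nodes $k_1,k_2$.
\item Merge a $D_{2\nu}$ at $k_1$. This creates a new odd node $k_3$, and $k_1$ becomes even.
\item Merge another $D_{2\nu}$ at the now even $k_1$. Now $k_1$ and a new node $k_4$ are odd.
\item Merge one $R_\nu$ at each of $k_1,\dots,k_4$.
\end{itemize}
All four $H$-nodes end up even, so none need to be contracted. Contracting all $p$-nodes gives $(q,n)=(1,s_D+1)$ with $s_D=3$, $s_R=4$. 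This lies exactly on the Pareto line, on edge C--D.

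So the mechanism you describe is wrong. A merge at an even node creates two new odd nodes, not one. Those nodes can be repaired by extra $R_\nu$-mergers instead of by contraction. Even when contraction is used, the cost along the Pareto line is $p^{\nu-1}/H$, not $H$. The example is subleading at B and C only because it lies on a different face, suppressed by $(H\sigma^\nu)^{-2}$; it has no deficit at all. (Uniformity in $s$ is also unnecessary, since $\cong$ is coefficientwise.)

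\textbf{What is needed instead} is to use that the leading triple at B or C pins down $s_R$ and the number of $H$-contractions, not just $s_D$. Let $r_0$ be the number of odd nodes in the root: $2$ for $D_{2\nu}$ and $R^{(2)}_\nu$, $1$ for $R_\nu$. Let $a$ be the number of $D$-merges at even nodes, and $b,c$ the numbers of $R$-merges at even, resp.\ odd, nodes. The final number of odd $H$-nodes is then $r_0+2a+b-c$. For the root $D_{2\nu}$:
\begin{itemize}
\item At C, no $H$-node is contracted and $b+c=2$. The count must therefore vanish, which forces $a=b=0$.
\item At B, $s_R=0$ and exactly one pair of $H$-nodes is contracted. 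The count must be at most $2$, which forces $a=0$.
\end{itemize}
The other roots are handled the same way. Hence every merge in a leading diagram is at an odd node, i.e.\ linear. All other contributions give different monomials, which are strictly subleading because B and C are the unique maximizers of the scaling functional on the ASYM polygon. Contributions to a given triple all have the same sign, so the leading coefficient is nonzero and the ratio tends to $1$.

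The paper reaches the same conclusion more explicitly. It enumerates the optimal merge sequences at C (all mergers on odd petals, exactly two $R$'s, encoded by pairs of color sequences) and at B. It then realizes each one inside the linearized expansion via $\hat D^{m_j,m_{j+1}}$ and $\hat R^{m_k}$, resp.\ $\hat D^{m,m'}$.
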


\subsection{Loss equivalence}

Recall $L(\mathbf u) = \tfrac{1}{2} D_{2\nu}(\mathbf u) - R_\nu(\mathbf u) + \tfrac{1}{2} \|F\|_F^2$.
We have a similar decomposition for the loss of the linearized model:
\begin{equation}
    L^{lin}(\mathbf u)
    = \frac{1}{2} \sum_{i_1,\ldots,i_\nu=1}^p \left(f_{i_1,\ldots,i_\nu}^{lin}(\mathbf u) - F_{i_1,\ldots,i_\nu}\right)^2
    = \frac{1}{2} D^{lin}_{2\nu}(\mathbf u) - R^{lin}_\nu(\mathbf u) + \frac{1}{2} \|F\|_F^2.
\end{equation}
Therefore \cref{prop:linearized_model_R_D_equivalence} directly implies $L \cong L^{lin}$.

\subsection{Model symmetries and element-wise equivalence}

Since the target diagonal is all-ones, while the model diagonal distribution at initialization is also symmetric, i.e. for any given $p \in \mathbb{N}$, $\left\{f_{i,\ldots,i}(0)\right\}_{i=1}^p$ are iid, we should necessarily have $p f_{i,\ldots,i} \cong R_\nu$ and $p f^2_{i,\ldots,i} \cong R^{(2)}_\nu$ for any fixed $i \in \mathbb{N}$.
The same is true for the linearized model.
Together with \cref{prop:linearized_model_R_D_equivalence}, this implies $f_{i,\ldots,i} \cong f_{i,\ldots,i}^{lin}$ and $f^2_{i,\ldots,i} \cong \left(f_{i,\ldots,i}^{lin}\right)^2$ for any fixed $i \in \mathbb{N}$.

In fact, the model distribution at initialization is symmetric not only on the diagonal, i.e. for any given $p \in \mathbb{N}$, $\left\{f_{i_1,\ldots,i_\nu}(0)\right\}_{i_1,\ldots,i_\nu=1}^p$ are identically distributed (but not independent).
Since all off-diagonal terms of the target are zeros, we get $(p^\nu - p) f_{i_1,\ldots,i_\nu}^2 \cong D_{2\nu} - R_\nu^{(2)}$ whenever not all $i_1,\ldots,i_\nu$ are equal.
Since the same is true for the linearized model, together with \cref{prop:linearized_model_R_D_equivalence} and the previous paragraph, this implies $f^2_{i_1,\ldots,i_\nu} \cong \left(f_{i_1,\ldots,i_\nu}^{lin}\right)^2$ for any fixed $i_1,\ldots,i_\nu \in \mathbb{N}$.

\subsection{Kernel method equivalence}

Since $f^{lin}$ is linear in weights, it evolves as a kernel method under gradient flow:
\begin{equation}
    f^{lin}_{i_1,\ldots,i_\nu}(t)
    = \sum_{i'_1,\ldots,i'_\nu=1}^p e^{-\frac{t}{T} \Theta_{i_1,\ldots,i_\nu; i'_1,\ldots,i'_\nu}(0)} \left(f_{i'_1,\ldots,i'_\nu}(0) - F_{i'_1,\ldots,i'_\nu}\right) + F_{i_1,\ldots,i_\nu},
\end{equation}
where the exponential is taken from a linear operator on $\RR^{p^\nu}$.
Here $\Theta$ is the Neural Tangent Kernel (NTK):
\begin{equation}
    \Theta_{i_1,\ldots,i_\nu; i'_1,\ldots,i'_\nu}(\mathbf u)
    = \sum_u \frac{\partial f_{i_1,\ldots,i_\nu}(\mathbf u)}{\partial u} \frac{\partial f_{i'_1,\ldots,i'_\nu}(\mathbf u)}{\partial u}
    = \sum_{k=1}^H \sum_{\tilde m=1}^\nu \delta_{i_{\tilde m} = i'_{\tilde m}} \prod_{m\neq \tilde m} u_{k,i_m}^{(m)} u_{k,i'_m}^{(m)}.
\end{equation}
We also abused the notation above using $\Theta_{i_1,\ldots,i_\nu; i'_1,\ldots,i'_\nu}(\mathbf u(t)) = \Theta_{i_1,\ldots,i_\nu; i'_1,\ldots,i'_\nu}(t)$.

The Law of Large Numbers results in the following kernel concentration at initialization:
\begin{equation}
    \frac{1}{H \sigma^{2\nu-2}} \Theta_{i_1,\ldots,i_\nu; i'_1,\ldots,i'_\nu}(0)
    \to \frac{1}{H \sigma^{2\nu-2}} \EE\left[\Theta_{i_1,\ldots,i_\nu; i'_1,\ldots,i'_\nu}(0)\right]
    = \nu \prod_{m=1}^\nu \delta_{i_m = i'_m}
    \quad \text{a.s. as $H \to \infty$.}
\end{equation}
Note also that since $\EE\left[f_{i_1,\ldots,i_\nu}(0)\right] = 0$, while $\EE\left[f^2_{i_1,\ldots,i_\nu}(0)\right] = H \sigma^{2\nu}$, we have $f_{i_1,\ldots,i_\nu}(0) \to 0$ a.s. as $H \to \infty$ as long as $H \sigma^{2\nu} \to 0$.

Therefore whenever the following conditions hold,
\begin{enumerate}
    \item $H \sigma^{2\nu} \to 0$ as $H \to \infty$, which is guaranteed for Point B and Point C in the asymmetric scenario;
    \item $T \sim \eta^{-1} H \sigma^{2\nu-2}$, which is the relevant scaling for A-B-C in the asymmetric scenario,
\end{enumerate}
the linearized model behaves as a usual linear regression with zero initialization:
\begin{equation}
    f^{lin}_{i_1,\ldots,i_\nu}(t)
    \to \left(1 - e^{-\eta t}\right) F_{i_1,\ldots,i_\nu}
    \quad \text{a.s. as $H \to \infty$.}
\end{equation}
In the case of $f_{i_1,\ldots,i_\nu} \cong f_{i_1,\ldots,i_\nu}^{lin}$ and $F_{i_1,\ldots,i_\nu} = \delta_{i_1=\ldots=i_\nu}$, which we discussed above, this also gives
\begin{equation}
    f_{i,\ldots,i}(t)
    \cong 1 - e^{-\eta t},
    \qquad
    f^2_{i,\ldots,i}(t)
    \cong \left(1 - e^{-\eta t}\right)^2,
    \qquad
    f_{i_1,\ldots,i_\nu}^2(t) 
    \cong 0,
    \quad \text{where not all $i_1,\ldots,i_\nu$ are equal.}
\end{equation}
This implies that the $t$-expansion terms of $f_{i_1,\ldots,i_\nu}$ converge to those of $\left(1 - e^{-\eta t}\right) \delta_{i_1=\ldots=i_\nu}$ in probability.
This proves the following result:
\begin{proposition}
    \label{prop:ntk_training}
    Consider $f$ defined in \cref{eq:f_F_def} in ASYM case.
    Then for $T \sim \eta^{-1} H \sigma^{2\nu-2}$,
    \begin{enumerate}
        \item The initial NTK of $f$ converges almost surely to the identity tensor as $H \to \infty$: $T^{-1} \Theta_{i_1,\ldots,i_\nu; i'_1,\ldots,i'_\nu}(0) \to \eta \nu \prod_{m=1}^\nu \delta_{i_m = i'_m}$.
        \item Under the scalings corresponding to Point C and Point B of \cref{fig:polygon_both}, the $t$-expansion terms of $f_{i_1,\ldots,i_\nu}$ converge to those of $\left(1 - e^{-\eta t}\right) \delta_{i_1=\ldots=i_\nu}$ in probability.
    \end{enumerate}
\end{proposition}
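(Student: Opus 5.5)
The proof will assemble the ingredients of this appendix, in the order they were developed. Part 1 is a law-of-large-numbers statement. Writing $\Theta_{i_1,\ldots,i_\nu;i'_1,\ldots,i'_\nu}(0)=\sum_{k=1}^H\xi_k$ with
\[
\xi_k=\sum_{\tilde m}\delta_{i_{\tilde m}=i'_{\tilde m}}\prod_{m\neq\tilde m}u^{(m)}_{k,i_m}u^{(m)}_{k,i'_m},
\]
the $\xi_k$ are i.i.d.\ in $k$ with finite moments of all orders. Their mean is $\sigma^{2\nu-2}\nu\prod_m\delta_{i_m=i'_m}$: each product of Gaussians with distinct colours has expectation $\sigma^{2(\nu-1)}\prod_{m\ne\tilde m}\delta_{i_m=i'_m}$, and this combines with the prefactor $\delta_{i_{\tilde m}=i'_{\tilde m}}$. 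After dividing by $H\sigma^{2\nu-2}$, the summands $\xi_k/\sigma^{2\nu-2}$ have a $\sigma$-free distribution. The strong law of large numbers, applied row-wise to this triangular array, then gives a.s.\ convergence. If row dependence is an issue, I would argue via a fourth-moment Borel--Cantelli bound. Substituting $T\sim\eta^{-1}H\sigma^{2\nu-2}$ gives the stated limit.

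For Part 2, I will first invoke Proposition \ref{prop:linearized_model_R_D_equivalence}, which gives $R_\nu\cong R_\nu^{lin}$, $R_\nu^{(2)}\cong R_\nu^{(2),lin}$ and $D_{2\nu}\cong D_{2\nu}^{lin}$ at Points B and C. Next I transfer these aggregate equivalences to individual entries by exchangeability. The diagonal entries $f_{i,\ldots,i}(t)$ are exchangeable in $i$, because both the identity target and the i.i.d.\ initialization are invariant under simultaneous permutation of the $p$-indices, and gradient flow commutes with this symmetry. Hence each $t$-coefficient satisfies $\EE f^{(s)}_{i,\ldots,i}=p^{-1}\EE R_\nu^{(s)}$, and likewise for squares. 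Off-diagonal entries with a given coincidence pattern are exchangeable too. The target vanishes off the diagonal, so these entries are controlled by $D_{2\nu}-R^{(2)}_\nu$.

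The linearized model is an explicit kernel regression. Using Part 1 and $f(0)\to0$, which holds because $\EE f(0)^2=H\sigma^{2\nu}\to0$ at B and C, its coefficients converge to those of $(1-e^{-\eta t})\delta_{i_1=\ldots=i_\nu}$. Combining this with the above gives convergence of the first and second moments of each $t$-coefficient of $f_{i_1,\ldots,i_\nu}$ to a deterministic limit. Equivalently, the mean converges and the variance tends to zero, and Chebyshev's inequality then yields convergence in probability.

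The main obstacle is the exchangeability-to-entrywise transfer. The squares of $t$-expansions mix coefficients: $(f^2)^{(s)}=\sum_{a+b=s}f^{(a)}f^{(b)}$. I therefore need second moments of individual coefficients, $\EE[(f^{(s)})^2]$, not just of the series $f^2$. To obtain them I would induct on $s$, using the already established convergence of lower-order coefficients to isolate the cross term $2f^{(0)}f^{(s)}$ together with the $\EE(f^{(s)})^2$ contribution in $(f^2)^{(2s)}$. If this induction is awkward, the fallback is to apply the diagram argument behind Proposition \ref{prop:linearized_model_R_D_equivalence} directly to the second moment of $f^{(s)}_{i_1,\ldots,i_\nu}$, i.e.\ to a diagram with two external $p$-node groups.
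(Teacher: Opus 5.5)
Your route is the paper's. It uses the law of large numbers for Part 1, then \cref{prop:linearized_model_R_D_equivalence}, then permutation symmetry to pass from $R_\nu$, $R^{(2)}_\nu$, $D_{2\nu}$ to single entries, and finally the kernel-regression form of $f^{lin}$. You are more careful than the paper on two points it passes over. First, exchangeability holds only within a coincidence pattern. The paper asserts that all entries are identically distributed, which is true at $t=0$ in ASYM but is not preserved by the flow, because the target is invariant only under simultaneous permutations. Second, $f^2\cong(f^{lin})^2$ controls $\EE[(f^2)^{(s)}]$, not the second moments $\EE[(f^{(s)})^2]$ that Chebyshev needs.

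There is one outright error in your kernel-regression step. The linearized flow is $\dot f^{lin}=-T^{-1}\Theta(0)(f^{lin}-F)$, and your own Part 1 computation gives $T^{-1}\Theta(0)\to\eta\nu\,\mathrm{Id}$. The limit is therefore $(1-e^{-\eta\nu t})\delta_{i_1=\ldots=i_\nu}$, not $(1-e^{-\eta t})\delta_{i_1=\ldots=i_\nu}$. This is consistent with the free ASYM loss decay $e^{-2\nu\sigma^{2(\nu-1)}Ht/T}$. The statement and the paper's proof contain the same slip, so Part 2 should have rate $\eta\nu$.

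The obstacle you flag is genuinely unresolved, and the induction you sketch does not close. Since $(f^2)^{(2s)}=\sum_{a=0}^{2s}f^{(a)}f^{(2s-a)}$, the cross terms accompanying $(f^{(s)})^2$ are $2f^{(a)}f^{(2s-a)}$ with $2s-a>s$. For example, the term is $2f^{(0)}f^{(2s)}$, not $2f^{(0)}f^{(s)}$. Bounding these needs second moments of \emph{higher} coefficients, and convergence in probability of lower ones does not control expectations of products without uniform integrability.

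Two further steps have the same flavour.
\begin{itemize}
\item The aggregate $D_{2\nu}-R^{(2)}_\nu$ is dominated by the $\asymp p^\nu$ all-distinct entries. Asymptotic equivalence of aggregates therefore says nothing about coincidence patterns with only $O(p^{\nu-1})$ members.
\item Entrywise convergence of $T^{-1}\Theta(0)$ together with $f_J(0)\to0$ does not give convergence of $[(T^{-1}\Theta(0))^s(f(0)-F)]_I$ as $p\to\infty$. Each matrix product runs over the $\asymp\nu p^{\nu-1}$ multi-indices $J$ sharing a coordinate with $I$, for which $\Theta_{IJ}(0)\neq0$, and these sums must be shown to vanish. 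Heuristically they are of order $p^{\nu-1}/H$ and $p^{\nu-1}\sigma^{2\nu}$, which tend to $0$ at B and C. You also need convergence of moments of these coefficients, not just a.s.\ convergence.
\end{itemize}

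The paper makes all of these leaps too. What is missing from both arguments is your fallback. For a fixed multi-index $I$, evaluate $\EE[f^{(s)}_I]$ and $\EE[(f^{(s)}_I)^2]$ directly by diagrams, with one (resp.\ two) external groups of $p$-nodes. Then show that at B and C they converge to $c_s\delta_{i_1=\ldots=i_\nu}$ and $c_s^2\delta_{i_1=\ldots=i_\nu}$, where $c_0=0$ and $c_s=-(-\eta\nu)^s/s!$ for $s\ge1$. That estimate would also make the detour through $f^{lin}$ and the moment-unmixing unnecessary.
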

Unfortunately, this suggests, but does not prove kernel stability: 
\begin{conjecture}
    Under the same setting as for \cref{prop:ntk_training}, $T^{-1} \Theta_{i_1,\ldots,i_\nu; i'_1,\ldots,i'_\nu}(t) \to \eta \nu \prod_{m=1}^\nu \delta_{i_m = i'_m}$ almost surely as $H \to \infty$ $\forall t \geq 0$.
\end{conjecture}

We now prove \cref{prop:linearized_model_R_D_equivalence}.

\subsection{Proof of \cref{prop:linearized_model_R_D_equivalence}, Point C}

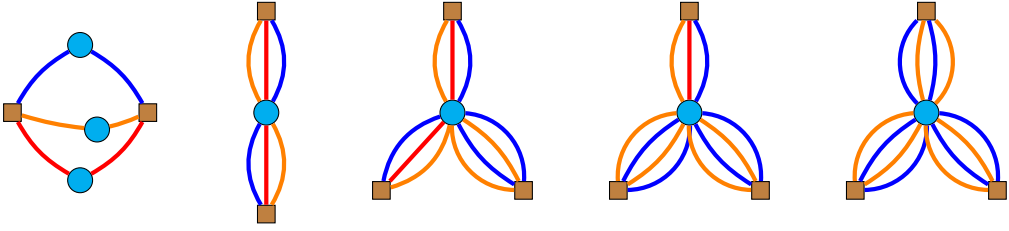
\begin{figure}
    \centering
    \begin{tikzpicture}[
        node distance=2cm,
        every node/.style={font=\sffamily\small},
        scale=0.45
    ]
        
        \tikzset{
            out node/.style={circle, draw=black, fill=cyan, minimum size=1mm},
            in node/.style={circle, draw=black, fill=orange, minimum size=1mm},
            hid node/.style={rectangle, draw=black, fill=brown, minimum size=1mm},
            data node/.style={diamond, draw=black, fill=gray, minimum size=1mm}
        }
        

        \begin{scope}[shift={(-12,0)}]
            \node[out node] (out1) at (-4,2) {};
            \node[out node] (out2) at (-3.5,-0.5) {};
            \node[out node] (out3) at (-4,-2) {};
            \node[hid node] (hid1) at (-6,0) {};
            \node[hid node] (hid2) at (-2,0) {};
            
            \draw[blue, ultra thick] (out1) to [bend right=15] (hid1);
            \draw[blue, ultra thick] (out1) to [bend left=15] (hid2);
            \draw[orange, ultra thick] (out2) to [bend right=-5] (hid1);
            \draw[orange, ultra thick] (out2) to [bend left=-5] (hid2);
            \draw[red, ultra thick] (out3) to [bend right=-15] (hid1);
            \draw[red, ultra thick] (out3) to [bend left=-15] (hid2);
        \end{scope}

        \begin{scope}[shift={(-10.5,0)}]

            \node[out node] (center) at (0,0) {};
            \node[hid node] (out1) at (0,3) {};
            \node[hid node] (out2) at (0,-3) {};

            \draw[blue, ultra thick] (center) to [bend right=30] (out1);
            \draw[red, ultra thick] (center) to [bend right=0] (out1);
            \draw[orange, ultra thick] (center) to [bend right=-30] (out1);

            \draw[blue, ultra thick] (center) to [bend right=30] (out2);
            \draw[red, ultra thick] (center) to [bend right=0] (out2);
            \draw[orange, ultra thick] (center) to [bend right=-30] (out2);

        \end{scope}
        
        \begin{scope}[shift={(-5,0)}]

            \node[out node] (center) at (0,0) {};
            \node[hid node] (out1) at (0,3) {};
            \node[hid node] (out2) at (-2.1,-2.3) {};
            \node[hid node] (out3) at (2.1,-2.3) {};

            \draw[blue, ultra thick] (center) to [bend right=30] (out1);
            \draw[red, ultra thick] (center) to [bend right=0] (out1);
            \draw[orange, ultra thick] (center) to [bend right=-30] (out1);

            \draw[blue, ultra thick] (center) to [bend right=30] (out2);
            \draw[red, ultra thick] (center) to [bend right=0] (out2);
            \draw[orange, ultra thick] (center) to [bend right=-30] (out2);

            \draw[blue, ultra thick] (center) to [bend right=15] (out3);
            \draw[orange, ultra thick] (center) to [bend right=-15] (out3);
            \draw[blue, ultra thick] (center) to [bend right=-45] (out3);
            \draw[orange, ultra thick] (center) to [bend right=45] (out3);

        \end{scope}
                
        \begin{scope}[shift={(2,0)}]

            \node[out node] (center) at (0,0) {};
            \node[hid node] (out1) at (0,3) {};
            \node[hid node] (out2) at (-2.1,-2.3) {};
            \node[hid node] (out3) at (2.1,-2.3) {};

            \draw[blue, ultra thick] (center) to [bend right=30] (out1);
            \draw[red, ultra thick] (center) to [bend right=0] (out1);
            \draw[orange, ultra thick] (center) to [bend right=-30] (out1);

            \draw[blue, ultra thick] (center) to [bend right=15] (out2);
            \draw[orange, ultra thick] (center) to [bend right=-15] (out2);
            \draw[blue, ultra thick] (center) to [bend right=-45] (out2);
            \draw[orange, ultra thick] (center) to [bend right=45] (out2);

            \draw[blue, ultra thick] (center) to [bend right=15] (out3);
            \draw[orange, ultra thick] (center) to [bend right=-15] (out3);
            \draw[blue, ultra thick] (center) to [bend right=-45] (out3);
            \draw[orange, ultra thick] (center) to [bend right=45] (out3);

        \end{scope}
                
        \begin{scope}[shift={(9,0)}]

            \node[out node] (center) at (0,0) {};
            \node[hid node] (out1) at (0,3) {};
            \node[hid node] (out2) at (-2.1,-2.3) {};
            \node[hid node] (out3) at (2.1,-2.3) {};

            \draw[blue, ultra thick] (center) to [bend right=15] (out1);
            \draw[orange, ultra thick] (center) to [bend right=-15] (out1);
            \draw[blue, ultra thick] (center) to [bend right=-45] (out1);
            \draw[orange, ultra thick] (center) to [bend right=45] (out1);

            \draw[blue, ultra thick] (center) to [bend right=15] (out2);
            \draw[orange, ultra thick] (center) to [bend right=-15] (out2);
            \draw[blue, ultra thick] (center) to [bend right=-45] (out2);
            \draw[orange, ultra thick] (center) to [bend right=45] (out2);

            \draw[blue, ultra thick] (center) to [bend right=15] (out3);
            \draw[orange, ultra thick] (center) to [bend right=-15] (out3);
            \draw[blue, ultra thick] (center) to [bend right=-45] (out3);
            \draw[orange, ultra thick] (center) to [bend right=45] (out3);

        \end{scope}
                
        \end{tikzpicture}
    \caption{Contracted optimal mergers for Point C.}
    \label{fig:optimal_contraction_C}
\end{figure}

Consider first the parameter scaling corresponding to Point C.
Let us study mergers corresponding to the evolution of $D_{2\nu}$.
The $s$-term in the formal time expansion contains the following mergers: $D_{2\nu} (\star L)^s$.
Let us study those mergers that give Pareto-optimal terms after contraction.
As follows from \cref{sec:asym_classification_proof}, they are precisely characterized by the following process illustrated in \cref{fig:optimal_contraction_C}.

We contract all the $p$-vertices into one to get a "flower" with two "petals", each consisting of $\nu$ edges, each of a different color, as depicted on the first two diagrams of \cref{fig:optimal_contraction_C}.
If there is an odd number of edges for every color in a petal, we call this petal odd.
If there are even of them, we call it even.
As we shall see soon, all petals we encounter in our construction are either odd or even.

When merging a new diagram, we contract all new $p$ vertices into one to keep the flower structure.
Merging a $D$-diagram with a petal alters its parity, while adding a new odd petal, see \cref{fig:optimal_contraction_C}, center.
Merging with a $R$-diagram with a petal just alters its parity, see \cref{fig:optimal_contraction_C}, right.
In order to get a contractible diagram, we need all petals to be even.
Since we start with two odd petals, all mergers have to be performed on odd petals, and there have to be exactly two mergers with $R$-diagrams.

During this process, we never increase the number of odd petals.
Therefore, the whole process could be characterized with a pair of color sequences of total length $s$: $((m_1, \ldots, m_k), (m'_1, \ldots, m'_{k'}))$, $k + k' = s$, $k, k' \geq 1$.
We read these sequences as follows.
Let us call one of the initial petals left and the other --- right.
We merge a $D$-diagram with the left petal with color $m_1$.
This petal becomes even, and a new odd petal is created as a substitute; now call it left.
We keep merging $D$-diagrams to the left petal with colors specified in the first sequence until we arrive into its last term $m_k$.
We then merge the left petal with a $R$-diagram with color $m_k$.

We do the same with the right petal and the second sequence.
Since the left and right mergers are independent, we can perform them in any order.

Since we merge all $p$-vertices into one in the limit of Point C, the evolutions of $R^{(2)}_{\nu}$ and $D_{2\nu}$ coincide.
The evolution of $R_\nu$ can be covered as well with a pair of sequences as above.
In this case, there is only one starting petal, and we simply take $k' = 0$ to encode the subsequent mergers.

We proceed with demonstrating that we can realize all terms resulted from the above process using $D$- and $R$-diagrams corresponding to the linearized model.
Since the set of mergers corresponding to the linearized model is a subset of that corresponding to the full model, the above statement implies equivalence of the full and the linearized models in the limit.

Recall the definition of the linearized model: $f_{i_1,\ldots,i_\nu}^{lin}(\delta \mathbf u) = f_{i_1,\ldots,i_\nu}(\mathbf u_0) + \nabla^\top f_{i_1,\ldots,i_\nu}(\mathbf u_0) \delta \mathbf u$, where $\mathbf u_0 = \mathbf u(0)$, the weight initialization of $f$, and we initialize $\delta \mathbf u$ with zeros.
For the identity target tensor, the corresponding loss function admits the following decomposition:

\begin{equation}
    L^{lin}(\mathbf u)
    = \frac{1}{2} D^{lin}_{2\nu} - R^{lin}_\nu + \frac{p}{2},
    \qquad
    D^{lin}_{2\nu}
    = \sum_{i_1,\ldots,i_\nu=1}^p \left(f_{i_1,\ldots,i_\nu}^{lin}\right)^2,
    \qquad
    R^{lin}_\nu
    = \sum_{i=1}^p f_{i,\ldots,i}^{lin}.
\end{equation}
Here, $D^{lin}_{2\nu}$, $R^{lin}_\nu$, and $R^{(2),lin}_\nu$ are linear combinations of diagrams with edges of $2\nu$ different types, instead of $\nu$ types ("colors") for $f$.
Indeed, since $\mathbf u = \mathbf u(0) + \delta \mathbf u$, every edge is decomposed into a non-learnable initialization $\mathbf u(0)$, initialized with variance $\sigma^2$, and a learnable increment $\delta \mathbf u$, initialized with zero.
We will depict learnable increment edges solid and non-learnable initialization edges dashed.
We then have $D^{lin}_{2\nu} = \sum_{m,m'=1}^\nu \hat D^{m,m'}_{2\nu} + 2 \sum_{m=1}^\nu \hat D^m_{2\nu} + \hat D_{2\nu}$, where $\hat D^{m,m'}_{2\nu}$ has the same structure as $D$, but only the edge of color $m$ being solid in the left half and only the edge of color $m'$ being solid in the right half.
In $\hat D^m_{2\nu}$, the same holds for the left part, while the right part is completely dashed.
In $\hat D_{2\nu}$, all edges are dashed.
Similarly, $R^{(2),lin}_\nu = \sum_{m,m'=1}^\nu \hat R^{(2),m,m'}_\nu + 2 \sum_{m=1}^\nu \hat R^{(2),m}_\nu + \hat R^{(2)}_\nu$ and $R^{lin}_\nu = \sum_{m=1}^\nu \hat R^m_\nu + \hat R_\nu$.

We then implement the sequence of mergers corresponding to $((m_1, \ldots, m_k), (m'_1, \ldots, m'_{k'}))$ with the following sequence of mergers of linearized diagrams.
To model the evolution of $D_{2\nu}$, we start with $\hat D^{m_1,m'_1}$.
The left mergers are modeled with mergers with $\hat D^{m_j,m_{j+1}}$ for $j$ running from 1 to $k-1$, culminated with a merger with $\hat R^{m_k}$.
We do the same with the right part.
We follow the same steps to model the evolution of $R_\nu$ and $R^{(2)}_\nu$.
This finishes the construction of Pareto-optimal mergers corresponding to Point C.

\subsection{Proof of \cref{prop:linearized_model_R_D_equivalence}, Point B}

\begin{figure}
    \centering
    \centering
    \begin{tikzpicture}[
        node distance=2cm,
        every node/.style={font=\sffamily\small},
        scale=0.42
    ]
        
        \tikzset{
            out node/.style={circle, draw=black, fill=cyan, minimum size=1mm},
            in node/.style={circle, draw=black, fill=orange, minimum size=1mm},
            hid node/.style={rectangle, draw=black, fill=brown, minimum size=1mm},
            data node/.style={diamond, draw=black, fill=gray, minimum size=1mm}
        }
        
        \begin{scope}[shift={(-25,0)}]
            \node[out node] (out1) at (-4,2) {};
            \node[out node] (out2) at (-3.5,-0.5) {};
            \node[out node] (out3) at (-4,-2) {};
            \node[hid node] (hid1) at (-6,0) {};
            \node[hid node] (hid2) at (-2,0) {};
            
            \draw[blue, ultra thick] (out1) to [bend right=15] (hid1);
            \draw[blue, ultra thick] (out1) to [bend left=15] (hid2);
            \draw[orange, ultra thick] (out2) to [bend right=-5] (hid1);
            \draw[orange, ultra thick] (out2) to [bend left=-5] (hid2);
            \draw[red, ultra thick] (out3) to [bend right=-15] (hid1);
            \draw[red, ultra thick] (out3) to [bend left=-15] (hid2);
        \end{scope}

            \def \shift {-21}
            \def \rad {3}
            \def \n {3}
            \draw[red, ultra thick] ({\rad * cos(360/\n) + \shift}, {\rad * sin(360/\n)}) to [bend right=10] ({\rad * cos((1.5)*360/\n) + \shift}, {\rad * sin((1.5)*360/\n)});
            \draw[red, ultra thick] ({\rad * cos((2)*360/\n) + \shift}, {\rad * sin((2)*360/\n)}) to [bend right=-10] ({\rad * cos((1.5)*360/\n) + \shift}, {\rad * sin((1.5)*360/\n)});

            \foreach \a in {2,...,\n} {
                
                \draw[blue, ultra thick] ({\rad * cos(\a*360/\n) + \shift}, {\rad * sin(\a*360/\n)}) to [bend right=-15] ({\rad * 0.7 * cos((\a+0.5)*360/\n) + \shift}, {\rad * 0.7 * sin((\a+0.5)*360/\n)});
                \draw[orange, ultra thick] ({\rad * cos(\a*360/\n) + \shift}, {\rad * sin(\a*360/\n)}) to [bend right=15] ({\rad * 1.1 * cos((\a+0.5)*360/\n) + \shift}, {\rad * 1.1 * sin((\a+0.5)*360/\n)});
                
                \draw[blue, ultra thick] ({\rad * cos((\a+1)*360/\n) + \shift}, {\rad * sin((\a+1)*360/\n)}) to [bend right=15] ({\rad * 0.7 * cos((\a+0.5)*360/\n) + \shift}, {\rad * 0.7 * sin((\a+0.5)*360/\n)});
                \draw[orange, ultra thick] ({\rad * cos((\a+1)*360/\n) + \shift}, {\rad * sin((\a+1)*360/\n)}) to [bend right=-15] ({\rad * 1.1 * cos((\a+0.5)*360/\n) + \shift}, {\rad * 1.1 * sin((\a+0.5)*360/\n)});

            }
                
            \foreach \a in {2,...,\n} {
                \node[hid node] at ({\rad * cos(\a*360/\n) + \shift}, {\rad * sin(\a*360/\n)}) {};
                
                \foreach \b in {0.7,1.1} {
                    \node[out node] at ({\rad * \b * cos((\a+0.5)*360/\n) + \shift}, {\rad * \b * sin((\a+0.5)*360/\n)}) {};
                }
            }

            \node[hid node] at ({\rad * cos(360/\n) + \shift}, {\rad * sin(360/\n)}) {};
            \node[out node] at ({\rad * cos((1.5)*360/\n) + \shift}, {\rad * sin((1.5)*360/\n)}) {};
            

            \def \shift {-10}
            \def \rad {4}
            \def \n {6}
            \draw[red, ultra thick] ({\rad * cos(360/\n) + \shift}, {\rad * sin(360/\n)}) to [bend right=10] ({\rad * cos((1.5)*360/\n) + \shift}, {\rad * sin((1.5)*360/\n)});
            \draw[red, ultra thick] ({\rad * cos((2)*360/\n) + \shift}, {\rad * sin((2)*360/\n)}) to [bend right=-10] ({\rad * cos((1.5)*360/\n) + \shift}, {\rad * sin((1.5)*360/\n)});

            \foreach \a in {2,...,\n} {
                
                \draw[blue, ultra thick] ({\rad * cos(\a*360/\n) + \shift}, {\rad * sin(\a*360/\n)}) to [bend right=-15] ({\rad * 0.8 * cos((\a+0.5)*360/\n) + \shift}, {\rad * 0.8 * sin((\a+0.5)*360/\n)});
                \draw[orange, ultra thick] ({\rad * cos(\a*360/\n) + \shift}, {\rad * sin(\a*360/\n)}) to [bend right=15] ({\rad * 1.1 * cos((\a+0.5)*360/\n) + \shift}, {\rad * 1.1 * sin((\a+0.5)*360/\n)});
                
                \draw[blue, ultra thick] ({\rad * cos((\a+1)*360/\n) + \shift}, {\rad * sin((\a+1)*360/\n)}) to [bend right=15] ({\rad * 0.8 * cos((\a+0.5)*360/\n) + \shift}, {\rad * 0.8 * sin((\a+0.5)*360/\n)});
                \draw[orange, ultra thick] ({\rad * cos((\a+1)*360/\n) + \shift}, {\rad * sin((\a+1)*360/\n)}) to [bend right=-15] ({\rad * 1.1 * cos((\a+0.5)*360/\n) + \shift}, {\rad * 1.1 * sin((\a+0.5)*360/\n)});

            }
                
            \foreach \a in {2,...,\n} {
                \node[hid node] at ({\rad * cos(\a*360/\n) + \shift}, {\rad * sin(\a*360/\n)}) {};
                
                \foreach \b in {0.8,1.1} {
                    \node[out node] at ({\rad * \b * cos((\a+0.5)*360/\n) + \shift}, {\rad * \b * sin((\a+0.5)*360/\n)}) {};
                }
            }

            \node[hid node] at ({\rad * cos(360/\n) + \shift}, {\rad * sin(360/\n)}) {};
            \node[out node] at ({\rad * cos((1.5)*360/\n) + \shift}, {\rad * sin((1.5)*360/\n)}) {};
            

        \begin{scope}[shift={(4,0)}]

            \node[out node] (out) at (-7,0) {};
            \node[out node] (out1) at (0,-3) {};
            \node[out node] (out2) at (0,3) {};
            \node[hid node] (hid2) at (4,0) {};
            \node[hid node] (hid3) at (2,0) {};
            \node[hid node] (hid4) at (0,0) {};
            \node[hid node] (hid5) at (-2,0) {};
            \node[hid node] (hid6) at (-4,0) {};

            \draw[red, ultra thick] (hid6) to [bend right=15] (out);
            \draw[red, ultra thick] (hid6) to [bend right=-15] (out);

            \draw[blue, ultra thick] (hid2) to [bend right=15] (out1);
            \draw[blue, ultra thick] (hid2) to [bend right=-15] (out1);
            \draw[orange, ultra thick] (hid2) to [bend right=15] (out2);
            \draw[orange, ultra thick] (hid2) to [bend right=-15] (out2);

            \draw[blue, ultra thick] (hid3) to [bend right=15] (out1);
            \draw[blue, ultra thick] (hid3) to [bend right=-15] (out1);
            \draw[orange, ultra thick] (hid3) to [bend right=15] (out2);
            \draw[orange, ultra thick] (hid3) to [bend right=-15] (out2);

            \draw[blue, ultra thick] (hid4) to [bend right=15] (out1);
            \draw[blue, ultra thick] (hid4) to [bend right=-15] (out1);
            \draw[orange, ultra thick] (hid4) to [bend right=15] (out2);
            \draw[orange, ultra thick] (hid4) to [bend right=-15] (out2);

            \draw[blue, ultra thick] (hid5) to [bend right=15] (out1);
            \draw[blue, ultra thick] (hid5) to [bend right=-15] (out1);
            \draw[orange, ultra thick] (hid5) to [bend right=15] (out2);
            \draw[orange, ultra thick] (hid5) to [bend right=-15] (out2);

            \draw[blue, ultra thick] (hid6) to [bend right=15] (out1);
            \draw[blue, ultra thick] (hid6) to [bend right=-15] (out1);
            \draw[orange, ultra thick] (hid6) to [bend right=15] (out2);
            \draw[orange, ultra thick] (hid6) to [bend right=-15] (out2);

        \end{scope}
                
    \end{tikzpicture}
    \caption{Contracting the optimal merger for Point B.}
    \label{fig:optimal_contraction_B}
\end{figure}
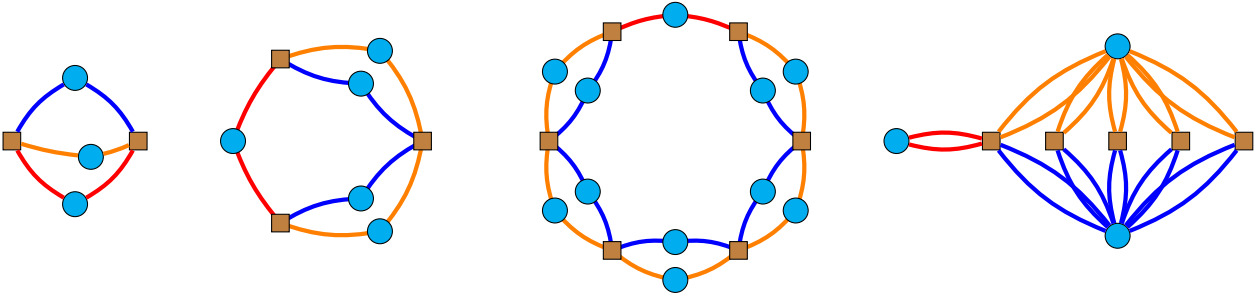

In this case, all mergers with $R_\nu$ in $L (\star L)^s$ are suboptimal.
As proven in \cref{sec:asym_classification_proof}, mergers of $D$-diagrams resulting to Pareto-optimal contractions could be constructed with the following process illustrated in \cref{fig:optimal_contraction_B}.
We start by picking a color (red in the figure).
We then merge new $D$-diagrams by edges of the chosen color.
During this process, we always keep exactly two edges of this color, see the first three pictures of \cref{fig:optimal_contraction_B}.
We contract the resulting diagram the following way.
The two edges of the chosen color are contracted with each other.
The rest are contracted in such a way that exactly $\nu-1$ $p$-vertices and $s+1$ $H$-vertices remain.
The remaining $p$ and $H$ vertices form a full bipartite graph: the $m$-th $p$-vertex is connected with each $H$-vertex with a pair of edges of color $m$ for all colors except for the one initially chosen, see \cref{fig:optimal_contraction_B}, right.

All the mergers described above are as well contained in $D^{lin}_{2\nu} \left(\star D^{lin}_{2\nu}\right)^s$.
Indeed, let $m$ be the chosen color.
Then the above process is simply merging $D^{m,m}_{2\nu}$ together.

\section{Non-stability of the NTK in the SYM matrix model}
\label{app:NTK_trace_evolution_nu=2}

Consider $f$ defined in \cref{eq:f_F_def}.
Recall the definition of its NTK:
\begin{equation}
    \Theta_{i_1,\ldots,i_\nu; i'_1,\ldots,i'_\nu}(\mathbf u)
    = \sum_u \frac{\partial f_{i_1,\ldots,i_\nu}(\mathbf u)}{\partial u} \frac{\partial f_{i'_1,\ldots,i'_\nu}(\mathbf u)}{\partial u}.
\end{equation}
For the SYM model, the NTK for $\nu=2$ is given by
\begin{equation}
    \begin{aligned}
        \Theta_{i,j; i',j'}(\mathbf u)
        &= \sum_{k=1}^H \left(\delta_{i=i'} u_{k,j} u_{k,j'} + \delta_{i=j'} u_{k,j} u_{k,i'} + \delta_{j=i'} u_{k,i} u_{k,j'} + \delta_{j=j'} u_{k,i} u_{k,i'}\right)
        \\&= \delta_{i=i'} f_{j,j'}(\mathbf u) + \delta_{i=j'} f_{j,i'}(\mathbf u) + \delta_{j=i'} f_{i,j'}(\mathbf u) + \delta_{j=j'} f_{i,i'}(\mathbf u).
    \end{aligned}
\end{equation}
That is, for the SYM matrix model, the NTK could be defined in terms of the model itself.

\begin{proposition}
    For the SYM model with $\nu=2$, $f_{i,j}(0) = \lim_{t\to\infty} f_{i,j}(t)$ under gradient flow $\forall i,j \in [p]$ is equivalent to $\Theta_{i,j; i',j'}(0) = \lim_{t\to\infty} \Theta_{i,j; i',j'}(t)$ under gradient flow $\forall i,j,i',j' \in [p]$.
\end{proposition}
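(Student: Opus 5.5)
The plan rests on the identity displayed just before the statement: for the SYM model with $\nu=2$,
\[
\Theta_{i,j;i',j'}(\mathbf u)=\delta_{i=i'}f_{j,j'}(\mathbf u)+\delta_{i=j'}f_{j,i'}(\mathbf u)+\delta_{j=i'}f_{i,j'}(\mathbf u)+\delta_{j=j'}f_{i,i'}(\mathbf u).
\]
This says that $\Theta$ is the image of the symmetric matrix $f$ under a fixed linear map $\mathcal A$, independent of $\mathbf u$. The map sends the symmetric matrix $f$ to a $p^2\times p^2$ kernel tensor. Both directions of the equivalence will follow from properties of $\mathcal A$, applied at $\mathbf u(0)$ and along the trajectory.

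\textbf{Direction ``$f$ stable $\Rightarrow$ $\Theta$ stable.''} Assume $\lim_{t\to\infty}f_{i,j}(t)$ exists and equals $f_{i,j}(0)$ for all $i,j$. Since $\Theta(t)=\mathcal A(f(t))$ entrywise, each $\Theta_{i,j;i',j'}(t)$ is a finite sum of such $f$ entries with fixed Kronecker-delta coefficients. Hence the limit exists and equals $\mathcal A(f(0))=\Theta(0)$.

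\textbf{Direction ``$\Theta$ stable $\Rightarrow$ $f$ stable.''} Here I need $\mathcal A$ to be injective on symmetric matrices, and I would show this by an explicit left inverse. Take $\Theta_{i,j;i',j}$, i.e.\ set $j'=j$, and sum over $j\in[p]$. The four terms give
\[
\sum_j\Theta_{i,j;i',j}=\delta_{i=i'}\operatorname{tr}f+f_{i',i}+f_{i,i'}+p\,f_{i,i'}=\delta_{i=i'}\operatorname{tr}f+(p+2)f_{i,i'},
\]
using the symmetry of $f$ (here $f=U^\top U$ in the SYM case). Setting $i=i'$ and summing over $i$ gives $\sum_{i,j}\Theta_{i,j;i,j}=(2p+2)\operatorname{tr}f$, so $\operatorname{tr}f$ is recovered. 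Then
\[
f_{i,i'}=\frac{1}{p+2}\Big(\sum_j\Theta_{i,j;i',j}-\delta_{i=i'}\frac{1}{2p+2}\sum_{a,b}\Theta_{a,b;a,b}\Big).
\]
This is a fixed linear combination of kernel entries. Applying it at time $t$ and passing to the limit, convergence $\Theta(t)\to\Theta(0)$ yields $f(t)\to f(0)$ entrywise.

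\textbf{Main obstacle.} The step needing care is the injectivity/inversion computation: correctly tracking the four delta terms under the contraction, and confirming that the coefficients $p+2$ and $2p+2$ are nonzero, which holds for all $p\ge1$. One point about limits should also be stated: in each direction, existence of the limit on the hypothesis side transfers to the conclusion side through the fixed linear map or its inverse. Gradient-flow structure is not used beyond $f(t)$ being symmetric.

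Finally, I would note the interpretation. Since the GF fixed-point limit with unchanged $f$ means no learning, kernel stability is equivalent to the model learning nothing. This is the claimed absence of an NTK regime for SYM $\nu=2$.
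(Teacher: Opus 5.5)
Your proof is correct and follows the same idea as the paper: $\Theta$ is a fixed linear image of $f$, so the forward direction is immediate, and the reverse direction follows by inverting that linear map. The only difference is the inversion step. The paper reads off $\Theta_{i,j;j,j'}=f_{i,j'}$ for pairwise distinct $i,j,j'$, which covers only off-diagonal entries and needs $p\ge 3$; the diagonal would still need $\Theta_{i,i;i,i}=4f_{i,i}$, which the paper does not mention. Your trace-contraction left inverse recovers every entry of $f$ for all $p\ge 1$, so it closes that small gap.
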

\begin{proof}
    The "$\Rightarrow$" part is trivial.
    As for the "$\Leftarrow$" part, consider $i,j,j'$ with no two of them being equal.
    Then $\Theta_{i,j; j,j'}(\mathbf u) = f_{i,j'}(\mathbf u)$.
    Comparing $t = 0$ and $t \to \infty$ concludes the proof.
\end{proof}
Therefore, whenever the model eventually learns the target, its NTK cannot be stable.

\section{SYM $\nu=2$: explicit solution}\label{sec:sym_nu2}

\begin{figure}
    \centering
    \includegraphics[clip, trim=20mm 14mm 18mm 14mm, scale=0.25]{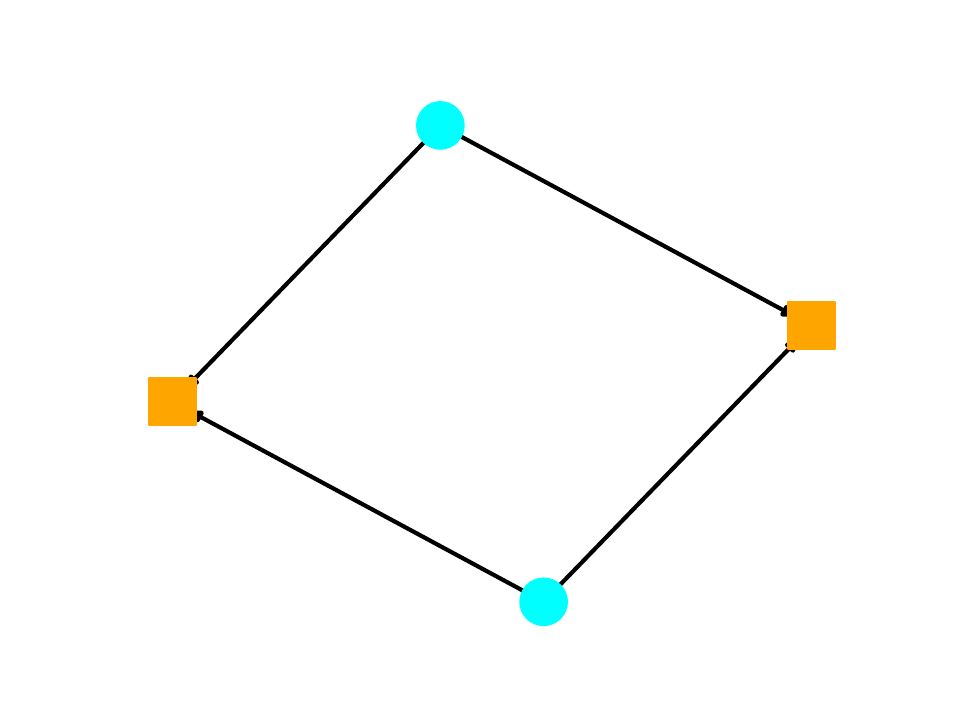}
    \includegraphics[clip, trim=20mm 14mm 18mm 14mm, scale=0.25]{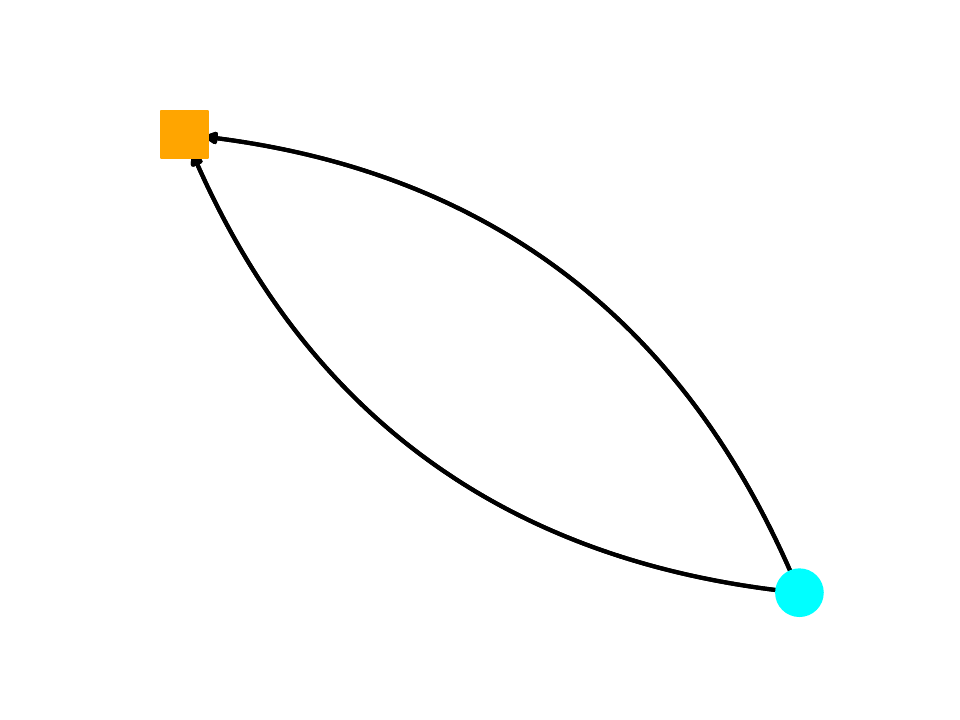}
    \includegraphics[clip, trim=20mm 13mm 18mm 14mm, scale=0.25]{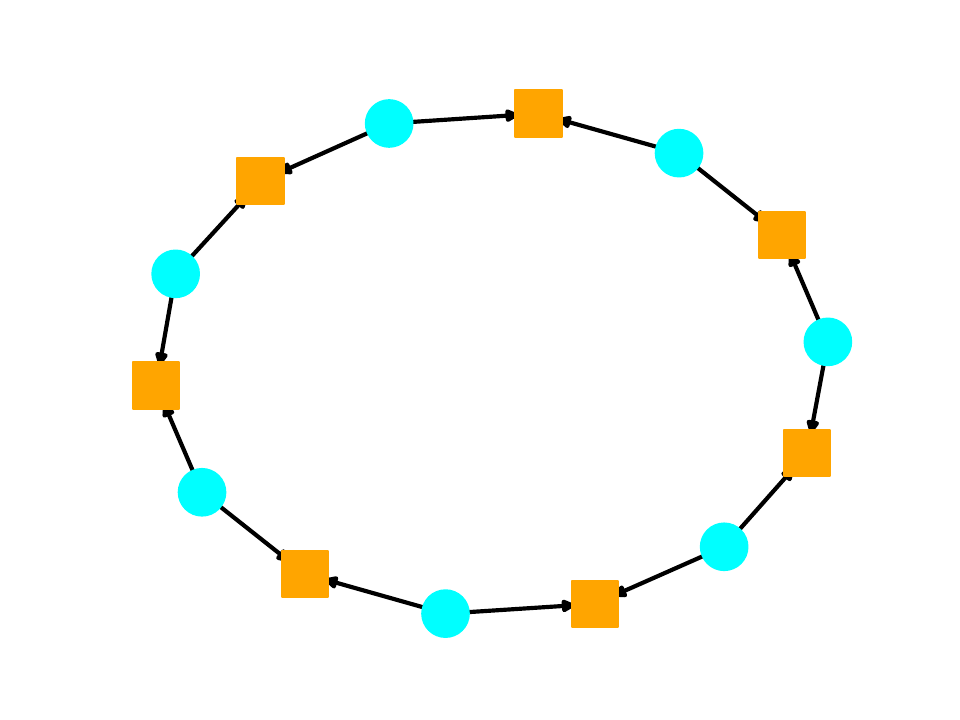}
    \includegraphics[clip, trim=20mm 13mm 18mm 14mm, scale=0.25]{D4_star_5_sym_tree.pdf}
    \caption{Diagrams in the SYM $\nu=2$ scenario. \textbf{Left to right:} $D_4$; $R_2$; a circular diagram from $(\tfrac{1}{2}D_4-R_2)^{\star(s+1)}$; a minimal contraction of a circular diagram to a tree.}
    \label{fig:diagrams_sym}
\end{figure}

\paragraph{Diagram expansion.} In contrast to the asymmetric scenario, in the symmetric case the edges in the diagrams have only one color. Similarly to the symmetric scenario, all uncontracted diagrams occurring in $(\tfrac{1}{2}D_4-R_2)^{\star(s+1)}$ are circular, with alternating $p$- and $H$-nodes and $2(s_D+1)$ edges (see Fig. \ref{fig:diagrams_sym}). The absence of different colors simplifies combinatorics. However, in contrast to the free evolution scenarios considered in Section \ref{sec:free-evo} (Appendix \ref{sec:free-evo-app}), we have to account for all possible merges $(\tfrac{1}{2}D_4-R_2)^{\star(s+1)}$, not just $(\tfrac{1}{2}D_4)^{\star(s+1)}$. 

By Theorem \ref{th:pareto_front}, a minimal contraction of a circular diagram of length $2(s_D+1)$ has $n$ $H$-nodes and $s_D+2-n$ $p$-nodes, with $n\in\{1,\ldots,s_D+1\}$ (the flower contractions correspond to the extreme cases $n=1$ and $n=s_D+1$). It is easy to see inductively that minimal contractions contract the circle to trees as in Figure \ref{fig:diagrams_sym} (right). (Indeed, since the total number of contracted nodes is $s_D+2$, at least one node of the circle must be uncontracted. Then its neighboring nodes must be contracted to enable edge pairing. By removing this node and its two edges we then reduce the question to a smaller $s_D$.) Alternatively, one can identify a minimal contraction with a non-crossing partition of the set of $H$-nodes (into contracted groups). The number of such contractions is known to be given by \emph{Narayana numbers} $N_{s_D+1,n}$. 

\paragraph{The general generating function.} This discussion shows that within the approximation by minimal diagrams, we can write
\begin{equation}
    \mathbb E[(\tfrac{1}{2}D_4-R_2)^{\star(s+1)}]\sim\sum_{s_D=0}^{s+1}\sum_{n=1}^{s_D+1}M_{s,s_D}N_{s_D+1,n} p^{s_D+2-n}H^n\sigma^{2(s_D+1)},
\end{equation}
where $M_{s,s_D}$ is the coefficient in the expansion $(\tfrac{1}{2}D_4-R_2)^{\star(s+1)}=\sum_{s_D=0}^{s+1}M_{s,s_D}G_{s_D}$ over circular diagrams $G_{s_D}$ of length $2(s_D+1)$. Given our standard loss expansion (\ref{eq:lossstarexp}), we can then write

\begin{equation}
\mathbb E[L(t)]\sim \frac{p}{2}+p^2\sigma^2\Psi(-t/T, H/p, p\sigma^2)
\end{equation}
with the generating function
\begin{equation}
\Psi(x,y,z)=\sum_{s_D=0}^\infty z^{s_D}\sum_{s=\max(0,s_D-1)}^\infty\frac{M_{s,s_D}}{s!}x^s\sum_{n=1}^{s_D+1}N_{s_D+1,n}y^n.
\end{equation}

\paragraph{Reduced generating functions.} Note that the coefficient $\frac{M_{s,s_D} N_{s_D+1,n}}{s!}$ appearing in the general generating function is partially factorized, in the sense of being a product of the coefficient $\frac{M_{s,s_D}}{s!}$ depending on $s,s_D$, and the Narayana numbers $N_{s_D+1,n}$ depending on $s_D,n$. It is convenient to introduce two respective two-variable generating functions: 
\begin{align}
f(x,z)\sim{}&\sum_{s=0}^\infty\sum_{s_D=0}^{s+1}\frac{M_{s,s_D}}{s!}x^sz^{s_D},\\
h(z,y)\sim{}&\sum_{s_D=0}^\infty \sum_{n=1}^{s_D+1}N_{s_D+1,n}z^{s_D}y^{n}.
\end{align}
The g.f. $h$ for the Narayana numbers is known:
\begin{equation}
h(z,y)=\frac{1-z(y+1) - \sqrt{1-2z(y+1)+z^2(y-1)^2}}{2z^2}.
\end{equation}
We will find $f$ using a reduction to first-order PDE.

\paragraph{The generating function $f$.} Reducing the case of $s$ to $s-1$ by writing $(\tfrac{1}{2}D_4-R_2)^{\star(s+1)}=(\tfrac{1}{2}D_4-R_2)^{\star s}\star (\tfrac{1}{2}D_4-R_2)$ and considering different possibilities gives the recurrence
\begin{equation}
M_{s,s_D}=-4(s_D+1) M_{s-1,s_D}+4s_D M_{s-1,s_D-1}.
\end{equation}
Taking into account the additional factor $\tfrac{1}{s!}$ and using Theorem \ref{th:recurrence}, we find the PDE 
\begin{equation}
x\partial_x f=4x(-(z\partial_z+1)+z(z\partial_z+1))f,
\end{equation} 
i.e.
\begin{equation}
(\partial_x+4z(1-z)\partial_z)f=4(z-1)f.
\end{equation}
We write this as 
\begin{equation}
\mathbf \Phi^T\nabla f=\phi f,
\end{equation}
where
\begin{align}
\Phi ={}& \begin{pmatrix}
1\\ 4z(1-z)
\end{pmatrix},\\
\phi ={}& 4(z-1).
\end{align}
The solution along an integral curve $\mathbf x(\tau)$ is
\begin{equation}
f(\mathbf x_0)=f(\mathbf x_1)e^{-\int_{\tau_0}^{\tau_1}\phi(\mathbf x(\tau))d\tau}.
\end{equation}
The integral curves are given by
\begin{equation}
\frac{dz}{dx} = 4z(1-z),
\end{equation}
i.e.
\begin{equation}
4dx = d\ln\frac{z}{1-z}.
\end{equation}
We get a first integral
\begin{equation}
I = \frac{1-z}{z}e^{4x}.
\end{equation}
We also note that
\begin{equation}
d\ln z=4(1-z)=-\phi(z),
\end{equation}
so that
\begin{equation}
e^{-\int_{\tau_0}^{\tau_1}\phi(\mathbf x(\tau))d\tau}=\frac{z_1}{z_0}.
\end{equation}
Accordingly, along an integral curve
\begin{equation}
f(x_0,z_0)=f(x_1, z_1)\frac{z_1}{z_0}.
\end{equation}
In particular, we can set $x_1=0$, then
\begin{equation}
\frac{1-z_0}{z_0}e^{4x_0}=\frac{1-z_1}{z_1},
\end{equation}
implying 
\begin{equation}
z_1=\frac{1}{\frac{1-z_0}{z_0}e^{4x_0}+1}
\end{equation}
Then, the general solution can be written as
\begin{equation}
f(x_0,z_0)=f(0,z_1)\frac{z_1}{z_0}=f\Big(0, \frac{1}{\frac{1-z_0}{z_0}e^{4x_0}+1}\Big)\frac{1}{(1-z_0)e^{4x_0}+z_0}.
\end{equation}
In our setting 
\begin{equation}
f(0,z)=M_{0,0}+M_{0,1}z=\frac{z}{2}-1.
\end{equation}
It follows that
\begin{align}
f(x,z)={}&f(0,z_1)\frac{z_1}{z_0}\\
={}& \frac{2(z-1)e^{4x}-z}{2(z-(z-1)e^{4x})^2}.
\end{align}

\paragraph{Merging the two g.f.'s.} In general, given two sequences  $a_n, b_n$ with g.f.'s $A(z), B(z):$
\begin{align}
A(z)\sim{}&\sum_{n=0}^\infty a_n z^n,\\
B(z)\sim{}&\sum_{n=0}^\infty b_n z^n,
\end{align}
the g.f. $C(z)$ for the product sequence $a_nb_n,$ 
\begin{equation}
C(z)=\sum_{n=0}^\infty a_nb_nz^n,
\end{equation}
can be found by
\begin{equation}
C(z) = \frac{1}{2\pi i}\oint_{\gamma}A(\zeta)B\Big(\frac{z}{\zeta}\Big)\frac{d\zeta}{\zeta}
\end{equation}
with a suitable contour so that the arguments of $A,B$ lie in the convergence discs.

It follows that we can find the full 3-variate g.f. $\Psi$
by
\begin{equation}
\Psi(x,y,z)=\frac{1}{2\pi i}\oint_{\gamma}h(\zeta,y)f\Big(x,\frac{z}{\zeta}\Big)\frac{d\zeta}{\zeta}.
\end{equation}
We have
\begin{equation}
f\Big(x,\frac{z}{\zeta}\Big) = \frac{\zeta(z(1-e^{-4x}/2)-\zeta)e^{-4x}}{(\zeta-z(1-e^{-4x}))^2},
\end{equation}
so
\begin{equation}
\Psi(x,y,z)=\frac{1}{2\pi i}\oint_{\gamma}h(\zeta,y)\frac{(z(1-e^{-4x}/2)-\zeta)e^{-4x}}{(\zeta-z(1-e^{-4x}))^2}d\zeta.
\end{equation}
As the contour $\gamma$, we can take a circle $|\zeta|=\epsilon$ with some small $\epsilon$; then the above integral representation for $\Psi$ is valid for all sufficiently small $z$.

The integral is computed by calculating the residue at the pole $\zeta=z(1-e^{-4x}):$
\begin{align}
\Psi(x,y,z)={}&\partial_\zeta\Big(h(\zeta,y)(z(1-e^{-4x}/2)-\zeta)e^{-4x}\Big)\Big|_{\zeta=z(1-e^{-4x})}\label{eq:PhiABEfinal1}\\
={}&\Big(\frac{ze^{-4x}}{2}\partial_1 h(z(1-e^{-4x}),y)-h(z(1-e^{-4x}),y)\Big)e^{-4x}.
\end{align} 

\section{SYM even-$\nu\ge 4$}\label{sec:nu4-app}
\paragraph{General observations.} Let the values $\mathbf r=(r_2,r_4,\ldots)$ denote the multiplicities of the edges in Pareto-optimal contractions of diagrams in $(\tfrac{1}{2}D_{2\nu}-R_{\nu})^{\star(s+1)}$. The number of Pareto-optimal pairings consistent with a particular contracted diagram is defined as
\begin{equation}
N_{\mathbf r}=\begin{cases} 0,& \exists\text{ odd }m: r_m>0,\\\prod_{m=1}^\infty ((2m-1)!!)^{r_{2m}},&\text{ otherwise.}
\end{cases}
\end{equation}
The key observation is that for even $\nu\ge 4$ all Pareto-optimal contractions of the merged diagrams $(\tfrac{1}{2}D_{2\nu}-R_{\nu})^{\star(s+1)}$ can be obtained by merging respective Pareto-optimal contractions of $D_{2\nu}, R_{\nu}$ ({``commutativity of mergers and contractions''}). In the underparameterized limit, the diagrams $D_{2\nu}$ are only contracted to ``flowers with $\nu$ petals'', while in the general case they can also be contracted to  pairs of $H$-nodes connected through $\nu/2$ $p$-nodes by double edges. There are $(\nu-1)!!$ such diagrams. Define the values $M_{s,n,\mathbf r}$ as the total coefficient of contracted diagrams in $(\tfrac{1}{2}D_{2\nu}-R_{\nu})^{\star(s+1)}$ with edge multiplicities $\mathbf r$ and $n$ $H$-nodes.

\paragraph{The extended generating function.}
The powers $n$ and $q$ in Pareto-optimal terms $p^q H^n\sigma^{2l}$ in $(\tfrac{1}{2}D_{2\nu}-R_{\nu})^{\star(s+1)}$ are connected by the identity
\begin{equation}
\frac{\nu}{2}(n-1)+q=\sum_m r_m.
\end{equation}
Using this identity, we get
\begin{align}
L(t)\sim{}& \frac{p}{2}+\sum_{s=0}^{\infty}\frac{\mathbb E[(\tfrac{1}{2}D_{2\nu}-R_{\nu})^{\star(s+1)}]}{s!}(-t/T)^s\\
={}&\frac{p}{2}+\sum_{s=0}^{\infty}\sum_n\sum_{\mathbf r}\frac{M_{s,n,\mathbf r}N_{\mathbf r}}{s!}(-t/T)^s p^{q}H^n\sigma^{\sum_{m=2}^{\infty} mr_m}\\
={}&\frac{p}{2}+\sum_{s=0}^{\infty}\sum_{n=1}^\infty\sum_{\mathbf r}\frac{M_{s,n,\mathbf r}N_{\mathbf r}}{s!}(-t/T)^s p^{\sum_{m=2}^\infty r_m-\frac{\nu}{2}(n-1)}H^n\sigma^{\sum_{m=2}^{\infty} mr_m}\\
={}&\frac{p}{2}+p^{\frac{\nu}{2}}\sum_{s=0}^{\infty}\sum_{n=1}^\infty\sum_{\mathbf r}\frac{M_{s,n,\mathbf r}N_{\mathbf r}}{s!}(-t/T)^s (H/p^{\frac{\nu}{2}})^n \prod_{m=2}^\infty (p\sigma^m)^{r_m}\\
={}&\frac{p}{2}+p^{\nu/2}f(-t/T,H/p^{\nu/2}, p\sigma^2, p\sigma^3,\ldots),
\end{align}
where
\begin{equation}
f(x,y,\mathbf z)=\sum_{s=0}^\infty\sum_{n=1}^\infty\sum_{\mathbf r}\frac{M_{s,n,\mathbf r}N_{\mathbf r}}{s!}x^sy^n\mathbf z^{\mathbf r},
\end{equation}
where $\mathbf z=(z_2,z_3,\ldots)$.
Instead of $f$, we can consider the  simpler generating function
\begin{equation}
g(x,y,\mathbf z)=\sum_{s=0}^\infty\sum_{n=1}^\infty\sum_{\mathbf r}\frac{M_{s,n,\mathbf r}}{s!}x^sy^n\mathbf z^{\mathbf r}.
\end{equation}
We have 
\begin{equation}
f(x,y,z_2,z_3,\ldots,z_{2n-1},z_{2n},\ldots)=g(x,y,z_2,0,\ldots,0,(2n-1)!!z_{2n},\ldots)
\end{equation}
and
\begin{align}
L(t)\sim{}& \frac{p}{2}+p^{\nu/2} g(-t/T,H/p^{\nu/2}, \mathbf z_{p,\sigma}),\\
\mathbf z_{p,\sigma}={}&(p\sigma^2, 0,3p\sigma^4,0,\ldots,0,(2n-1)!!p\sigma^{2n},\ldots).
\end{align}

\paragraph{The recurrence and PDE.}
Denoting $C_{s,n,\mathbf r}=M_{s,n,\mathbf r}/s!$ and $\mathbf e_m=(0,\ldots,0,\underset{r_m}{1},0,\ldots)$, we get the recurrence
\begin{align}
sC_{s,n,\mathbf r}={}&\nu\Big(2(r_2-(\nu-1))+\sum_{m=3}^\infty m r_m\Big)\Big(C_{s-1,n,\mathbf{r}-(\nu-1)\mathbf{e}_2}+(\nu-1)!!C_{s-1,n-1,\mathbf{r}-(\nu-1)\mathbf{e}_2}\Big)\\
&-\nu\sum_{m=\nu}^\infty (m-\nu+2)(r_{m-(\nu-2)}+1) C_{s-1,n,\mathbf r-\mathbf e_m+\mathbf e_{m-(\nu-2)}}.
\end{align}
This gives the equation
\begin{equation}
\Big(x\partial_x -\nu x(1+(\nu-1)!!y)z_2^{\nu-1}\sum_{m=2}^\infty m z_m\partial_{z_m}+\nu\sum_{m=\nu}^\infty (m-\nu+2)x\frac{z_m}{z_{m-\nu+2}}z_{m-\nu+2}\partial_{z_{m-\nu+2}}\Big)g=0,
\end{equation}
i.e.
\begin{equation}
\Big(\partial_x +\nu \sum_{m=2}^\infty m(z_{m+\nu-2}-\theta z_2^{\nu-1}z_m) \partial_{z_m}\Big)g=0
\end{equation}
with
\begin{equation}
\theta = 1+(\nu-1)!!y.
\end{equation}
We write it as
\begin{equation}
\nabla_F g=0 
\end{equation}
with the vector field
\begin{equation}
F(x,y,\mathbf z)=\begin{pmatrix}
F_x\\
F_y\\
F_{z_2}\\
F_{z_3}\\
\ldots\\
F_{z_m}\\
\ldots
\end{pmatrix}=\begin{pmatrix}
1\\
0\\
2\nu(z_{\nu}-\theta z_2^{\nu})\\
3\nu (z_{\nu+1}-\theta z_2^{\nu-1}z_3)\\
\ldots\\
m\nu(z_{m+\nu-2}- \theta z_2^{\nu-1}z_{m})\\
\ldots
\end{pmatrix}
\end{equation}
Thus, $y$ appears in the dynamics only as a constant parameter. 

The initial condition is
\begin{equation}
g(0,y,\mathbf z)=\frac{1}{2}(y+(\nu-1)!!y^2)z_2^\nu-yz_\nu=y\Big(\frac{1}{2}\theta z_2^2-z_\nu\Big).
\end{equation}
Consider the integral curves 
\begin{equation}
\frac{d}{d\tau}(x,y,\mathbf z)^T=F(x,y,\mathbf z).
\end{equation}
Given a point $(x_0,\mathbf z_0),$ we find the PDE solution at this point as
\begin{equation}
g(x_0,y,\mathbf z_0)=g(0,y,\mathbf z(\tau_*))=y\Big(\frac{\theta z_2^\nu(\tau_*)}{2}-z_\nu(\tau_*)\Big),
\end{equation}
where
\begin{equation}
x(0)=x_0,\quad \mathbf z(0)=\mathbf z_0,\quad x(\tau_*)=0.
\end{equation}
Thanks to the equation $\dot x=1,$ we have $\tau_*=-x_0$. Also, since $g(0,y,\mathbf z)=y(\tfrac{\theta z_2^\nu}{2}-z_\nu),$ we only need to find the values $z_2(\tau_*)$ and $z_\nu(\tau_*)$ on the integral curve. 

\paragraph{Solution.} The equation for $z_m$ involves $z_k$ with $k=m+\nu-2>m$, so we need to simultaneously solve all the equations for $\dot z_{2+i(\nu-2)}, i=0,1,2,\ldots$. The equation $\dot z_m=m\nu(z_{m+\nu-2}-\theta z_2^{\nu-1}z_m)$ can be written as
\begin{equation}
\frac{d}{d\tau}\Big( e^{m\nu\theta\int z_2^{\nu-1}}z_m\Big)=m\nu e^{m\nu\theta\int z_2^{\nu-1}}z_{m+\nu-2}
\end{equation}
so that
\begin{align}
z_m={}&e^{-m\nu\theta\int z_2^{\nu-1}}\Big(c_m+m\nu\int e^{m\nu\theta\int z_2^{\nu-1}}z_{m+\nu-2}\Big)\\
={}&e^{-m\nu\theta\int z_2^{\nu-1}}\Big(c_m+m\nu\int e^{(2-\nu)\nu\theta\int z_2^{\nu-1}}\Big(c_{m+\nu-2}+(m+\nu-2)\nu\int e^{(m+\nu-2)\nu\theta\int z_2^{\nu-1}}z_{m+2(\nu-2)}\Big)\Big)\\
={}&\ldots\\
\sim{}& \phi^{\frac{m}{\nu-2}}\sum_{k=0}^\infty c_{m+k(\nu-2)}\prod_{l=0}^{k-1}(m+l(\nu-2))\nu^k\underbrace{\int\phi\int\phi\ldots \int}_{k} \phi,
\end{align}
where
\begin{equation}\label{eq:phiz2}
\phi =  e^{(2-\nu)\nu\theta\int z_2^{\nu-1}}.
\end{equation}
Let us agree from now on that in all the indefinite integrals the lower integration limit is $\tau=0$:
\begin{equation}
\int\equiv \int_0^\tau.
\end{equation}
Then $\phi(\tau=0)=1$, and also the constants $c_n$ are equal to our particular initial conditions: 
\begin{equation}
c_m=z_m(\tau=0)=\begin{cases}(m-1)!!p\sigma^n,&\text{ even }m\\
0,&\text{ odd }m.\end{cases}
\end{equation} 

\subsection{The case $\nu=4$} 
\paragraph{Integral representation of $z_2$.} In this case we conveniently have
\begin{equation}
c_{2+k(\nu-2)}\prod_{l=0}^{k-1}(2+l(\nu-2)) = (2k+1)!!(2k)!!p\sigma^{2+2k} =(2k+1)!p\sigma^{2+2k}. 
\end{equation}
As a result,
\begin{align}
z_2\sim p\sigma^2\phi\sum_{k=0}^\infty (2k+1)! (2\sigma)^{2k}\underbrace{\int\phi\int\phi\ldots \int}_{k} \phi.
\end{align}
Using the Euler-Borel substitution $(2k+1)!=\int_{0}^\infty u^{2k+1}e^{-u}du,$ we get
\begin{align}
z_2\sim{}& p\sigma^2\phi\int_0^\infty \Big(\sum_{k=0}^\infty  (2\sigma u)^{2k}\underbrace{\int\phi\int\phi\ldots \int}_{k} \phi\Big)e^{-u}udu\\
={}&p\sigma^2\phi\int_0^\infty e^{(2\sigma u)^2(\int \phi)-u}udu.
\end{align}
The last integral requires that
\begin{equation}
\psi=\int_0^{\tau}\phi<0.
\end{equation}
Since $\phi>0$ always, this means $\tau<0$.

\paragraph{Integral representation of $z_m$.} We can write a similar representation for general $z_m=4,6,\ldots$ (for odd $m$ the value $z_m\equiv 0$). We write 
\begin{equation}
c_{m+k(\nu-2)}\prod_{l=0}^{k-1}(m+l(\nu-2)) = p\sigma^{m+2k}(m+2k-1)!!\prod_{l=0}^{k-1}(m+2l) =\frac{(m+2k-1)!}{(m-2)!!}p\sigma^{m+2k}. 
\end{equation}
Then
\begin{align}
z_m\sim{}& p\sigma^m\phi^{\frac{m}{2}}\sum_{k=0}^\infty \frac{(m+2k-1)!}{(m-2)!!} (2\sigma)^{2k}\underbrace{\int\phi\int\phi\ldots \int}_{k} \phi\\
={}&\frac{p\sigma^m\phi^{\frac{m}{2}}}{(m-2)!!} \int_0^\infty \Big(\sum_{k=0}^\infty  (2\sigma u)^{2k}\underbrace{\int\phi\int\phi\ldots \int}_{k} \phi\Big)e^{-u}u^{m-1}du\\
={}&\frac{p\sigma^m\phi^{\frac{m}{2}}}{(m-2)!!}\int_0^\infty e^{(2\sigma u)^2(\int \phi)-u}u^{m-1}du.
\end{align}
Recall that we introduced $\phi$ by the condition
\begin{equation}
\phi=e^{-8\theta\int z_2^3}.
\end{equation}
We can check that under this condition the derived formulas indeed provide a solution of characteristic ODEs at $x\ge 0$:
\begin{align}
\dot z_m={}&\frac{m}{2\phi}(-8\theta z_2^3)\phi z_m+\frac{p\sigma^m\phi^{\frac{m}{2}}}{(m-2)!!}\int_0^\infty (2\sigma u)^2\phi e^{(2\sigma u)^2(\int \phi)-u}u^{m-1}du\\
={}&4m(z_{m+2}-\theta z_{2}^{3}z_m).
\end{align}
The initial condition for $z_m$ is also straightforward.

\paragraph{Integral representation of the generating function.}
The solution of the PDE at $x\ge 0$ and $\mathbf z=\mathbf z_{p,\sigma}$ is then given by
\begin{align}
g(x, y,\mathbf z_{p,\sigma})={}&y\Big(\frac{\theta z_2^4(-x)}{2}-z_4(-x)\Big)\\
={}& \frac{\theta y}{2}\Big(p\sigma^2\phi(-x)\int_0^\infty e^{(2\sigma u)^2(\int_0^{-x} \phi)-u}udu\Big)^4-\frac{yp\sigma^4\phi^2(-x)}{2}\int_0^\infty e^{(2\sigma u)^2(\int_0^{-x} \phi)-u}u^{3}du.\label{eq:gxzps1}
\end{align}  

\paragraph{Finding $\phi$.} Let us denote
\begin{equation}
F(a) = \int_{0}^\infty e^{4au^2-u}udu,
\end{equation}
so that, in terms of $\psi=\int\phi,$
\begin{equation}
z_2 = p\sigma^2\dot\psi F(\sigma^2 \psi).
\end{equation}
The function $F$ can be expressed in terms of the complementary error function $\operatorname{erfc}(z)=\tfrac{2}{\sqrt{\pi}}\int_z^\infty e^{-t^2}dt$:
\begin{align}
F(x)=-\frac{1}{8x}+\frac{1}{32x}\sqrt{\frac{\pi}{-x}}e^{-\frac{1}{16x}}\operatorname{erfc}\Big(\frac{1}{4\sqrt{-x}}\Big).
\end{align}
Recall that we introduced $\phi$ as:
\begin{equation}
\phi = e^{-8\theta\int z_2^3}.
\end{equation}
We can then write a second-order ODE on $\psi$:
\begin{align}
\ddot\psi={}&-8\theta z_2^3\dot\psi,\\
z_2={}&p\sigma^2\dot\psi F(\sigma^2 \psi),\\
\psi(x_0)={}&0,\quad \dot\psi(x_0)=1.
\end{align} 
We write this ODE as
\begin{equation}
\frac{d\dot\psi}{d \psi}=-8\theta p^3\sigma^6\dot\psi^3F^3(\sigma^2\psi)
\end{equation}
and separate the variables:
\begin{equation}
\frac{d\dot\psi}{\dot\psi^3}=-8\theta p^3\sigma^6F^3(\sigma^2\psi)d\psi.
\end{equation}
This gives the first integral
\begin{equation}\label{eq:nu4firstint}
I=\frac{1}{\dot\psi^2}-16\theta p^3\sigma^6\int F^3(\sigma^2\psi)d\psi=\frac{1}{\dot\psi^2}-16\theta p^3\sigma^4\Big(\int F^3\Big)(\sigma^2\psi).
\end{equation}
Using the initial condition, we find that $I=1$. Then,
\begin{equation}
\frac{d\psi}{d\tau}=\pm\Big(1+16\theta p^3\sigma^4\Big(\int F^3\Big)(\sigma^2\psi)\Big)^{-1/2}
\end{equation}
and hence $\psi$ is implicitly given by
\begin{equation}
\pm\int\sqrt{1+16\theta p^3\sigma^4\Big(\int F^3\Big)(\sigma^2\psi)}d\psi = \tau. 
\end{equation}

\paragraph{Two convergence regimes.} Note that $F>0$ while $\tau\le 0$ and $\psi\le 0$ in our setting, so also $\int F^3\le 0$. Therefore, the expression under the root may become negative at sufficiently large negative $\psi$. We have $F(a)\asymp a^{-1}$ as $a\to -\infty$, so $\int_0^{-\infty}F^3$ is finite. Then, the condition for the expression under the root to stay positive at all $\psi$ is
\begin{equation}
p^3\sigma^4 <\frac{1}{-16\theta \int_0^{-\infty}F^3}.
\end{equation}  
Thus, we have two regimes:
\begin{enumerate}
\item \textbf{[Low-noise]} If this condition holds, then $\psi(\tau)$ is defined for all $\tau\le 0$. As $\tau\to-\infty$, $\dot\psi(\tau)$ converges to a positive value  while $\psi(\tau)$ becomes approximately linear. In the solution \eqref{eq:gxzps1} for the generating function $g$, the second term dominates as $\tau\to-\infty$:
\begin{equation}
g(-\tau,\mathbf z_{p,y,\sigma})\asymp -\frac{yp\sigma^4\dot\psi^2}{\sigma^4\psi^2}\asymp-\frac{yp}{\tau^2}.
\end{equation}  
This is consistent with $g(0,y,\mathbf z_{p,\sigma})=y\theta\tfrac{p^4\sigma^8}{2}-6yp\sigma^4$ being negative at sufficiently small $p^3\sigma^4$. 
\item \textbf{[High-noise]} If the condition is violated, then $\dot\psi$ blows up at a finite $\tau=\tau_{crit}<0$. This corresponds to a finite $\psi=\psi_{crit}.$ We have
\begin{align}
p^{3/2}\sigma^3|\psi-\psi_{crit}|^{3/2}\asymp{}&|\tau-\tau_{crit}|,\\
\dot \psi\asymp{}&p^{-3/2}\sigma^{-3}|\psi-\psi_{crit}|^{-1/2}=p^{-1}\sigma^{-2}|\tau-\tau_{crit}|^{-1/3}.
\end{align}
It follows that the first term dominates in $g(-\tau, y,\mathbf z_{p,\sigma})$ and
\begin{align}
g(-\tau, y,\mathbf z_{p,\sigma})\asymp (p\sigma^2\dot\psi)^4=|\tau-\tau_{crit}|^{-4/3}.
\end{align}
This agrees with the divergence of the free model at negative times, and has the right power $-\tfrac{4}{3}$ as in the free model:
\begin{equation}
\EE [L(t)]=\EE [L(0)](1+ct)^{-\frac{\nu}{\nu-1}}.
\end{equation}
\end{enumerate}

\section{Experiments}\label{sec:experiments}

\paragraph{Reproducibility.} Our jupyter notebooks used to reproduce the experiments are available at \url{https://github.com/Yarikyaroslav/GFthroughDiagrams_ICML2026}.

\paragraph{Finite difference integration of gradient flow.} In order to compare the theoretical continuous gradient flow~\eqref{eq:gf} with numerical experiments, we discretize the dynamics by means of an explicit Euler scheme. Concretely, we fix the maximal integration time $t_{\max}$ and the number of steps $N_{\text{steps}}$, which determines the integration step 
\[
\tau = \frac{t_{\max}}{N_{\text{steps}}}.
\]
The continuous flow
\[
\frac{d u}{d t} = -\frac{1}{T}\,\partial_u L(\mathbf u)
\]
is then approximated by the finite-difference update rule
\[
u^{(k+1)} = u^{(k)} - \frac{\tau}{T}\,\partial_u L(u^{(k)}), 
\qquad k=0,1,\dots,N_{\text{steps}}-1.
\]
Thus the effective learning rate of the numerical scheme is $\eta = \tau/T$, depending jointly on the physical scale $T$ and the discretization step~$\tau$.

We emphasize that the explicit Euler discretization above is precisely the standard gradient descent iteration with step size~$\eta = \tau/T$. Classical results in numerical analysis show that, under mild smoothness assumptions, the Euler scheme converges to the solution of the underlying gradient--flow ODE with a global error of order~$O(\eta)$. Consequently, for sufficiently small~$\eta$, the discrete dynamics remain uniformly close to the continuous gradient flow; conversely, gradient flow provides an accurate infinitesimal description of the behavior of discrete gradient descent.
\paragraph{Memory efficient loss computation.} The evaluation of the quadratic loss~\eqref{eq:loss} requires summation over all $p^\nu$ index tuples $(i_1,\ldots,i_\nu)$, which becomes prohibitive to store in memory when both $p$ and $\nu$ are large. To overcome this difficulty, we employ a batching procedure at the level of loss computation. Specifically, we partition the full index set 
\[
\{1,\ldots,p\}^\nu = \bigcup_{b=1}^B \mathcal{B}_b,
\]
where each $\mathcal{B}_b$ is a batch of multi-indices. For a given batch $\mathcal{B}_b$ we compute the partial loss
\begin{equation}\label{eq:batchloss}
   L_{\mathcal{B}_b}(\mathbf u) = \frac{1}{2}\sum_{(i_1,\ldots,i_\nu)\in \mathcal{B}_b} \bigl(f_{i_1,\ldots,i_\nu}-F_{i_1,\ldots,i_\nu}\bigr)^2,
\end{equation}
and accumulate its gradient contribution. Iterating over all batches and summing their contributions yields exactly the full loss~\eqref{eq:loss} and its gradient:
\[
L(\mathbf u) = \sum_{b=1}^B L_{\mathcal{B}_b}(\mathbf u).
\]
In this way the computation can be performed in a memory-efficient manner, since only one batch is loaded and processed at a time. Importantly, unlike stochastic gradient descent, this batching procedure does not approximate the loss but reproduces it exactly after all batches have been processed, so the optimization dynamics still correspond to the full gradient descent with loss~\eqref{eq:loss}.

\subsection{Free SYM $\nu=4$}
\begin{figure}
    \centering
    \includegraphics[width=0.99\linewidth]{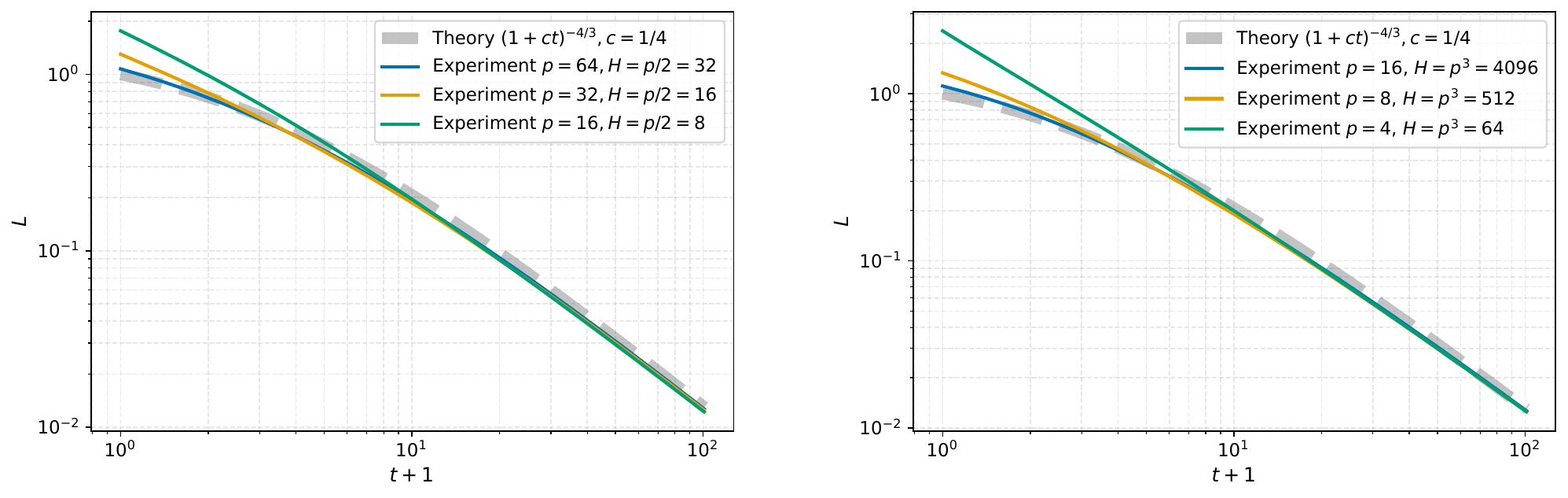}
    \caption{Free SYM $\nu=4$. {\bf Left:} underparameterized scenario. {\bf Right:} overparameterized scenario.}
    \label{fig:app-free-nu4}
\end{figure}
Here we discuss the experimental part of our work that we performed for the free evolution in SYM $\nu=4$ problem. For both under- and overparameterized scenarios the expected loss has the following limiting shape (see Appendix \ref{sec:free-evo-app}):
\begin{equation}
    \EE[L(t)] = (1+ct)^{-4/3},
\end{equation}
where
\begin{equation}
    c = \begin{cases}
    \frac{24 p^3 \sigma^6}{T}, & \text{underparameterized},\\
        \frac{72 p H \sigma^6}{T},& \text{overparameterized},
    \end{cases}
\end{equation}
and the initial loss expectation $\EE[L(0)]$ was normalized to $1$ by the appropriate scaling of $\sigma$. The parameter $c$ was set to a constant (in particular, we set $c=1/4$) by the appropriate scaling of $T$. This way, starting from some sufficiently large $p$ we expected all experimental loss curves (regardless of the \emph{actual} values of parameters $p,H,\sigma,T$) to converge to the theoretical formula and have a similar shape.  

\paragraph{Results.} The experimental results confirmed these theoretical predictions: for sufficiently large $p$ experimental curves are very close to the theoretically predicted curve (see Figure \ref{fig:app-free-nu4}). We also emphasize an interesting and counterintuitive observation. For a relatively small $p$ we can see a difference between experiment and theory at small $t$, but for large $t$ we see a convergence of experimental curves to the theoretical curve, although our theory uses an expansion of the loss at $t=0$ (see \eqref{eq:lossexp}).

\subsection{SYM $\nu=2$}\label{sec:nu2_exp}
\begin{figure}
    \centering
    \includegraphics[width=0.95\linewidth]{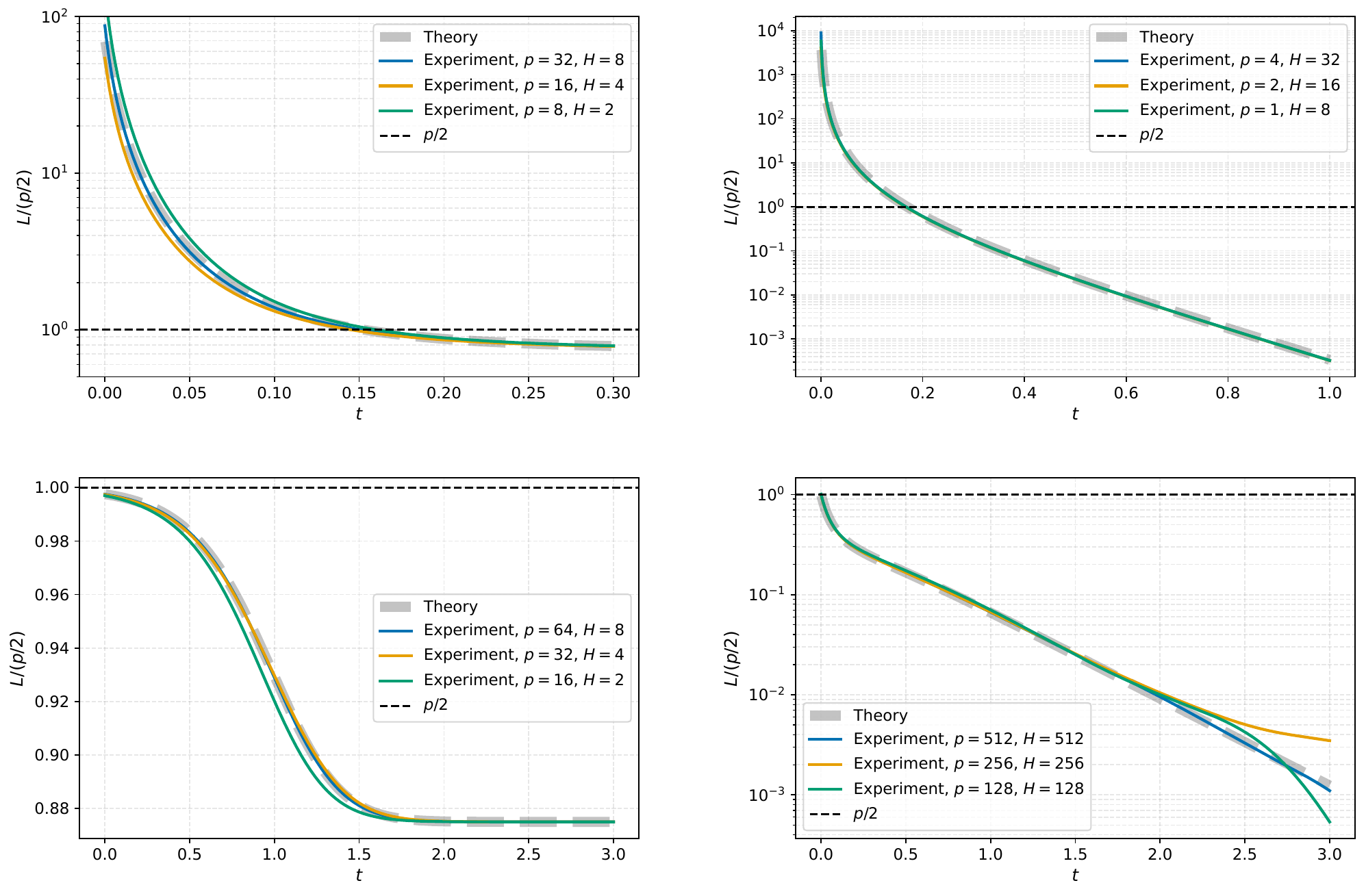}
    \caption{Experimental confirmation of our theory in SYM $\nu=2$. {\bf Upper left:} noisy underparameterized limit. {\bf Upper right:} noisy overparameterized limit. {\bf Lower left:} Low-noise underparameterized limit. {\bf Lower right:} General scenario. See Appendix \ref{sec:nu2_exp} for details about the experiments.}
    \label{fig:app-nu2}
\end{figure}
Let us recall the expression of the loss expectation for this case (see Appendix \ref{sec:sym_nu2}) and divide it by $p/2$:
\begin{equation}
\frac{\mathbb E[L(t)]}{p/2} \sim 1+2p\sigma^2\Psi(-t/T, H/p, p\sigma^2).
\end{equation}
If we then set our parameters, in such a way that $T,p\sigma^2,H/p$ are some constants, then the RHS is independent of the \emph{actual} values of $p,H,\sigma$, so we can similarly study how close the experimental lines become to the theory as we increase $p,H$. We considered four scenarios, each of them with $T=1$, but different $p \sigma^2$ and $H/p$.
\begin{enumerate}
    \item {\bf Noisy underparameterized.} We set $H/p=1/4$ and $p\sigma^2=16$. In this case the initial loss is significantly larger than target loss $p/2$ and the model has relatively small number of parameters.
    \item {\bf Noisy overparameterized.} We set  $H/p=8$ and $p\sigma^2=10$. Then the initial loss is significantly larger than target loss $p/2$ and the model has large number of parameters.
    \item {\bf Low-noise underparameterized.} We set $H/p=1/8$ and $p\sigma^2=10^{-2}$. Then the initial loss equals the target loss $p/2$ and the model has small number of parameters, so that the final loss remains close to the initial without approaching zero. 
    \item {\bf General scenario.} We set  $H/p=1$ and $p\sigma^2=1$. The model has just enough parameters to fit the target, the loss approaches zero.
\end{enumerate}
\paragraph{Results.} As shown in Figure \ref{fig:app-nu2}, experimental results confirmed the theoretical predictions in all considered scenarios. Surprisingly, in the noisy overparameterized case our explicit theoretical solution coincides with the experiment even for the smallest possible $p=1$, although the theory assumes large $p$.



\end{document}